\newcommand{\ba}{\bm{a}}
\newcommand{\bb}{\bm{b}}
\newcommand{\bd}{\bm{d}}
\newcommand{\bbf}{\bm{f}}
\newcommand{\bg}{\bm{g}}
\newcommand{\bh}{\bm{h}}
\newcommand{\bs}{\bm{s}}
\newcommand{\bt}{\bm{t}}
\newcommand{\bw}{{\bm{w}}}
\newcommand{\bx}{{\bm{x}}}
\newcommand{\by}{\bm{y}}
\newcommand{\bz}{\bm{z}}
\newcommand{\bA}{\bm{A}}
\newcommand{\bI}{\bm{I}}
\newcommand{\bP}{\bm{P}}
\newcommand{\bV}{\bm{V}}
\newcommand{\cF}{\mathcal{F}}
\newcommand{\cG}{\mathcal{G}}
\newcommand{\cT}{{\mathcal{T}}}
\newcommand{\cZ}{\mathcal{Z}}
\newcommand{\EE}{\mathbb{E}}
\newcommand{\GG}{\mathbb{G}}
\newcommand{\NN}{\mathbb{N}}
\newcommand{\PP}{\mathbb{P}}
\newcommand{\RR}{\mathbb{R}}
\newcommand{\bbeta}{\bm{\beta}}
\newcommand{\btheta}{{\bm{\theta}}}
\newcommand{\bmu}{\bm{\mu}}
\newcommand{\btau}{\bm{\tau}}
\newcommand{\bSigma}{\bm{\Sigma}}
\newcommand{\bPhi}{\bm{\Phi}}
\newcommand{\argmin}{\mathop{\mathrm{argmin}}}
\newcommand{\argmax}{\mathop{\mathrm{argmax}}}
\DeclareMathOperator{\var}{{\textup{var}}}
\DeclareMathOperator{\ind}{\mathds{1}}  
\newcommand{\rbr}[1]{\left(#1\right)}
\newcommand{\sbr}[1]{\left[#1\right]}
\newcommand{\cbr}[1]{\left\{#1\right\}}
\newcommand{\nbr}[1]{\left\|#1\right\|}
\newcommand{\abr}[1]{\left|#1\right|}
\newcommand{\saa}{{\textup{SAA}}}
\newcommand{\ieo}{{\textup{IEO}}}
\newcommand{\eto}{{\textup{ETO}}}
\newcommand{\kl}{{\textup{KL}}}
\newcommand{\dw}{{d_w}}
\newcommand{\dz}{{d_z}}
\newcommand{\dt}{{d_\theta}}
\newcommand{\omege}{{\bw}}
\newcommand{\bzero}{{\bm{0}}}
\newcommand{\thete}{{\btheta}}
\newcommand{\IF}{\textup{IF}}
\newcommand{\st}{\textup{st}}
\newcommand{\inv}{^{-1}}
\newcommand{\dbz}{{d\bz}}
\newcommand{\sq}{^2}
\theoremstyle{plain} 
\newtheorem{lemma}{\textbf{Lemma}} 
\newtheorem{prop}{\textbf{Proposition}}
\newtheorem{theorem}{\textbf{Theorem}}\setcounter{theorem}{0}
\newtheorem{assumption}{\textbf{Assumption}}
 \theoremstyle{definition}
\newtheorem{definition}{\textbf{Definition}}
\newtheorem{example}{\textbf{Example}}
\crefname{table}{Table}{Tables}
\crefname{figure}{Figure}{Figures}
\crefname{prop}{Proposition}{Propositions}
\crefname{equation}{Eq}{Eqs}
\crefname{section}{Section}{Sections}
\crefname{definition}{Definition}{Definitions}
\title{The Bias-Variance Tradeoff in Data-Driven Optimization:
A Local Misspecification Perspective}
\author{%
  Haixiang Lan\textsuperscript{1}\footnotemark[1] \ , Luofeng Liao\textsuperscript{1}\footnotemark[1] \ , Adam Elmachtoub\textsuperscript{1},\\
  \textbf{Christian Kroer\textsuperscript{1}, Henry Lam\textsuperscript{1}, Haofeng Zhang\textsuperscript{1,2}}
  \\
  \textsuperscript{1}Department of Industrial Engineering and Operations Research, Columbia University \\
  \textsuperscript{2}Morgan Stanley \\
  \texttt{\{hl3725,ll3530,ae2516,ck2945,khl2114,hz2553\}@columbia.edu} \\
}
\begin{document}

\maketitle

\begin{abstract}
Data-driven stochastic optimization is ubiquitous in machine learning and operational decision-making problems.
Sample average approximation (SAA) and model-based approaches such as estimate-then-optimize (ETO) or integrated estimation-optimization (IEO) are all popular, with model-based approaches being able to circumvent some of the issues with SAA in complex context-dependent problems.
Yet the relative performance of these methods is poorly understood, with most results confined to the dichotomous cases of the model-based approach being either well-specified or misspecified.
We develop the first results that allow for a more granular analysis of the relative performance of these methods under a local misspecification setting, which models the scenario where the model-based approach is nearly well-specified.
By leveraging tools from contiguity theory in statistics, we show that there is a bias-variance tradeoff between SAA, IEO, and ETO under local misspecification, and that the relative importance of the bias and the variance depends on the degree of local misspecification. 
Moreover, we derive explicit expressions for the decision bias, which allows us to characterize (un)impactful misspecification directions, and provide further geometric understanding of the variance. 
\end{abstract}

\section{Introduction}\label{sec: introduction}

\footnotetext[1]{Equal Contribution and Corresponding Authors.}

Data-driven stochastic optimization arises ubiquitously in machine learning and operational decision-making problems. Generally, this problem takes the form
$
\argmin_{\omege}\EE_{Q}\sbr{c(\omege,\bz)}
$
where $\omege$ represents the decision that we aim to optimize, and $\bz$ is a random variable (or vector) drawn from an unknown distribution $Q$. The non-linear cost function $c$ is given and can represent a loss function in machine learning, financial portfolio losses, or resource allocation costs. In this paper, we focus on the setting where the expectation $\EE_{Q}$ is unknown, but we observe data from $Q$.

A natural approach is to use empirical optimization, known as sample average approximation (SAA) \citep{shapiro2021lectures}, which approximates the unknown expectation with the empirical counterpart from the data.
This approach is straightforward, but may not be suitable for complex scenarios in constrained and contextual optimization, when one needs to obtain a  feature-dependent decision (i.e., a decision as a ``function" of the features) and maintain feasibility~\citep{hu2022fast}. In such cases, model-based approaches provide a workable alternative. A model-based approach fits a parametric distribution class to the data, say $\cbr{P_\thete:\thete\in\Theta}$, and this fitted distribution is then injected into the downstream optimization to obtain a decision. Just as in standard machine-learning problems, this parametrization helps maintain generalizability from supervised data.

Our focus in this work is the statistical performance of model-based approaches for data-driven optimization with nonlinear cost objectives, as compared to SAA. More specifically, we study the question of \emph{how to fit} the data into the parametric distribution models. There have been two major methods proposed in the literature: \emph{Estimation-then-Optimize (ETO)} and \emph{Integrated Estimation-Optimization (IEO)}. ETO separates the fitting step from the downstream optimization, by simply fitting $P_\thete$ via maximum likelihood estimation (MLE). IEO, on the other hand, integrates the downstream objective with the estimation process, by selecting the distribution parameter $\thete$ that minimizes the empirical expected cost. Conceptually, ETO   can readily leverage existing machine learning tools and fits the model disjointly from the downstream decision task, while IEO attempts to take into account the downstream decision task (in many cases with a nontrivial additional computational expense). Intuitively, then, IEO should have better statistical performance than ETO in terms of the ultimate objective value of the chosen decision.

Our main goal is to dissect the bias-variance tradeoff between ETO, IEO, and also SAA, especially under the setting of \emph{model misspecification}. In particular, our study reveals not only the variance arising from data noise, but also the bias of the resulting decision elicited under model misspecification. This allows us to gain insight on how the direction and amount of misspecification impacts decision quality. More concretely, a  well-specified model means that in the estimation-optimization pipeline, the chosen parametric class $\cbr{P_\thete:\thete\in\Theta}$ contains the ground-truth distribution $Q$ -- a case that is rarely seen in reality. In other words, model misspecification arises generically, the question only being by how much. Unfortunately, the theoretical understanding of the statistical performance among the various estimation-optimization approaches, especially in relation to this misspecification, has been rather limited. \citet{elmachtoub2023estimate} compare these approaches in large-sample regimes via stochastic dominance, but their analysis divides into the cases of well-specification and misspecification, each with different asymptotic scaling. Unfortunately, there is no smooth transition in between that captures the impact of varying the misspecification amount. \citet{elmachtoub2025dissecting} attempt to address this issue by deriving finite-sample bounds that depend on the sample size and misspecification amount.

In this paper, we remedy the shortcomings in the above literature by leveraging the notion of \emph{local misspecification}, originated from contiguity theory in statistics~\citep{le2000asymptotics,copas2001local,andrews2020informativeness}, to derive large-sample results in relation to both the amount and \emph{direction} of misspecification. Our results explicitly show the decision bias and variance, and its resulting regret, that arises from misspecification. This allows us to smoothly compare ETO, IEO and SAA. We show the following results. 
When model misspecification is severe relative to the data noise level, SAA performs better than IEO, and IEO performs better than ETO, in terms of both bias and regret. This matches the intuition described previously that IEO should outperform ETO by integrating the estimation-optimization pipeline. On the other hand, when the misspecification amount is mild, the performance ordering is reversed, which generalizes previous similar findings in \cite{elmachtoub2023estimate} that focused on zero misspecification. 
Most importantly, in the most relevant case where the misspecification is roughly similar to the data noise, which we call the \emph{balanced} case, the ordering of the methods
exhibits a bias-variance tradeoff: SAA performs the best on bias, whereas ETO performs the best on variance, and IEO is in the middle for both metrics.
This defies a universal performance ordering, but also points to the need for a deep understanding of the characteristics of the bias term in relation to the misspecification direction. Table \ref{table: summary of results} summarizes our performance ordering findings.

\begin{table}[!ht]
\centering
\begin{tabular}{ c|| ccc|ccc|ccc }
 
\multirow{2}{*}{} & \multicolumn{3}{c|}{mild ($\alpha>1/2$)} & \multicolumn{3}{c|}{balanced ($\alpha=1/2$)} & \multicolumn{3}{c}{severe ($0<\alpha<1/2$)} \\ \cline{2-10} 
 & \multicolumn{1}{c|}{bias} & \multicolumn{1}{c|}{variance} & regret & \multicolumn{1}{c|}{bias} & \multicolumn{1}{c|}{variance} & regret & \multicolumn{1}{c|}{bias} & \multicolumn{1}{c|}{variance} & regret \\ \hline \hline
ETO & \multicolumn{1}{c|}{$\approx0$} & \multicolumn{1}{c|}{best} & best & \multicolumn{1}{c|}{worst} & \multicolumn{1}{c|}{best} & depends & \multicolumn{1}{c|}{worst} & \multicolumn{1}{c|}{$\approx0$} & worst \\ \hline
IEO & \multicolumn{1}{c|}{$\approx0$} & \multicolumn{1}{c|}{middle} & middle & \multicolumn{1}{c|}{middle} & \multicolumn{1}{c|}{middle} & depends & \multicolumn{1}{c|}{middle} & \multicolumn{1}{c|}{$\approx0$} & middle \\ \hline
SAA & \multicolumn{1}{c|}{$\approx0$} & \multicolumn{1}{c|}{worst} & worst & \multicolumn{1}{c|}{best ($\approx0$)} & \multicolumn{1}{c|}{worst} & depends & \multicolumn{1}{c|}{best ($\approx0$)} & \multicolumn{1}{c|}{$\approx0$} & best \\  
\end{tabular}
\caption{Summary of our results on performance orderings.  ``$\approx0$" means asymptotically negligible. $\alpha$ is a parameter that signals the misspecification amount relative to the data noise level and will be detailed later. 
}
\label{table: summary of results}
\end{table} 

Our next contribution is to provide an explicit formula for the bias attributed to model misspecification, which allows us to gain insights into how the bias is impacted by the misspecification direction. We went beyond the classical local minimax theory by showing the \textit{non-regularity} of the ETO and IEO estimators, which is rarely seen in standard statistical literature \citep{van2000asymptotic}. In the severely misspecified regime, where there is no available contiguity theory tools, we develop a novel technique to characterize and compare the asymptotics of the three estimators.  We further identify sufficient conditions on \emph{approximately impactless} misspecification directions -- model misspecification directions that result in bias that is first-order negligible compared to the data noise variance. In general, this direction is orthogonal to the difference between the influence functions of solutions obtained from the considered estimation-optimization pipeline and SAA. Here, influence functions are interpreted as the gradients with respect to the underlying data distribution, and they appear not only in the bias but also variance comparisons. This characterization in particular suggests how SAA is always (and naturally) the best in terms of bias (see Table \ref{table: summary of results}), but also how the biases for IEO and ETO magnify when the obtained solution has a different influence function from that of SAA. Moreover, to enhance the transparency of our characterization, we show that a sufficient condition for being approximately impactless is to be in the linear span of the score function of the parametric model. While this latter condition is imposed purely on the parametric model (i.e., not the downstream optimization), it already shows the intriguing phenomenon where model misspecification could be insignificant in impacting the performance of the ultimate decision.
\subsection{Related Works}
\textbf{Data-Driven Stochastic Optimization.} 
Data-driven optimization, a cornerstone in machine learning and operations research, addresses problems where decision are informed from optimization problems with parameters or distributions learned from data. Existing popular methods include SAA \citep{shapiro2021lectures} and distributionally robust optimization (DRO) \citep{delage2010distributionally}. Recently, there has been a growing interest in an integrated framework that combines predictive modeling of unknown parameters with downstream optimization tasks \citep{kao2009directed, donti2017task}. When the cost function is linear, \cite{elmachtoub2022smart} propose a ``Smart-Predict-then-Optimize'' (SPO) approach that integrates prediction with optimization to improve decision-making. Recent literature explore further properties of the SPO approach \citep{mandi2020smart,blondel2020learning,ho2022risk,liu2022online,liu2023active,el2023generalization}. \cite{hu2022fast} further compare the performances of different data-driven approaches in the linear cost function setting. In the context of non-linear cost functions,  \cite{grigas2021integrated} propose an integrated approach tailored to discrete distributions, and \cite{lam2021impossibility} compares SAA with DRO and Bayesian extensions. 

\textbf{Local Misspecification. } Model misspecification is extensively studied in the statistics and econometrics and machine learning literature \citep{marsden2021misspecification}. In this paper we focus on local misspecification, where the magnitude of misspecification vanishes as the sample size grows. \cite{newey1985generalized} analyzes the asymptotic power properties of the generalized method of moments tests under a sequence of local misspecified alternatives. \cite{kang2007demystifying} design a doubly robust procedure to estimate the population mean under the local misspecified models with incomplete data. \cite{copas2001local,copas2005local} discuss the impact of local misspecification on the sensitivity of likelihood-based statistical inference under the asymptotic framework. Local misspecification is also discussed in robust estimation \citep{kitamura2013robustness,armstrong2023adapting}, causal inference \citep{conley2012plausibly, fan2022optimal}, econometrics \citep{bugni2012distortions,andrews2017measuring, andrews2020informativeness, bugni2019inference,armstrong2021sensitivity,bonhomme2022minimizing,candelaria2024robust} and reinforcement learning \citep{dong2023model}. To the best of our knowledge, we are the first to study the impact of local misspecification in data-driven optimization.

\section{Settings and Methodologies}
\label{sec: preliminaries}
\subsection{Data-Driven Stochastic Optimization}
Consider a data-driven optimization problem in the following form:
\begin{align}
\label{eqn: data-driven sto opt}
\omege^*=\argmin_{\omege\in\Omega}\cbr{v_0(\omege):=\EE_{Q}\sbr{c(\omege,\bz)}}
\end{align}
where $\Omega$ is an open subset in $\RR^\dw$, $\bz\in\cZ\subset\RR^{\dz}$ is the uncertain parameter with unknown data-generating distribution $Q$, $c(\cdot,\cdot)$ is a known \textit{non-linear} cost function, and $v_0(\cdot)$ is the expectation of the cost function under ground-truth distribution $Q$ under a decision $\omege$. 
We are given independent and identically distributed (i.i.d.) data $\bz_1,...,\bz_n$ drawn from $Q$, and the goal is to approximate the optimal decision $\omege^*$ using the data.

In model-based approaches, we use a parametric distribution family $\cbr{P_\thete, \thete\in\Theta}$ where $\thete\in\Theta\subset\RR^\dt$ is the model parameter and $\Theta$ is an open subset of $\RR^\dt$. To explain further, we define the \emph{oracle} solution $\omege_\thete$ by
\begin{align}
\omege_\thete\in\argmin_{\omege\in\Omega}\cbr{v(\omege,\thete):=\EE_{P_\thete}[c(\omege,\bz)]},
\label{eq:sto opt with theta estimate}
\end{align}
where $v(\omege,\thete)$ is the expected cost function under the distribution $P_\thete$. Depending on the choice of the model, the ground-truth distribution $Q$ may or may not be in the parametric family $\cbr{P_\thete:\thete\in\Theta}$. We say $\cbr{P_\thete:\thete\in\Theta}$ is well-specified if there exists $\thete_0\in\thete$ such that $P_{\thete_0}=Q$. We say  $\cbr{P_\thete:\thete\in\Theta}$ is misspecified if it is not well-specified. 

\textbf{Notations.} We denote~$\EE_{\Tilde{P}}[\cdot]$ and $\var_{\Tilde{P}}$ as the expectation and variance under the distribution $\Tilde{P}$. We denote $\EE_\thete[\cdot]$ as $\EE_{P_\thete}[\cdot]$ and $\var_\thete(\cdot)=\var_{P_\thete}(\cdot)$ in the parametric case. For a symmetric matrix $\bA$, we write $\bA\geq \bzero$ if it is positive semi-definite and $\bA>\bzero$ if it is positive definite. For two symmetric matrices $\bA_1$ and $\bA_2$, we write $\bA_1\geq \bA_2$ if $\bA_1-\bA_2\geq \bzero$ and   $\bA_1> \bA_2$ if $\bA_1-\bA_2>\bzero$. 
For a matrix $\bA\in\RR^{m\times n}$, we define the column span of $\bA$ as $\textup{col}(\bA)=\cbr{\bA\bx:\bx\in\RR^n}$.
For a vector $\bx\in\RR^n$ and a positive semi-definite matrix $\bA\in\RR^{n\times n}$, we define the matrix-induced norm $\nbr{\bx}_{\bA}:=\sqrt{\bx^\top\bA\bx}$. We define~$\overset{P^n}{\to}$ as convergence in distribution under the measure $P^n$. 
Precisely, 
$    X_n\overset{P^n}{\to}X$ if $
    P^n(X_n\leq t)\to \PP(X\leq t)$
for all continuous points of the distribution of $X$ where $\PP$ denotes the distribution of $X$. 
For a sequence of random variables $\cbr{X_n}_{n=1}^{\infty}$, we say $X_n=O_{P^n}(1)$ if it is stochastically bounded under the probability measure $P^n$, i.e., for all $\delta>0$ there exists $M$ and $N\in\NN$ such that for all $n\geq N$, $P^n(|X_n|>M)\leq \delta$. We say $X_n=o_{P^n}(1)$ if it converges to zero in probability under the probability measure $P^n$, i.e., $X_n=o_{P^n}(1)$ if for all $\varepsilon>0$, $\lim_{n\to\infty}P^n(|X_n|>\varepsilon)=0$. More generally, we denote $X_n=O_{P^n}(a_n)$ if $X_n/a_n=O_{P^n}(1)$ and we denote $X_n=o_{P^n}(a_n)$ if $X_n/a_n=o_{P^n}(1)$. For two random variables $X_1$, $X_2$ with distribution $\PP_1$ and $\PP_2$, we say $X_1$ is (first-order) stochastically dominated by $X_2$, denoted as $X_1\preceq_\st X_2$, if for all $x\in\RR$, it satisfies that $\PP_1(X_1>x)\leq \PP_2(X_2>x)$.

\subsection{Three Data-Driven Methods}
We consider three popular approaches for solving \eqref{eqn: data-driven sto opt} in a data-driven fashion. 

\textit{Sample Average Approximation (SAA)}. SAA simply replaces $\EE_Q$ in \eqref{eqn: data-driven sto opt} with its empirical counterpart. More precisely, we solve
$\hat{\omege}^\saa :=\argmin_{\omege\in\Omega}\cbr{\hat{v}_0(\omege):=\frac{1}{n}\sum_{i=1}^nc(\omege,\bz_i)} \;.$

\textit{Estimate-Then-Optimize (ETO)}. ETO uses maximum likelihood estimation (MLE) to infer $\thete$ by solving
$ \hat{\thete}^\eto =    \sup_{\thete\in\Theta}\frac{1}{n}\sum_{i=1}^n\log p_\thete(\bz_i)$
Here $p_\thete$ is the probability density or mass function. Then we plug $\hat{\thete}^\eto$ into \eqref{eq:sto opt with theta estimate}:
$\hat{\omege}^\eto:=\omege_{\hat{\thete}^\eto}=\argmin_{\omege\in\Omega}v(\omege,\thete^\eto).$

\textit{Integrated-Estimation-Optimization (IEO)}. IEO selects the $\thete$ that performs best on the empirical cost function $\hat{v}_0(\cdot)$ evaluated at $\omege_\thete$. More precisely, we solve
$\inf_{\thete\in\Theta}\hat{v}_0(\omege_\thete)$
and get a solution $\hat{\thete}^\ieo$. Then we use the plug-in estimator
$\hat{\omege}^\ieo:=\omege_{\hat{\thete}^\ieo}=\argmin_{\omege\in\Omega}v(\omege,\thete^\ieo).$

Among the three methods, SAA is model-free while ETO and IEO are model-based. ETO separates estimation (via MLE) with downstream optimization, while IEO integrates the latter into the estimation process. Our primary focus is to statistically compare these three data-to-decision pipelines in the so-called locally misspecified regime, which we discuss next.

Throughout the paper we assume certain classical technical assumptions on the cost $c$ and the distribution $P_{\thete}$ to ensure the asymptotic normality of certain $M$-estimators under $P_{\thete_0}$ and the Cramer-Rao lower bound. 
In particular, they require relevant population minimizers of $\min_{\bw} \EE_{\thete_0} [c(\bw,\bz)]$,\\
$\min_{\thete} \EE _{\thete_0} [-\log p_{\thete} (z)]$ and $\min_{\thete} \allowbreak \EE _{\thete_0} [c(\bw_{\thete}, \bz)] $ are uniquely attained in the interior of the parameter spaces.
See Assumptions~\ref{assumption: M-estimation regularity} and \ref{assumption: interchangeability} in the Appendix for precise statements.

\subsection{Local Misspecification}
\label{subsec: local misspecification regime}

We first explain local misspecification intuitively before providing formal definitions. 
Recall the ground-truth data generating distribution $Q$, and our parametric distribution family (i.e., model) $\cbr{P_\thete:\thete\in\Theta}$. 
At a high level, we assume that for a finite data size $n$, $Q$ could deviate from the model in a certain ``direction". However, as $n$ is sufficiently large, we expect to have a more accurate model, and $Q$ will approach a distribution in $\cbr{P_\thete:\thete\in\Theta}$, which we denote as $P_{\thete_0}$ for some $\thete_0\in\Theta$. In other words, $\cbr{P_\thete:\thete\in\Theta}$ is misspecified to $Q$ in a ``local" sense, and such misspecification will ultimately vanish. From now on, we use $P_{\thete_0}$ to represent this distribution. Note that neither $P_{\thete_0}$ nor $\thete_0$ are known in practice.

To introduce the local misspecification regime, we first formally define a notion called local perturbation for describing the deviation between two distributions. We work with a general form of local perturbation \citep{van2000asymptotic,fan2022optimal,duchi2021asymptotic,duchi2021contiguity} where the ground truth distribution $Q$ is related to $P_{\thete_0}$ through a general tilt the distribution (we will provide many classical examples in Appendix~\ref{subsec: further examples}):
\begin{definition}
    [Local Perturbation]\label{def: local perturbation: QMD} Consider a scalar function ${u}(\bz):\cZ\to\RR$ with zero mean $\EE_{\thete_0}[{u}]=0$ and finite second order moment $\EE_{\thete_0}[{u}^2]$. 
    We define a tilted distribution $Q_{t}$ for $t\in\RR$ with probability density (mass) function $q_{t}$ with respect to the dominated measure (note that $Q_{t}$ is not necessarily in the parametric family $\cbr{P_\thete: \thete\in\Theta}$) with $q_{0}=p_{\thete_0}$. We further assume for all $t$, $q_t$ is differentiable for almost every $\bz$, as well as the quadratic mean differentiability condition:
\begin{align*}
        \int \rbr{\sqrt{q_{t}}-\sqrt{p_{\thete_0}}-\frac{1}{2}t{u}\sqrt{p_{\thete_0}}}^2d\bz=o(t^2).
\end{align*}
Note that when $t=0$, $q_{0}=p_{\thete_0}$. 
\end{definition}

\begin{figure}[ht]
    \centering
    \begin{minipage}[b]{0.4\textwidth}
        \centering        \includegraphics[width=\textwidth]{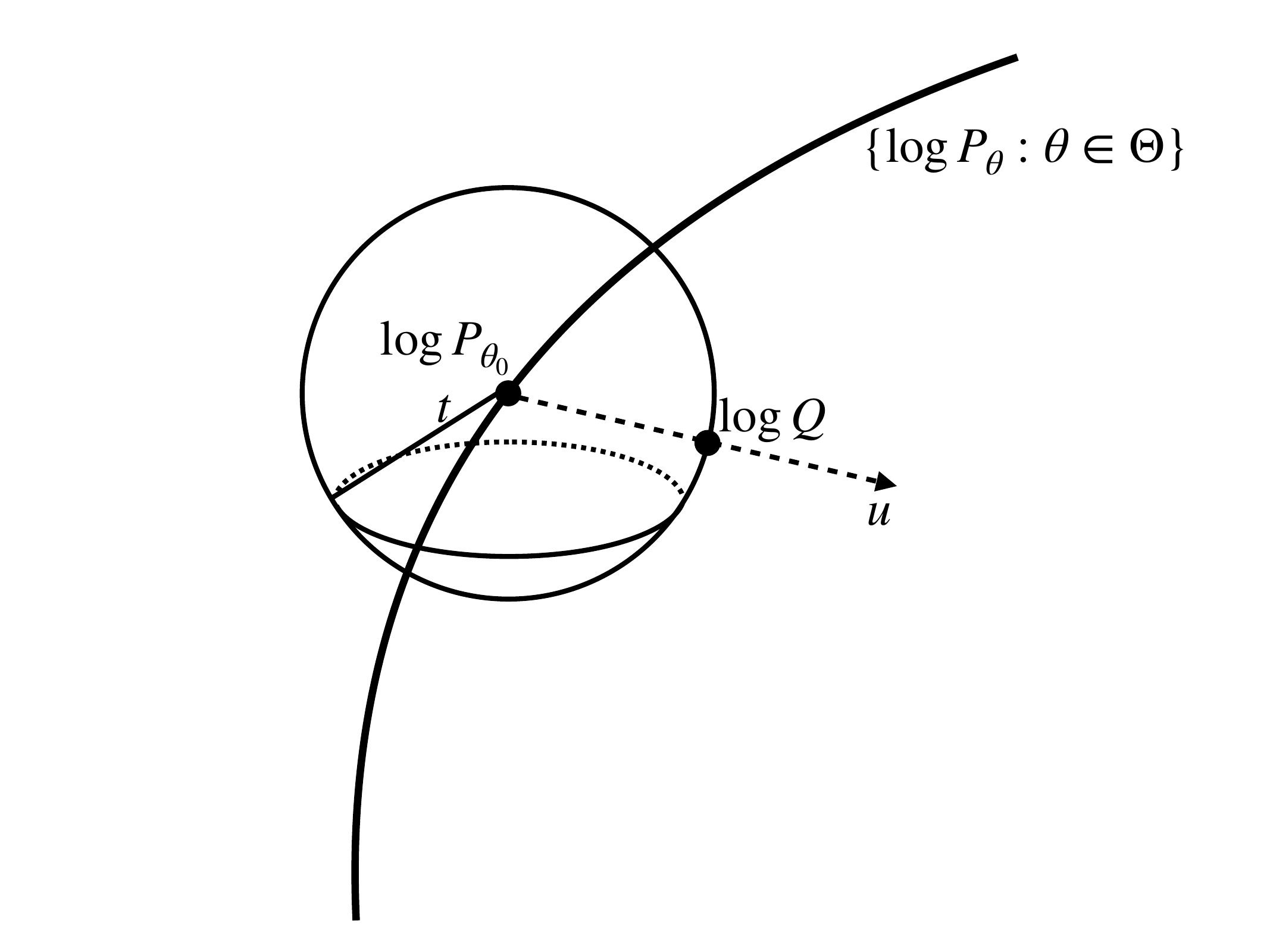}
        \\[1ex]
        \small (a) First direction
    \end{minipage}
    \hfill
    \begin{minipage}[b]{0.4\textwidth}
        \centering
        \includegraphics[width=\textwidth]{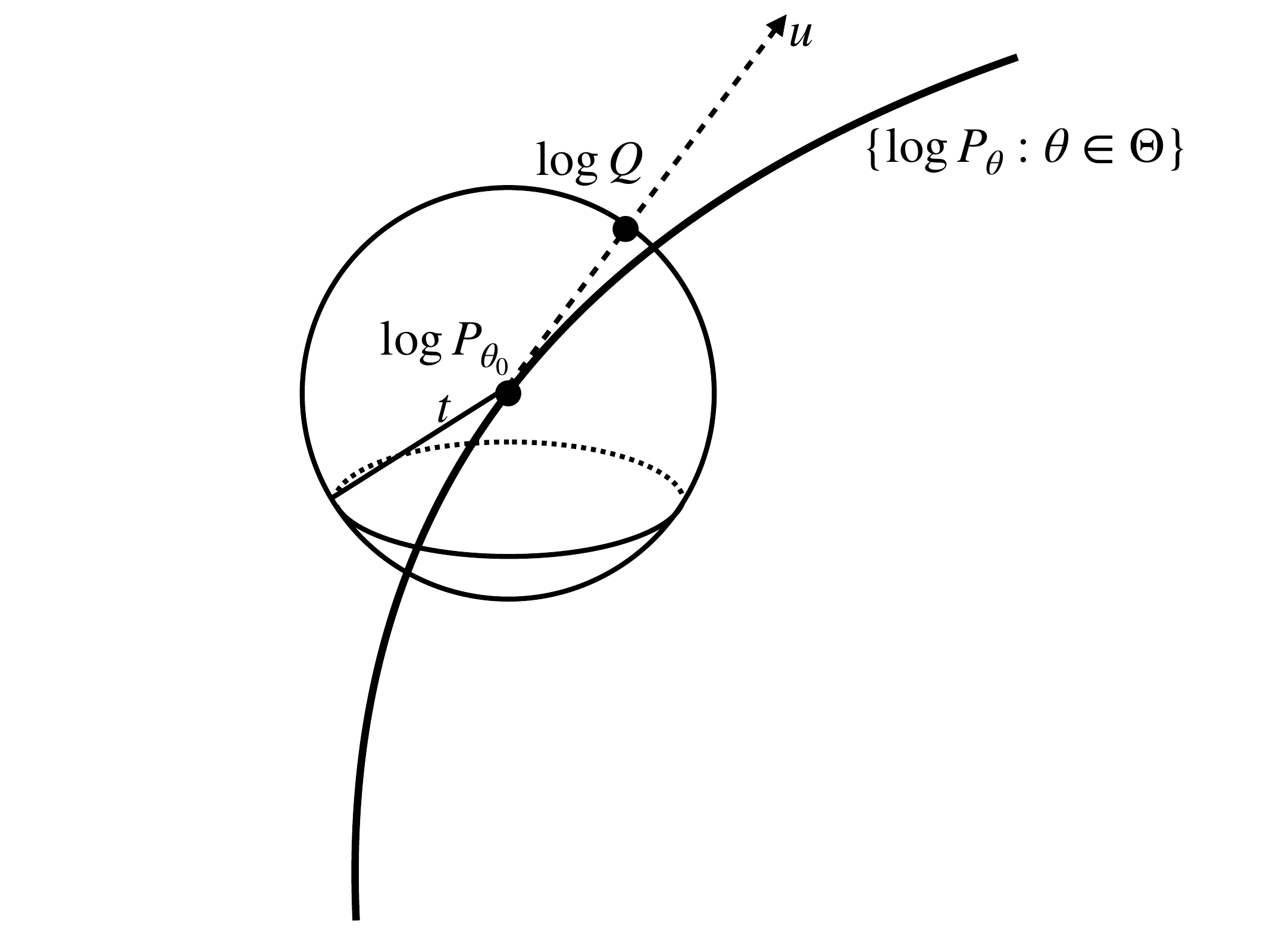}
        \\[1ex]
        \small (b) Second direction
    \end{minipage}
    \caption{Local Misspecification}
    \label{fig:two_directions_ball}
\end{figure}

The local perturbation in Definition \ref{def: local perturbation: QMD} is standard in classical asymptotic statistics \citep{van2000asymptotic} and consists of two crucial elements, the scalar function ${u}(\bz)$ and the real value $t$. Intuitively, we can think of $t$ as the degree of perturbation and the function ${u}(\bz)$ as a certain direction of perturbation. 
\cref{fig:two_directions_ball} presents such a geometric interpretation. The parametric family $\cbr{P_\thete:\thete\in\Theta}$ could be viewed as a ``curve" in the distribution space. Each point on the curve corresponds to a distribution in the parametric family. If we fix the value $t$ and let the direction ${u}(\bz)$ range over all possible directions, then $Q$ will lie within a neighborhood of radius $t$ around this curve. In this sense, the perturbation quantity $t{u}(\bz)$ acts like the ``vector'' pointing from $P_{\thete_0}$ to $Q$ where the ``length'' is $t$ and the ``direction'' is given by the vector ${u}(\bz)$. Below we will discuss the ``local'' case where the radius $t$ vanishes as the sample size goes to infinity. In particular, in \cref{fig:two_directions_ball} (b), the ``direction'' ${u}(\bz)$ is tangent to the curve. We will discuss this special case in the latter part of this paper (Theorem \ref{prop: comparison under severe}). 

Local misspecification refers to the situation where, as the sample size $n$ increases, the sequence $Q_t$, where $t$ depends on $n$, approaches the model.
When the ground-truth $Q_t$ lies outside the parametric family, this misspecification adds errors to the standard inference errors on the order $\Theta(1/\sqrt{n})$, and when these two error levels coincide at the same order, we call this scenario \textit{balanced} misspecification.
More broadly, we consider local misspecification with $t=\Theta(1/n^\alpha)$ where $\alpha\in(0,\infty)$, which leads to the following definition.
For $n$ i.i.d.\ data $\cbr{\bz_i}_{i=1}^n$ sampled from $Q$, let $Q^n:=Q^{\otimes n}$ be the $n$-fold product measure of $Q$ denoting their joint distribution, and analogously for $P_{\thete_0}$.
\begin{definition}
[Three Local Misspecification Regimes]  \label{def: Local Misspecification Regimes}
Let $P^n=P_{\thete_0}^{\otimes n}$. The tilted distribution $Q_{t}$ is defined in Definition \ref{def: local perturbation: QMD}.
    Suppose 
    $\alpha>0$ and the joint distribution of the $n$ i.i.d.\ data is $Q^n:=Q_{1/n^\alpha}^{\otimes n}$.
\begin{enumerate}
        \item (Mild). We call the case when $\alpha>1/2$ the mildly misspecified regime.
        \item (Balanced). We call the case when $\alpha=1/2$ the balanced misspecified regime.
        \item (Severe). We call the case when $0<\alpha<1/2$ the severely misspecified regime.
    \end{enumerate}
\end{definition}

We note that the local misspecification regime described above should not be taken literally to imply that the data-generating process depends on the sample size.  
Instead, it is an information theoretic device to analyze and compare the local behavior of estimators in situations where the influence of misspecification is comparable to the order of the statistical error. In other words, we are interested in the more realistic setting where both misspecification and statistical error are small at a similar level, instead of assuming vanishing statistical error but fixed misspecification as in previous work \citep{elmachtoub2023estimate,elmachtoub2025dissecting}. 

\section{Main Results}
\label{sec: main results}
We derive theoretical results to compare the asymptotic performances of the three methods, SAA, ETO, and IEO, that encompass the three local misspecification regimes in Definition \ref{def: Local Misspecification Regimes}.
We first list out several standard assumptions. We define $\bs_{\thete_0}(\bz)=\nabla_{\thete}\log p_{\thete_0}(\bz)$ as the score function at $\thete_0$ mapping from $\cZ\to\RR^\dt$. Recall that $v(\omege, \thete) = \int c(\omege, \bz) \allowbreak p_\thete (\bz) d \bz$. 

\begin{assumption}[Smoothness]
    \label{assumption: smoothness}
    Assume that
    \begin{enumerate}        
        \item The function $v(\omege,\thete)$ is twice 
        continuously
        differentiable with respect to $(\omege,\thete)$ 
        at 
        $(\bw_{\thete_0}, \thete_0) $ with a Hessian matrix, denoted by $ \begin{bmatrix}
        \bV & \bSigma \\ 
        \bSigma^\top & *
    \end{bmatrix}$,
        where $*$ denotes a matrix that is not of interest. Assume $\bSigma\in\RR^{\dw\times\dt}$ is full-rank and $\bV\in\RR^{\dw\times\dw}$ is invertible.

         \item The function $\thete\mapsto\omege_\thete$ is well-defined on a neighborhood of $\thete_0$,  twice continuously differentiable at $\thete_0$ with a full-rank gradient matrix $\nabla_\thete\omege_\thete|_{\thete=\thete_0}\in\RR^{\dt\times\dw}$.
            
        \item The Fisher information matrix $\bI:=\EE_{\thete_0}[\bs_{\thete_0}(\bz)\bs_{\thete_0}(\bz)^\top] \in\RR^{\dt\times\dt}
    \,$ is well-defined and invertible.

    \end{enumerate}
\end{assumption}

Note that the matrices above are fixed quantities and are not related to whether the model is well-specified or misspecified. These matrices are critical for characterizing the sensitivity of the target stochastic optimization problem. We also define
$\bPhi = \nabla_{\thete_1 \thete_1} ^ 2 \int c(\omege_{\thete_1}, \bz) p_{\thete_0} (\bz) d \bz| _{\thete_1 = \thete_0}$.
The following lemma provides closed-form expressions for the gradient $\nabla_\thete \bw_{\thete}$ and matrices $\bSigma$, $\bPhi$.
\begin{lemma}\label{lemma: wtheta diff} Under Assumption \ref{assumption: smoothness},
it holds that
\begin{align*}
    \nabla_\thete\omege_\thete|_{\thete=\thete_0}=-\bSigma^\top\bV\inv,\quad
\bSigma=\EE_{\thete_0}\sbr{\nabla_\omege c(\omege_{\thete_0},\bz)\bs_{\thete_0}(\bz)^\top},\quad
    \bPhi = \bSigma ^ \top  \bV ^{-1} \bSigma.
\end{align*}
\end{lemma}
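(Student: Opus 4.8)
The plan is to read all three formulas off the first-order optimality conditions of the two nested optimization problems, treating $\thete\mapsto\omege_\thete$ as an implicitly defined function and differentiating under the integral sign. By Assumption~\ref{assumption: smoothness}(2), the map $\thete\mapsto\omege_\thete$ is well-defined and twice continuously differentiable near $\thete_0$, so the stationarity condition $\nabla_\omege v(\omege_\thete,\thete)=\bzero$ holds identically for all $\thete$ in a neighborhood of $\thete_0$. To obtain $\nabla_\thete\omege_\thete$, I would differentiate this identity in $\thete$ and apply the chain rule, giving $\nabla^2_{\omege\omege}v\cdot(\partial\omege_\thete/\partial\thete)+\nabla^2_{\omege\thete}v=\bzero$ at $(\omege_{\thete_0},\thete_0)$. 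Reading off the Hessian blocks from Assumption~\ref{assumption: smoothness}(1), namely $\nabla^2_{\omege\omege}v=\bV$ and $\nabla^2_{\omege\thete}v=\bSigma$, and using invertibility of $\bV$, the usual Jacobian is $\partial\omege_\thete/\partial\thete=-\bV\inv\bSigma$. Transposing to match the paper's convention $\nabla_\thete\omege_\thete\in\RR^{\dt\times\dw}$ and using symmetry of $\bV$ yields $\nabla_\thete\omege_\thete|_{\thete=\thete_0}=-\bSigma^\top\bV\inv$.

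Next I would compute $\bSigma$ explicitly by starting from $\nabla_\omege v(\omege,\thete)=\int\nabla_\omege c(\omege,\bz)\,p_\thete(\bz)\,d\bz$ and differentiating in $\thete$ under the integral. Since the only $\thete$-dependence sits in $p_\thete$, I invoke the score identity $\nabla_\thete p_\thete=p_\thete\,\bs_\thete$ to get $\bSigma=\int\nabla_\omege c(\omege_{\thete_0},\bz)\,\bs_{\thete_0}(\bz)^\top p_{\thete_0}(\bz)\,d\bz=\EE_{\thete_0}[\nabla_\omege c(\omege_{\thete_0},\bz)\,\bs_{\thete_0}(\bz)^\top]$, which is the claimed expression.

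Finally I would handle $\bPhi$, the Hessian at $\thete_0$ of $g(\thete_1):=\int c(\omege_{\thete_1},\bz)\,p_{\thete_0}(\bz)\,d\bz=\EE_{\thete_0}[c(\omege_{\thete_1},\bz)]$, in which the distribution is frozen at $p_{\thete_0}$ and only $\omege_{\thete_1}$ varies. Writing $J=\partial\omege_{\thete_1}/\partial\thete_1$, the gradient is $\nabla_{\thete_1}g=J^\top\EE_{\thete_0}[\nabla_\omege c(\omege_{\thete_1},\bz)]$. The crucial observation is that the inner expectation vanishes at $\thete_1=\thete_0$, because $\EE_{\thete_0}[\nabla_\omege c(\omege_{\thete_0},\bz)]=\nabla_\omege v(\omege_{\thete_0},\thete_0)=\bzero$ by stationarity of the oracle solution. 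Hence, when differentiating $\nabla_{\thete_1}g$ once more, the product-rule term carrying the second derivative of $\thete_1\mapsto\omege_{\thete_1}$ is multiplied by this vanishing vector and drops out, leaving a second-order envelope computation $\bPhi=J^\top\big(\EE_{\thete_0}[\nabla^2_{\omege\omege}c(\omege_{\thete_0},\bz)]\big)J=J^\top\bV J$ at $\thete_0$. Substituting $J=-\bV\inv\bSigma$ and using symmetry of $\bV$ gives $\bPhi=\bSigma^\top\bV\inv\bSigma$.

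The main obstacle is technical rather than conceptual: every step relies on interchanging differentiation and integration (Leibniz's rule) together with the score identity, which must be justified by the regularity and interchangeability hypotheses (Assumptions~\ref{assumption: M-estimation regularity} and \ref{assumption: interchangeability}). I would also track transpose conventions carefully, since the paper defines $\nabla_\thete\omege_\thete$ as the transpose of the classical Jacobian; and the single delicate point in the whole argument is verifying the vanishing first-order term in the $\bPhi$ computation, which is exactly what makes the second-derivative envelope argument collapse to $J^\top\bV J$.
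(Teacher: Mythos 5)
Your proposal is correct and follows essentially the same route as the paper: the implicit function theorem applied to the stationarity identity $\nabla_\omege v(\omege_\thete,\thete)=\bzero$ for the first formula, differentiation under the integral with the score identity $\nabla_\thete p_\thete = p_\thete\,\bs_\thete$ for $\bSigma$, and the envelope-type observation that $\nabla_\omege\EE_{\thete_0}[c(\omege_{\thete_0},\bz)]=\bzero$ kills the second-derivative-of-$\omege_\thete$ term in the product rule, collapsing $\bPhi$ to $J^\top\bV J=\bSigma^\top\bV\inv\bSigma$. The one delicate point you flag (the vanishing first-order term in the $\bPhi$ computation) is exactly the step the paper's proof relies on as well.
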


Next, we introduce the influence function which is a key ingredient in our derived formulas. Originating from robust statistics \citep{hampel1974influence}, it is the functional derivative of an estimator with respect to the data distribution. In our context, this refers to the derivative of the \emph{decision} obtained from the estimation-optimization pipeline. Specifically, the influence functions for SAA, IEO and ETO are respectively
\begin{align*}
     \IF^\saa (\bz)&=-\bV\inv\nabla_\omege c(\omege_{\thete_0},\bz),\\
    \IF^\ieo(\bz) &= \bV\inv\bSigma\bPhi\inv \nabla_{\thete}c(\omege_{\thete_0},\bz),\\
    \IF^\eto (\bz) &=  -\bV\inv \bSigma \bI\inv\bs_{\thete_0}(\bz),
\end{align*}
all of which are $\RR^{d_w}$-valued. Regarding notations, $\nabla_\omege c(\omege_{\thete_0},\bz)$ is the gradient of the map $\omege\mapsto c(\omege,\bz)$ at $\omege=\omege_{\thete_0}$, and $\nabla_\thete c(\omege_{\thete_0},\bz)$ is the gradient of the map $\thete\mapsto c(\omege_\thete,\bz)$ at $\thete=\thete_0$. 

Finally, we introduce \emph{regret} as the criterion to evaluate the quality of a decision $\omege$. Since we conduct local asymptotic analysis, the definition of regret is slightly different from the classical definition in asymptotic or finite-sample analysis \citep{lam2021impossibility,elmachtoub2022smart}, as it needs to account for the changing sample size. We define $v_n$ and $\omege^*_n$ as follows:
$$\omege_{n}^*:=\argmin_{\omege\in\Omega} v_n(\omege):= \EE_{Q^n}\sbr{\frac{1}{n}\sum_{i=1}^nc(\omege,\bz_i)}.$$
In the local misspecification setting, when the sample size is $n$, the data distribution is given by $Q^n$. Hence, $v_n(\omege)$ represents the ground-truth expected cost, and $\omege_{n}^*$ is interpreted as the corresponding optimal solution at sample size $n$.
\begin{definition}
[Regret]  \label{def:regret} For any distribution $Q^{n}$ and any $\omege\in\Omega$, the regret of $\omege$ at sample size $n$ is defined as 
    $$R_{Q^n}(\bw)=v_n(\bw)-v_n(\omege_n^*).$$
\end{definition}

In the rest of this section, we conduct a comprehensive analysis on the regrets using the three estimation-optimization methods, for the three misspecification regimes introduced in Definition \ref{def: Local Misspecification Regimes}.

\subsection{Balanced Misspecification} 
To state our main results, we define, for $\square\in\cbr{\saa,\ieo,\eto}$, 
\begin{align*}
N^\square&:= N(0,\var(\IF^\square(\bz))),\\ \bb^\square&:=\EE_{\thete_0}[{u}(\bz)(\IF^\square(\bz)-\IF^\saa(\bz))]\in \RR^{d_w}, \\
R^\square&:= \tfrac12 \nbr{\bb^\square}_{\bV}^2 \in \RR,
\end{align*}
where $N^\square$ is the normal distribution with zero mean and covariance matrix $\var(\IF^\square(\bz))$. Note that unless otherwise specified, $\var(\cdot)$ should always be interpreted as $\var_{P_{\thete_0}}(\cdot)$.

\begin{theorem}[Asymptotics under Balanced Misspecification]\label{prop: asymp under balanced}
Suppose Assumptions \ref{assumption: smoothness}, \ref{assumption: M-estimation regularity}, and \ref{assumption: interchangeability} hold. In the balanced regime in Definition \ref{def: Local Misspecification Regimes}, under $Q^n$, for $\square\in\{\saa,\eto,\ieo\}$,
\begin{align*}
\sqrt{n}(\hat{\omege}^\square-\omege_n^*)&\overset{Q^n}{\to}N^\square+\bb^\square
        \;,\\
nR_{Q^n}(\hat{\omege}^\square)&\overset{Q^n}{\to}\frac{1}{2} \rbr{N^\square+\bb^\square}^\top\bV \rbr{N^\square+\bb^\square}.
\end{align*}
In terms of bias,
$0=\nbr{{\bb^\saa}}_{\bV}\leq\nbr{{\bb^\ieo}}_{\bV}\leq\nbr{{\bb^\eto}}_{\bV}$. In terms of variance,
$\var\rbr{N^\saa}\geq\var\rbr{N^\ieo}\geq\var\rbr{N^\eto}$.
\end{theorem}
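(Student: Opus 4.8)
The plan is to equip each estimator with an asymptotically linear (influence‑function) expansion and then transport it from $P^n=P_{\thete_0}^{\otimes n}$ to $Q^n$ by contiguity. First I would show, under $P^n$ and Assumptions~\ref{assumption: smoothness} and \ref{assumption: M-estimation regularity},
\[
\sqrt{n}(\hat{\omege}^\square-\omege_{\thete_0})=\tfrac{1}{\sqrt n}\sum_{i=1}^n \IF^\square(\bz_i)+o_{P^n}(1),\qquad \square\in\{\saa,\ieo,\eto\}.
\]
For SAA this is the standard $M$-estimator expansion; for ETO it is the MLE expansion pushed through the smooth map $\thete\mapsto\omege_\thete$ via Lemma~\ref{lemma: wtheta diff}; for IEO it is the $M$-estimator defined by $\nabla_\thete\hat v_0(\omege_\thete)=\zero$, whose sandwich uses $\bPhi$ as the Hessian. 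In parallel, the quadratic–mean–differentiability condition of Definition~\ref{def: local perturbation: QMD} with $t=1/\sqrt n$ gives the LAN expansion $\log\frac{dQ^n}{dP^n}=\tfrac1{\sqrt n}\sum_i u(\bz_i)-\tfrac12\EE_{\thete_0}[u^2]+o_{P^n}(1)$. A joint CLT for $\big(\tfrac1{\sqrt n}\sum_i\IF^\square(\bz_i),\,\log\tfrac{dQ^n}{dP^n}\big)$ then feeds Le Cam's third lemma, which under $Q^n$ shifts the limiting mean by the covariance $\EE_{\thete_0}[u(\bz)\IF^\square(\bz)]$; mutual contiguity ensures the $o_{P^n}(1)$ remainders are also $o_{Q^n}(1)$.

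\textbf{Recentering and regret.} Next I would treat the moving center $\omege_n^*$. Expanding its first-order condition $\nabla_\omege\EE_{Q_{1/\sqrt n}}[c(\omege_n^*,\bz)]=\zero$ around $\omege_{\thete_0}$ yields the deterministic shift $\sqrt n(\omege_n^*-\omege_{\thete_0})\to\EE_{\thete_0}[u(\bz)\IF^\saa(\bz)]$. Subtracting this from the $Q^n$-limit of $\sqrt n(\hat\omege^\square-\omege_{\thete_0})$ produces exactly $N^\square+\bb^\square$ with $\bb^\square=\EE_{\thete_0}[u(\IF^\square-\IF^\saa)]$, which is the first display. For the regret, since $\omege_n^*$ minimizes $v_n$, a second-order Taylor expansion annihilates the linear term and leaves $nR_{Q^n}(\hat\omege^\square)=\tfrac12[\sqrt n(\hat\omege^\square-\omege_n^*)]^\top\nabla^2 v_n(\tilde\omege)[\sqrt n(\hat\omege^\square-\omege_n^*)]$; since $\nabla^2 v_n(\tilde\omege)\to\bV$, the continuous mapping theorem gives the stated quadratic-form limit.

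\textbf{Bias ordering.} Here I would work in the $\bV$-inner product. From Lemma~\ref{lemma: wtheta diff} one gets $\IF^\ieo=\bPi\,\IF^\saa$, where $\bPi=\bV^{-1}\bSigma\bPhi^{-1}\bSigma^\top$ is the $\bV$-orthogonal projection onto $T:=\textup{col}(\bV^{-1}\bSigma)$, using $\nabla_\thete c(\omege_{\thete_0},\bz)=\bSigma^\top\IF^\saa(\bz)$. Writing $\bd:=\EE_{\thete_0}[u\,\IF^\saa]$, we have $\bb^\saa=\zero$, $\EE_{\thete_0}[u\,\IF^\ieo]=\bPi\bd$, and $\EE_{\thete_0}[u\,\IF^\eto]=-\bV^{-1}\bSigma\bI^{-1}\EE_{\thete_0}[u\,\bs_{\thete_0}]\in T$. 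Since $\bb^\square=\EE_{\thete_0}[u\,\IF^\square]-\bd$ and $\bPi\bd$ is the $\bV$-nearest point of $T$ to $\bd$, it follows that $\nbr{\bb^\ieo}_\bV=\min_{x\in T}\nbr{x-\bd}_\bV\le\nbr{\bb^\eto}_\bV$, giving $0=\nbr{\bb^\saa}_\bV\le\nbr{\bb^\ieo}_\bV\le\nbr{\bb^\eto}_\bV$.

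\textbf{Variance ordering and the main obstacle.} Congruence by $\bV^{1/2}$ turns $\bPi$ into an ordinary orthogonal projection $\bP$ onto $\textup{col}(\bV^{-1/2}\bSigma)$ and writes $\bV^{1/2}\var(\IF^\square)\bV^{1/2}$ as $\bV^{-1/2}\bG\bV^{-1/2}$ (SAA, with $\bG:=\EE_{\thete_0}[\nabla_\omege c\,\nabla_\omege c^\top]$), $\bP(\bV^{-1/2}\bG\bV^{-1/2})\bP$ (IEO), and $\bV^{-1/2}\bSigma\bI^{-1}\bSigma^\top\bV^{-1/2}$ (ETO). The $\ieo\succeq\eto$ comparison then follows from the Schur-complement inequality $\bG-\bSigma\bI^{-1}\bSigma^\top\succeq\zero$ (PSD-ness of the joint covariance of $(\nabla_\omege c,\bs_{\thete_0})$) combined with congruence by $\bP$, since $\bSigma\bI^{-1}\bSigma^\top$ is fixed by $\bP$. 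I expect the $\saa$-versus-$\ieo$ comparison to be the main obstacle: because a $\bV$-orthogonal projection contracts the $\bV$-norm, it delivers the regret-relevant trace inequality $\Tr(\bV\var(\IF^\ieo))\le\Tr(\bV\var(\IF^\saa))$ transparently, whereas proving the ordering in the full Löwner sense is delicate and hinges on the specific joint covariance structure of $(\nabla_\omege c,\bs_{\thete_0})$ rather than on treating $\bV$ and $\bG$ as unrelated objects.
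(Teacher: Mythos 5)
Your distributional analysis follows essentially the same route as the paper's proof: an asymptotically linear (influence-function) expansion of each estimator under $P^n$ (this is the paper's Proposition \ref{prop:  well-specified}), the LAN expansion of $\log(dQ^n/dP^n)$, a joint CLT fed into Le Cam's third lemma to shift the limiting mean by $\EE_{\thete_0}[u(\bz)\IF^\square(\bz)]$, the deterministic recentering $\sqrt{n}(\omege_n^*-\omege_{\thete_0})\to\EE_{\thete_0}[u(\bz)\IF^\saa(\bz)]$ (the paper's Lemma \ref{lemma: diff opt solution perturbation}), and a second-order Taylor expansion plus extended continuous mapping for the regret. That part is correct and matches the paper. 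Your bias-ordering argument is a correct but genuinely different route: the paper proves $\nbr{\bb^\ieo}_{\bV}\leq\nbr{\bb^\eto}_{\bV}$ algebraically (in the proof of Theorem \ref{prop: comparison under severe}) via the identity $\EE_{\thete_0}[u\,\IF^\ieo]^\top\bV\,\EE_{\thete_0}[u\,\IF^\ieo]=\EE_{\thete_0}[u\,\IF^\ieo]^\top\bV\,\EE_{\thete_0}[u\,\IF^\saa]$ followed by a completion of squares, whereas you observe $\IF^\ieo=\bPi\,\IF^\saa$ with $\bPi$ the $\bV$-orthogonal projection onto $\textup{col}(\bV^{-1}\bSigma)$ and that $\EE_{\thete_0}[u\,\IF^\eto]$ lies in that subspace, so $\bb^\ieo$ is the $\bV$-nearest-point residual. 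This is cleaner, and it is precisely the geometry the paper only surfaces separately in Theorem \ref{prop: projection intepretation} without exploiting it for the bias comparison.

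The one genuine gap is in the variance ordering. Your Schur-complement argument for $\var(N^\ieo)\geq\var(N^\eto)$ is sound: $\bG-\bSigma\bI^{-1}\bSigma^\top\succeq\bzero$ from joint PSD-ness of the covariance of $(\nabla_\omege c,\bs_{\thete_0})$, congruence by $\bV^{-1/2}$ and by $\bP$, and the fact that $\bP$ fixes $\bV^{-1/2}\bSigma$. But you leave $\var(N^\saa)\geq\var(N^\ieo)$ unproven, and your worry is well-founded: for a generic PSD matrix $\bM$ and orthogonal projection $\bP$, the inequality $\bM\succeq\bP\bM\bP$ is simply false (e.g., $\bM$ with rows $(1,1)$ and $(1,2)$, $\bP$ the projection onto the first coordinate), so the contraction property of the projection only delivers a pathwise coupling bound on $\nbr{\bV^{1/2}(\cdot)}^2$ — hence stochastic dominance of regrets and the trace inequality — not the Löwner ordering the theorem asserts. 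The paper does not derive this piece from first principles either: it imports the entire chain $\var(\IF^\eto)\leq\var(\IF^\ieo)\leq\var(\IF^\saa)$ from Theorem 2 of \citet{elmachtoub2023estimate} (see the proof of Proposition \ref{prop:  well-specified}), a result that exploits well-specification structure beyond the reduction to $(\bM,\bP)$ that you retain. So to complete your proof as written, you must either invoke that external result, as the paper does, or supply the missing matrix comparison; as it stands, the SAA-versus-IEO variance claim is asserted but not established.
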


Theorem \ref{prop: asymp under balanced} states that when $\alpha=1/2$, i.e., the degree of misspecification is of the same order as the statistical error, the gap between the data-driven and optimal decisions is asymptotically normal. Moreover, this normal has mean zero for $\saa$ (note $\bb^\saa = \bzero$ and $R^\saa = 0$), but generally nonzero for $\eto$ and $\ieo$. More importantly, in the asymptotic limit, $\bb^\square$ represents the bias coming from model misspecification as it involves $u$, while $N^\square$ captures the data noise variability. We highlight that the dependence on $u$ in $\bb$ directly implies that the ETO and IEO estimator are \textit{non-regular} in the sense of \cite{van2000asymptotic}. The theorem shows that in terms of bias, SAA generally outperforms IEO which in turn outperforms ETO. On the the hand, the ordering is reversed for variance. As a result there is no universal ordering for the overall error in general.
The next theorem lifts further to compare the regrets of SAA, ETO and IEO under the balanced regime.
\begin{theorem}
    [Regret Comparisons under Balanced Misspecification]\label{prop: comparison under balanced} Let 
    $$\GG^\square:=\frac{1}{2} (N^\square+\bb^\square)^\top\bV \rbr{N^\square+\bb^\square}$$ 
    denote the limiting regret distribution of $\square\in\cbr{\saa,\eto,\ieo}$ in Theorem \ref{prop: asymp under balanced}. Then 
    $
    \EE[\GG^\square]=\EE
    [\frac{1}{2}(N^\square)^\top \bV N^\square]+\frac{1}{2}(\bb^\square)^\top \bV\bb^\square. 
    $
Moreover, 
$$\EE\sbr{\frac{1}{2}\rbr{N^\eto}^\top \bV N^\eto}\leq\EE\sbr{\frac{1}{2}\rbr{N^\ieo}^\top \bV N^\ieo}\leq\EE\sbr{\frac{1}{2}\rbr{N^\saa}^\top \bV N^\saa},$$ $$\frac{1}{2}\rbr{\bb^\saa}^\top \bV\bb^\saa\leq \frac{1}{2}\rbr{\bb^\ieo}^\top \bV\bb^\ieo\leq \frac{1}{2}\rbr{\bb^\eto}^\top \bV\bb^\eto.$$
\end{theorem}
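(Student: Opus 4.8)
The plan is to derive the entire statement from the two stochastic orderings already proved in Theorem~\ref{prop: asymp under balanced}, supplemented by two elementary computations: a bias--variance split of $\EE[\GG^\square]$ and the evaluation of an expected Gaussian quadratic form as a trace. Since $\GG^\square$ is a fixed (limiting) random variable, there are no uniform-integrability or convergence-of-moments subtleties to worry about; everything is a direct calculation on the law of $N^\square + \bb^\square$.

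First I would establish the decomposition. Using that $\bV$ is symmetric, expand
\[
\GG^\square = \tfrac12 (N^\square)^\top \bV N^\square + (\bb^\square)^\top \bV N^\square + \tfrac12 (\bb^\square)^\top \bV \bb^\square .
\]
Taking expectations, the cross term equals $(\bb^\square)^\top \bV\, \EE[N^\square] = 0$ because $N^\square$ is centered while $\bb^\square$ and $\bV$ are deterministic. This yields $\EE[\GG^\square] = \EE[\tfrac12 (N^\square)^\top \bV N^\square] + \tfrac12 (\bb^\square)^\top \bV \bb^\square$, cleanly separating a variance contribution and a squared-bias contribution, which is exactly the identity in the statement.

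Next I would handle the variance term via the standard trace identity for a centered Gaussian: since $N^\square \sim N(0, \var(\IF^\square(\bz)))$,
\[
\EE\left[\tfrac12 (N^\square)^\top \bV N^\square\right] = \tfrac12\, \EE\left[\Tr\left(\bV N^\square (N^\square)^\top\right)\right] = \tfrac12 \Tr\left(\bV\, \var(\IF^\square(\bz))\right).
\]
The PSD ordering $\var(N^\saa) \geq \var(N^\ieo) \geq \var(N^\eto)$ from Theorem~\ref{prop: asymp under balanced} then transfers to these traces through the monotonicity fact that $M \geq \bzero$ and $\bV \geq \bzero$ imply $\Tr(\bV M) \geq 0$; this is seen by writing $\Tr(\bV M) = \Tr(\bV^{1/2} M \bV^{1/2}) \geq 0$. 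Applying this to the PSD differences $\var(N^\saa) - \var(N^\ieo)$ and $\var(N^\ieo) - \var(N^\eto)$ produces the stated chain $\EE[\tfrac12 (N^\eto)^\top \bV N^\eto] \leq \EE[\tfrac12 (N^\ieo)^\top \bV N^\ieo] \leq \EE[\tfrac12 (N^\saa)^\top \bV N^\saa]$. The one point requiring care is the hypothesis $\bV \geq \bzero$: I would note that $\bV = \nabla^2_{\omege\omege} v(\omege_{\thete_0}, \thete_0)$ is the Hessian of $v(\cdot, \thete_0)$ at its interior minimizer $\omege_{\thete_0}$, hence PSD, and is positive definite once combined with the invertibility assumed in Assumption~\ref{assumption: smoothness}.

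Finally, for the bias term I would observe that $\tfrac12 (\bb^\square)^\top \bV \bb^\square = \tfrac12 \nbr{\bb^\square}_{\bV}^2$, and Theorem~\ref{prop: asymp under balanced} already gives $0 = \nbr{\bb^\saa}_{\bV} \leq \nbr{\bb^\ieo}_{\bV} \leq \nbr{\bb^\eto}_{\bV}$; since all three are nonnegative, squaring preserves the ordering and yields the bias inequality directly. I do not anticipate a genuine obstacle here: the substance is carried entirely by Theorem~\ref{prop: asymp under balanced}, and the only nontrivial ingredient is the trace-monotonicity argument, which is where I would be most careful to invoke $\bV \geq \bzero$ correctly.
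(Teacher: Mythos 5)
Your proposal is correct, and it follows the paper's overall skeleton: the same expansion of $\GG^\square$ into a centered quadratic form, a cross term, and a squared-bias term; the same observation that $\EE[(\bb^\square)^\top\bV N^\square]=0$ because $N^\square$ is centered; and the same outsourcing of the bias ordering to the previously established chain $0=\nbr{\bb^\saa}_{\bV}\leq\nbr{\bb^\ieo}_{\bV}\leq\nbr{\bb^\eto}_{\bV}$ (which the paper derives in the severe-regime comparison and restates in Theorem~\ref{prop: asymp under balanced}). Where you genuinely diverge is the variance piece. The paper first upgrades the PSD ordering $\var(N^\saa)\geq\var(N^\ieo)\geq\var(N^\eto)$ into a \emph{stochastic dominance} statement,
\begin{align*}
(N^\eto)^\top\bV N^\eto\preceq_\st (N^\ieo)^\top\bV N^\ieo\preceq_\st (N^\saa)^\top\bV N^\saa,
\end{align*}
and then takes expectations; you instead compute $\EE\sbr{\tfrac12 (N^\square)^\top\bV N^\square}=\tfrac12\Tr\rbr{\bV\,\var(\IF^\square(\bz))}$ and use linearity of trace together with $\Tr(\bV M)=\Tr(\bV^{1/2}M\bV^{1/2})\geq 0$ for PSD $M$ and $\bV\geq\bzero$. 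Your route is more elementary and fully self-contained: it needs only the PSD ordering and the positive semidefiniteness of $\bV$ (which you correctly justify via $\bV$ being the Hessian at an interior minimizer, made definite by Assumption~\ref{assumption: smoothness}), and it avoids the dominance fact for Gaussian quadratic forms entirely. The paper's route proves something strictly stronger at the intermediate step --- ordering of the full distributions of the quadratic forms, consistent with the stochastic-dominance statements it makes in the mild regime (Theorem~\ref{prop: asymp under mild}) --- but that strength is not needed for the expectation inequalities actually claimed in this theorem, so nothing is lost by your shortcut.
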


Like Theorem \ref{prop: asymp under balanced}, while Theorem \ref{prop: comparison under balanced} shows a lack of universal ordering for regrets, it depicts a decomposition of the asymptotic distribution of the regret into two parts where two opposite orderings emerge.  
In particular,
it suggests that while ETO is best in terms of variance, and SAA best in terms of bias, IEO is in between and could potentially induce a lower decision error compared to the other two methods.

Another important insight from Theorems \ref{prop: asymp under balanced} and \ref{prop: comparison under balanced} regards the explicit form of the bias and variance. For this, let us introduce the analogous results for severe and mild misspecification regimes and discuss the formulas along the way.

\subsection{Severe Misspecification} 
We first formally describe the $O(1/\sqrt{n})$ order of the statistical error via the following assumptions borrowed from \cite{fang2023inference}. The assumption is natural because it says the empirical part deviates from the expected part at the rate $O(1/\sqrt{n})$. .
\begin{assumption}
    [Statistical Error Order]\label{assumption: statistical order} For i.i.d. $\cbr{\bz_i}_{i=1}^n$ with joint distribution $Q^n$, let 
    \begin{align*}
    \thete_{n}^\kl&:=\argmax_{\thete\in\Theta}\EE_{Q^n}\sbr{\frac{1}{n}\sum_{i=1}^n\log p_\thete(\bz_i)},
    \\
\thete_{n}^*&:=\argmin_{\thete\in\Theta}\EE_{Q^n}\sbr{\frac{1}{n}\sum_{i=1}^nc(\omege_\thete,\bz_i)}.
\end{align*}
Assume that $\nbr{\hat{\omege}^\eto-\omege_{\thete_{n}^\kl}}$, $\nbr{\hat{\omege}^\ieo-\omege_{\thete_{n}^*}}$ and $\nbr{\hat{\omege}^\saa-\omege_{{n}}^*}$ are all of order $O_{Q^n}(1/\sqrt{n})$. 
Moreover, assume that the matrix $\nabla_{\omege\omege} \sq  v_n(\omege_n^*)\to \bV$ as $n\to\infty$.
\end{assumption}

\begin{theorem}[Asymptotics under Severe Misspecification]
\label{prop: asymp under severe}      
Suppose Assumptions \ref{assumption: smoothness}, \ref{assumption: statistical order}, \ref{assumption: M-estimation regularity} and \ref{assumption: interchangeability} hold. In the severely misspecified case, under $Q^n$, for $\square\in\{\saa,\eto,\ieo\}$,
   \begin{align*}
        n^\alpha(\hat{\omege}^\square-\omege_n^*)&\overset{p}{\to}\bb^\square
        \;,\\
        n^{2\alpha}R_{Q^n}(\hat{\omege}^\square)&\overset{p}{\to}R^\square = \frac12 \| \bb^\square \|_{\bV} \sq.
    \end{align*}
In terms of variance,
$0 = \var\rbr{\bb^\saa}=\var\rbr{\bb^\ieo}=\var\rbr{\bb^\eto}$. The comparison of bias has the same form as the regret (stated in the next theorem).
\end{theorem}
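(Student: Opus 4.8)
The plan is to decompose each decision error $\hat{\omege}^\square-\omege_n^*$ into a \emph{statistical} part and a \emph{population bias} part, and to exploit that in the severe regime ($0<\alpha<1/2$) the statistical part is asymptotically negligible after rescaling by $n^\alpha$. Writing $t=1/n^\alpha$, I would use the population targets supplied by Assumption~\ref{assumption: statistical order}: for $\eto$ the target is $\omege_{\thete_n^\kl}$, for $\ieo$ it is $\omege_{\thete_n^*}$, and for $\saa$ it is $\omege_n^*$ itself. Since $\alpha<1/2$, scaling the assumed $O_{Q^n}(1/\sqrt n)$ statistical error by $n^\alpha$ gives $O_{Q^n}(n^{\alpha-1/2})=o_{Q^n}(1)$, so it suffices to analyze the \emph{deterministic} differences between each population target and $\omege_n^*$. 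This already delivers $\bb^\saa=\bzero$ for $\saa$.

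The core step is a first-order expansion of the three population targets in $t$ at $t=0$. Because the data are i.i.d., $v_n(\omege)=\EE_{Q_t}[c(\omege,\bz)]$, and likewise the population KL and IEO problems are expectations under the single tilted law $Q_t$. Using the quadratic-mean-differentiability in Definition~\ref{def: local perturbation: QMD}, I would first establish the key expansion $\EE_{Q_t}[g(\bz)]=\EE_{\thete_0}[g(\bz)]+t\,\EE_{\thete_0}[g(\bz)u(\bz)]+o(t)$ for the relevant smooth integrands $g$. Plugging this into the first-order optimality conditions and subtracting the $t=0$ conditions gives, for the true optimizer $\omege^*(t)$,
\begin{align*}
\bzero=\EE_{Q_t}[\nabla_\omege c(\omege^*(t),\bz)]
&=\bV(\omege^*(t)-\omege_{\thete_0})+t\,\EE_{\thete_0}[\nabla_\omege c(\omege_{\thete_0},\bz)u(\bz)]+o(t),
\end{align*}
so that $\tfrac{d}{dt}\omege^*(t)|_{t=0}=-\bV\inv\EE_{\thete_0}[\nabla_\omege c(\omege_{\thete_0},\bz)u(\bz)]=\EE_{\thete_0}[u(\bz)\IF^\saa(\bz)]$. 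Differentiating the score equation $\EE_{Q_t}[\bs_{\thete^\kl(t)}]=\bzero$ and the IEO stationarity $\EE_{Q_t}[\nabla_\thete c(\omege_{\thete^*(t)},\bz)]=\bzero$ analogously, and using the information identity $\EE_{\thete_0}[\nabla_\thete\bs_{\thete_0}(\bz)]=-\bI$, the Hessian $\bPhi$, and the chain rule with $\nabla_\thete\omege_\thete|_{\thete=\thete_0}=-\bSigma^\top\bV\inv$ from Lemma~\ref{lemma: wtheta diff}, I would obtain $\tfrac{d}{dt}\omege_{\thete^\kl(t)}|_{t=0}=\EE_{\thete_0}[u\,\IF^\eto]$ and $\tfrac{d}{dt}\omege_{\thete^*(t)}|_{t=0}=\EE_{\thete_0}[u\,\IF^\ieo]$, matching the influence-function formulas in the excerpt. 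Writing $\omege^\square(t)$ for the population target of method $\square$ (so $\omege^\saa(t)=\omege^*(t)$), each expansion reads $\omege^\square(t)=\omege_{\thete_0}+t\,\EE_{\thete_0}[u\,\IF^\square]+o(t)$; subtracting the $\saa$ expansion and setting $t=1/n^\alpha$ yields $n^\alpha(\omege^\square(t)-\omege_n^*)\to\bb^\square$. Combined with the negligible statistical error, this gives $n^\alpha(\hat{\omege}^\square-\omege_n^*)\overset{p}{\to}\bb^\square$.

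For the regret, I would Taylor expand $v_n$ about its minimizer $\omege_n^*$. The gradient term vanishes by optimality, leaving $R_{Q^n}(\hat{\omege}^\square)=\tfrac12(\hat{\omege}^\square-\omege_n^*)^\top\nabla_{\omege\omege}\sq v_n(\tilde{\omege})(\hat{\omege}^\square-\omege_n^*)$ for an intermediate point $\tilde{\omege}$. Multiplying by $n^{2\alpha}$ and invoking the convergence $n^\alpha(\hat{\omege}^\square-\omege_n^*)\overset{p}{\to}\bb^\square$ together with $\nabla_{\omege\omege}\sq v_n(\omege_n^*)\to\bV$ from Assumption~\ref{assumption: statistical order}, I would conclude $n^{2\alpha}R_{Q^n}(\hat{\omege}^\square)\overset{p}{\to}\tfrac12(\bb^\square)^\top\bV\bb^\square=\tfrac12\nbr{\bb^\square}_\bV\sq=R^\square$; here I need $\tilde{\omege}\to\omege_{\thete_0}$, which follows because both $\hat{\omege}^\square$ and $\omege_n^*$ converge to $\omege_{\thete_0}$ and the Hessian is continuous. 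The variance claim is immediate: each limit $\bb^\square$ is a deterministic vector, so $\var(\bb^\saa)=\var(\bb^\ieo)=\var(\bb^\eto)=0$, which is exactly the degeneracy signalling that no data noise survives at scale $n^\alpha$.

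The main obstacle is the rigorous justification of the directional differentiation of the population $M$-estimators under the merely quadratic-mean-differentiable perturbation: I must verify that the $o(t)$ remainders in the expansion of $\EE_{Q_t}[\cdot]$ survive composition with the \emph{moving} minimizers $\omege^*(t)$, $\thete^\kl(t)$, $\thete^*(t)$, and that these minimizers are themselves $O(t)$-close to $\omege_{\thete_0}$ and $\thete_0$ (so that the $o(\nbr{\cdot})$ terms are genuinely $o(t)$). This is where Assumptions~\ref{assumption: smoothness}, \ref{assumption: M-estimation regularity}, and \ref{assumption: interchangeability} (invertibility of $\bV$, $\bPhi$, $\bI$, unique interior minimizers, and interchange of differentiation and integration) are essential; everything downstream is Taylor expansion and a Slutsky-type argument.
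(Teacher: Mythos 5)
Your proposal is correct and follows essentially the same route as the paper: the same decomposition of $\hat{\omege}^\square-\omege_n^*$ into an $O_{Q^n}(1/\sqrt{n})$ statistical term (killed by $n^\alpha$ rescaling since $\alpha<1/2$, via Assumption~\ref{assumption: statistical order}) plus deterministic population-target differences, whose first-order expansions in $t=1/n^\alpha$ you derive by differentiating the optimality conditions --- exactly the content of the paper's Lemmas~\ref{lemma: diff opt solution perturbation} and \ref{lemma: diff opt solution perturbation II} (proved there by the implicit function theorem) combined with $\nabla_\thete\omege_\thete|_{\thete=\thete_0}=-\bSigma^\top\bV\inv$. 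The regret step (second-order Taylor expansion at $\omege_n^*$ with $\nabla_{\omege\omege}\sq v_n(\omege_n^*)\to\bV$) and the trivial variance claim also match the paper's argument.
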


\begin{theorem}[Bias/Regret Comparisons under Severe Misspecification]\label{prop: comparison under severe} 
Under the same setting as in Theorem~\ref{prop: asymp under severe}, we have $0=\nbr{\bb^\saa}_{\bV}\leq\nbr{\bb^\ieo}_{\bV}\leq\nbr{\bb^\eto}_{\bV}$ and $0=R^\saa\leq R^\ieo\leq R^\eto$.
\end{theorem}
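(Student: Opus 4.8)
My starting observation is that the $\saa$ terms are trivial: since $\bb^\saa=\EE_{\thete_0}[u(\bz)(\IF^\saa(\bz)-\IF^\saa(\bz))]=\bzero$, we immediately get $\nbr{\bb^\saa}_\bV=0$ and $R^\saa=0$, so both chains start at zero. Moreover, because $R^\square=\tfrac12\nbr{\bb^\square}_\bV\sq$ and $t\mapsto\tfrac12 t\sq$ is increasing on $[0,\infty)$, the regret ordering $R^\ieo\leq R^\eto$ follows verbatim from the bias ordering $\nbr{\bb^\ieo}_\bV\leq\nbr{\bb^\eto}_\bV$. The whole theorem therefore reduces to this single inequality between two deterministic vectors; the probabilistic content was already supplied by Theorem \ref{prop: asymp under severe}, so the remaining argument is purely linear-algebraic.

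My plan is to put both biases into a common coordinate system and expose a projection structure. First I would use the chain rule together with Lemma \ref{lemma: wtheta diff} to obtain $\nabla_\thete c(\omege_{\thete_0},\bz)=-\bSigma^\top\bV\inv\nabla_\omege c(\omege_{\thete_0},\bz)$. Abbreviating $\bc:=\EE_{\thete_0}[u(\bz)\nabla_\omege c(\omege_{\thete_0},\bz)]$ and $\bd:=\EE_{\thete_0}[u(\bz)\bs_{\thete_0}(\bz)]$, and using $\bPhi=\bSigma^\top\bV\inv\bSigma$ from the same lemma, substituting the influence-function formulas gives
\begin{align*}
\bb^\ieo=\bV\inv(\bc-P\bc),\qquad \bb^\eto=\bV\inv(\bc-\bSigma\bI\inv\bd),\qquad P:=\bSigma\bPhi\inv\bSigma^\top\bV\inv.
\end{align*}
Squaring in the $\bV$-norm and cancelling one factor of $\bV$ (using symmetry of $\bV\inv$) converts these into $\bV\inv$-norms on the $\RR^{\dw}$ side:
\begin{align*}
\nbr{\bb^\ieo}_\bV\sq=\nbr{\bc-P\bc}_{\bV\inv}\sq,\qquad \nbr{\bb^\eto}_\bV\sq=\nbr{\bc-\bSigma\bI\inv\bd}_{\bV\inv}\sq.
\end{align*}

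The crux is to recognize $P$ as the orthogonal projection onto $\textup{col}(\bSigma)$ with respect to the inner product $\langle\bx,\by\rangle_{\bV\inv}:=\bx^\top\bV\inv\by$. I would check the defining properties: $P$ maps into $\textup{col}(\bSigma)$ and fixes it pointwise (hence $P^2=P$), and $P$ is self-adjoint for $\langle\cdot,\cdot\rangle_{\bV\inv}$ because $P^\top\bV\inv=\bV\inv P$ by direct computation. The standard best-approximation property of an orthogonal projection then yields $\nbr{\bc-P\bc}_{\bV\inv}=\min_{\bx\in\RR^{\dt}}\nbr{\bc-\bSigma\bx}_{\bV\inv}$; taking the feasible but generally suboptimal choice $\bx=\bI\inv\bd$ gives $\nbr{\bb^\ieo}_\bV\leq\nbr{\bc-\bSigma\bI\inv\bd}_{\bV\inv}=\nbr{\bb^\eto}_\bV$, which is exactly the desired inequality.

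I expect the main obstacle to be the identification of the projection geometry, specifically verifying that $P$ is genuinely $\bV\inv$-self-adjoint ($P^\top\bV\inv=\bV\inv P$); once that is in hand, the comparison collapses to the elementary fact that the orthogonal projection realizes the distance to a subspace, with $\eto$ corresponding to a particular point $\bSigma\bI\inv\bd$ of $\textup{col}(\bSigma)$ and $\ieo$ to the closest such point. Well-definedness of every object — invertibility of $\bPhi=\bSigma^\top\bV\inv\bSigma$ and positive definiteness of $\bV$, so that $\nbr{\cdot}_\bV$ and $\nbr{\cdot}_{\bV\inv}$ are genuine norms — is guaranteed by the full-rank and invertibility conditions in Assumption \ref{assumption: smoothness}.
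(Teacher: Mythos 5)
Your proof is correct, and it takes a genuinely different route from the paper's, although the underlying mathematics coincides. The paper proves the regret ordering first, by direct matrix algebra: it establishes the identity $\EE_{\thete_0}[u(\bz)\IF^\ieo(\bz)]^\top\bV\,\EE_{\thete_0}[u(\bz)\IF^\ieo(\bz)]=\EE_{\thete_0}[u(\bz)\IF^\ieo(\bz)]^\top\bV\,\EE_{\thete_0}[u(\bz)\IF^\saa(\bz)]$, expands $2R^\eto$ and $2R^\ieo$ as quadratic forms, and shows $2R^\eto-2R^\ieo\geq 0$ by completing the square via $\bx^\top\bA\bx-2\bx^\top\by+\by^\top\bA\inv\by=\nbr{\bA^{1/2}\bx-\bA^{-1/2}\by}^2$; it then reads off the bias ordering from the regret ordering by monotonicity of the square root --- the reverse of your reduction, which is equally valid since $R^\square=\tfrac12\nbr{\bb^\square}_\bV^2$. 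Your argument instead writes $\bb^\ieo=\bV\inv(\bc-P\bc)$ and $\bb^\eto=\bV\inv(\bc-\bSigma\bI\inv\bd)$, verifies that $P=\bSigma\bPhi\inv\bSigma^\top\bV\inv$ is the $\bV\inv$-orthogonal projection onto $\textup{col}(\bSigma)$ (idempotent and $\bV\inv$-self-adjoint, both of which check out), and invokes best approximation: $\bSigma\bI\inv\bd$ is a feasible but generally suboptimal point of $\textup{col}(\bSigma)$, so its residual dominates the projection residual. These are the same facts in two guises --- the paper's key identity is exactly the normal equation of your projection, and its completion of the square is your Pythagorean step --- but your organization is shorter and more conceptual, and it makes visible inside the comparison proof the geometry that the paper only surfaces separately in Theorem \ref{prop: projection intepretation}, where $\IF^\ieo(\bz)=-\bV^{-1}\bP\nabla_\omege c(\omege_{\thete_0},\bz)$ is derived by precisely your chain-rule computation $\nabla_\thete c(\omege_{\thete_0},\bz)=-\bSigma^\top\bV\inv\nabla_\omege c(\omege_{\thete_0},\bz)$. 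What the paper's route buys is self-containedness at the level of raw matrix identities, with no need to certify projection properties; what yours buys is a one-line inequality once the projection structure is checked, together with a transparent explanation of the result: both biases are residuals to points of the same subspace, and IEO's point is the closest one.
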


Theorems~\ref{prop: asymp under severe} and \ref{prop: comparison under severe} stipulate that, once the degree of misspecification is larger than the statistical error ($0<\alpha<1/2$), SAA will dominate ETO which will further dominate IEO. This is because in this regime only the bias surfaces, and this ordering is in line with the bias ordering in Theorems \ref{prop: asymp under balanced} and \ref{prop: comparison under balanced}.

In both the balanced and the severe misspecification cases, the bias term can be significant relative to the variance. In all cases, the bias has the form $\bb^\square=\EE_{\thete_0}[{u}(\bz)(\IF^\square(\bz)-\IF^\saa(\bz))]$, an inner product between the misspecification direction and the difference of influence functions between the considered estimation-optimization pipeline and SAA. Note that the latter difference is always zero for SAA, which coincides with the model-free nature of SAA that elicits zero bias. On the other hand, for either IEO or ETO, the bias effect can be minimized if the misspecification direction is orthogonal to the influence function difference. While this characterization is generally opaque, the following provides a more manageable sufficient condition.

\begin{theorem}[Approximately Impactless Misspecification Direction]\label{prop: approximately impactless} 
Let the assumptions in Theorem~\ref{prop: asymp under severe} hold.
        If ${u}(\cdot)\in\cbr{\bbeta^\top \bs_{\thete_0}(\cdot):\bbeta\in\RR^{d_{\thete}}}$, then $\bb^\eto$ (and thus $\bb^\ieo$ and $\bb^\saa$) $=\bzero$.
\end{theorem}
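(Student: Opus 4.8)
The plan is to verify the claim by a direct computation of $\bb^\eto$, after which the vanishing of $\bb^\ieo$ and $\bb^\saa$ will follow with little extra work. First, note that $\bb^\saa=\bzero$ holds unconditionally, since $\IF^\saa-\IF^\saa=\bzero$; no argument is needed there. The substance of the theorem is to show that when the misspecification direction lies in the span of the score, i.e.\ $u(\bz)=\bbeta^\top\bs_{\thete_0}(\bz)$ for some $\bbeta\in\RR^{d_\theta}$, the ETO bias vanishes.

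To compute $\bb^\eto=\EE_{\thete_0}[u(\bz)(\IF^\eto(\bz)-\IF^\saa(\bz))]$, I would split it into the two contributions $\EE_{\thete_0}[u(\bz)\IF^\eto(\bz)]$ and $\EE_{\thete_0}[u(\bz)\IF^\saa(\bz)]$ and evaluate each using the closed forms $\IF^\eto(\bz)=-\bV\inv\bSigma\bI\inv\bs_{\thete_0}(\bz)$ and $\IF^\saa(\bz)=-\bV\inv\nabla_\omege c(\omege_{\thete_0},\bz)$. The two key moment identities are: (i) the Fisher information identity $\EE_{\thete_0}[\bs_{\thete_0}(\bz)\bs_{\thete_0}(\bz)^\top]=\bI$, which after substituting $u=\bbeta^\top\bs_{\thete_0}$ yields $\EE_{\thete_0}[u(\bz)\bs_{\thete_0}(\bz)]=\bI\bbeta$; and (ii) the identity $\bSigma=\EE_{\thete_0}[\nabla_\omege c(\omege_{\thete_0},\bz)\bs_{\thete_0}(\bz)^\top]$ from Lemma~\ref{lemma: wtheta diff}, which gives $\EE_{\thete_0}[u(\bz)\nabla_\omege c(\omege_{\thete_0},\bz)]=\bSigma\bbeta$. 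Substituting these, the first contribution becomes $-\bV\inv\bSigma\bI\inv(\bI\bbeta)=-\bV\inv\bSigma\bbeta$ and the second becomes $-\bV\inv(\bSigma\bbeta)$; the two are identical, so their difference $\bb^\eto$ equals $\bzero$. The conceptual content is that both the ETO and SAA influence functions, when integrated against a score-spanned direction, collapse to the \emph{same} vector $-\bV\inv\bSigma\bbeta$, which is exactly what produces the cancellation.

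For $\bb^\ieo$ I would then simply appeal to the bias ordering already established in Theorem~\ref{prop: comparison under severe}, namely $\nbr{\bb^\ieo}_\bV\le\nbr{\bb^\eto}_\bV$. Since $\bb^\eto=\bzero$, this forces $\nbr{\bb^\ieo}_\bV=0$, and because $\bV$ is positive definite (being the Hessian of $v(\cdot,\thete_0)$ at its interior minimizer and invertible by Assumption~\ref{assumption: smoothness}), the induced object is a genuine norm, so $\bb^\ieo=\bzero$. Alternatively, a direct computation works just as cleanly: using the chain rule and Lemma~\ref{lemma: wtheta diff} one writes $\nabla_\thete c(\omege_{\thete_0},\bz)=-\bSigma^\top\bV\inv\nabla_\omege c(\omege_{\thete_0},\bz)$, from which $\bb^\ieo=\bV\inv(\bI-\bSigma\bPhi\inv\bSigma^\top\bV\inv)\bSigma\bbeta$; the bracketed operator annihilates $\bSigma$ precisely because $\bSigma^\top\bV\inv\bSigma=\bPhi$, so that $\bSigma\bPhi\inv\bSigma^\top\bV\inv\bSigma=\bSigma$ and the expression collapses to $\bzero$.

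The computation is short, and the only real obstacle is bookkeeping: because $u$ is scalar-valued, one must track carefully how the scalar $\bbeta^\top\bs_{\thete_0}(\bz)$ pairs with a vector-valued integrand to produce the matrix--vector products $\bI\bbeta$ and $\bSigma\bbeta$. Beyond that, the argument is purely the two moment identities plus the algebraic relation $\bPhi=\bSigma^\top\bV\inv\bSigma$, so I would present the $\bb^\eto$ cancellation in full and then close with the one-line ordering argument for $\bb^\ieo$.
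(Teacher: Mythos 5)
Your proposal is correct and follows essentially the same route as the paper: the paper's proof is exactly your direct computation for $\bb^\eto$, substituting $u=\bbeta^\top\bs_{\thete_0}$ and using the identities $\EE_{\thete_0}[u(\bz)\bs_{\thete_0}(\bz)]=\bI\bbeta$ and $\EE_{\thete_0}[u(\bz)\nabla_\omege c(\omege_{\thete_0},\bz)]=\bSigma\bbeta$ so that both terms collapse to $\bV\inv\bSigma\bbeta$ and cancel. The paper leaves the ``and thus $\bb^\ieo=\bzero$'' step implicit via the ordering $\nbr{\bb^\ieo}_\bV\leq\nbr{\bb^\eto}_\bV$ from Theorem~\ref{prop: comparison under severe}, which is precisely the one-line argument you give (your alternative direct computation using $\bSigma\bPhi\inv\bSigma^\top\bV\inv\bSigma=\bSigma$ is also correct).
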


Theorem \ref{prop: approximately impactless} states that if the misspecification direction is in the linear span of the score function of the imposed model at $\thete_0$, the asymptotic bias is zero for all methods. \cref{fig:two_directions_ball} (b) illustrates such a direction, where ${u}(\bz)$ is tangential to the model $\{P_\theta\}$ at $P_{\thete_0}$. In this case, the interesting direction of misspecification $u(z)$ aligns with the parametric information and couples the influence function of ETO and SAA. As a result, ETO can induce zero decision bias by merely conducting MLE to infer $\thete$, even though the model is misspecified. 

Note that the condition in Theorem \ref{prop: approximately impactless} depends only on the parametric model, but not the downstream optimization problem. Nonetheless, it already allows us to understand and provide examples where a model misspecification can be significant, namely shooting outside the parametric model, yet the impact on the bias of the resulting decision is negligible. We provide an explicit example as follows.
\begin{example}\label{example: shoot out}
    Consider the distribution family $\cbr{P_\thete:\thete\in\RR}$ to be normal distributions with variance $1$, $N(\thete,1)$, where $\thete_0=0$. We define the tilted distribution $Q_t(\bz)$ with density $q_t(\bz)\propto (1+t\bz)_+e^{-\bz^2/2}$. In this case, $Q_t(\bz)$ satisfies Definition \ref{def: local perturbation: QMD} and the conditions in Theorem \ref{prop: approximately impactless}, but $Q_t\not\in\cbr{P_\thete:\thete\in\RR}$.
\end{example}
In Example \ref{example: shoot out}, the parametric family is a normal location family, while at $\thete_0=0$, the perturbed distribution family $Q_t$ is never normally distributed for all $t>0$. The direction of perturbation (misspecification) at $\thete_0$ here is $u(\bz)=\bs_{\thete_0}(\bz)=\bz$. In other words, even if the ground truth distribution of uncertain parameters is complicated, model-based approaches under a simplified and misspecified parametric family can still be employed with a satisfying decision regret performance.
    
\subsection{Mild Misspecification}
Finally, we establish results under mild misspecification.

\begin{theorem}[Asymptotics and Comparisons under Mild Misspecification]\label{prop: asymp under mild}
     Suppose Assumptions \ref{assumption: smoothness}, \ref{assumption: M-estimation regularity} and \ref{assumption: interchangeability} hold. In the mildly misspecified case, under $Q^n$, for $\square\in\{\saa,\eto,\ieo\}$,
    \begin{align*}
        \sqrt{n}(\hat{\omege}^\square-\omege_n^*)&\overset{Q^n}{\to}N^\square \;, \\
        nR_{Q^n}(\hat{\omege}^\square) &\overset{Q^n}{\to}\frac{1}{2} (N^\square)^\top\bV N^\square.
    \end{align*}
Moreover, in terms of bias,
all three estimators have asymptotically zero biases.
In terms of variance,
$\var_{}\rbr{N^\saa}\geq\var_{}\rbr{N^\ieo}\geq\var_{}\rbr{N^\eto}$. In terms of regret, it holds that 
    \begin{align*}
        & \frac{1}{2} (N^\eto)^\top\bV N^\eto\preceq_\st\frac{1}{2} (N^\ieo)^\top\bV N^\ieo\preceq_\st\frac{1}{2} (N^\saa)^\top\bV N^\saa \;.
    \end{align*}

\end{theorem}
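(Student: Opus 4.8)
The plan is to reduce the entire statement to the well-specified asymptotics under $P^n = P_{\thete_0}^{\otimes n}$ and then transfer to $Q^n$ by showing that, in the mild regime, the two product measures are asymptotically indistinguishable. The guiding heuristic is that the misspecification scale $t_n = \Theta(n^{-\alpha})$ with $\alpha > 1/2$ decays strictly faster than the statistical scale $n^{-1/2}$, so after the usual $\sqrt{n}$ rescaling every contribution of the direction ${u}$ is washed out. This is precisely why the bias term $\bb^\square$ that surfaces in the balanced and severe regimes is absent here and the limit law is the centered Gaussian $N^\square$.

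First I would record the first-order (influence function) expansion of each estimator under $P^n$. By the standard $M$-estimation arguments guaranteed by Assumptions~\ref{assumption: smoothness} and \ref{assumption: M-estimation regularity}, for $\square \in \{\saa, \eto, \ieo\}$ one has
\[
  \sqrt{n}(\hat{\omege}^\square - \omege_{\thete_0}) = \frac{1}{\sqrt{n}} \sum_{i=1}^n \IF^\square(\bz_i) + o_{P^n}(1) \overset{P^n}{\to} N^\square,
\]
where the closed forms of $\IF^\square$ follow from Lemma~\ref{lemma: wtheta diff} and the delta method for the two model-based estimators. Next I would locate the moving center $\omege_n^*$: since $Q_{t_n} \to P_{\thete_0}$, expanding the stationarity condition $\nabla_\omege v_n(\omege_n^*) = 0$ through the quadratic-mean-differentiable structure of $Q_{t_n}$ in Definition~\ref{def: local perturbation: QMD} yields $\omege_n^* - \omege_{\thete_0} = O(t_n)$, hence $\sqrt{n}(\omege_n^* - \omege_{\thete_0}) = O(n^{1/2-\alpha}) \to 0$. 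Subtracting, $\sqrt{n}(\hat{\omege}^\square - \omege_n^*)$ has the same $P^n$-limit $N^\square$.

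The crux, and the step I expect to be the main obstacle, is the transfer from $P^n$ to $Q^n$; here I would invoke contiguity theory. Using quadratic mean differentiability, the log-likelihood ratio $\Lambda_n := \sum_{i=1}^n \log (q_{t_n}/p_{\thete_0})(\bz_i)$ expands as $\Lambda_n = t_n \sum_{i=1}^n {u}(\bz_i) - \tfrac12 n t_n^2 \EE_{\thete_0}[{u}^2] + o_{P^n}(1)$. Because $t_n \sum_{i=1}^n {u}(\bz_i) = O_{P^n}(n^{1/2-\alpha})$ and $n t_n^2 = n^{1-2\alpha}$ both vanish when $\alpha > 1/2$, we get $\Lambda_n \overset{P^n}{\to} 0$, so by Le Cam's first lemma $P^n$ and $Q^n$ are mutually contiguous. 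Since the limiting log-likelihood ratio is the degenerate constant $0$, Le Cam's third lemma introduces no shift: the joint weak limit of $(\sqrt{n}(\hat{\omege}^\square - \omege_n^*), \Lambda_n)$ under $P^n$ is $(N^\square, 0)$, and therefore the marginal law of $\sqrt{n}(\hat{\omege}^\square - \omege_n^*)$ is unchanged under $Q^n$. This simultaneously gives the first display and the vanishing of the bias. The delicate point is to establish \emph{joint} (not merely marginal) convergence of the estimator and $\Lambda_n$ under $P^n$, which is what makes the third-lemma transfer legitimate; this is exactly where the influence-function linearization of the previous step does the work.

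For the regret, since $\omege_n^*$ minimizes $v_n$ we have $\nabla_\omege v_n(\omege_n^*) = 0$, so a second-order Taylor expansion gives $n R_{Q^n}(\hat{\omege}^\square) = \tfrac12 [\sqrt{n}(\hat{\omege}^\square - \omege_n^*)]^\top \nabla^2_{\omege\omege} v_n(\xi_n) [\sqrt{n}(\hat{\omege}^\square - \omege_n^*)]$ for an intermediate $\xi_n \to \omege_{\thete_0}$; smoothness together with $Q_{t_n} \to P_{\thete_0}$ forces $\nabla^2_{\omege\omege} v_n(\xi_n) \to \bV$, and the continuous mapping theorem then delivers $n R_{Q^n}(\hat{\omege}^\square) \overset{Q^n}{\to} \tfrac12 (N^\square)^\top \bV N^\square$. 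The variance ordering $\var(N^\saa) \geq \var(N^\ieo) \geq \var(N^\eto)$ concerns exactly the same covariance matrices as in Theorem~\ref{prop: asymp under balanced} and is inherited verbatim from the influence-function comparison there. Finally, the stochastic-dominance chain follows from this covariance ordering by a coupling for quadratic forms of Gaussians: letting $M^\square$ denote a centered Gaussian with covariance $\bV^{1/2} \var(N^\square) \bV^{1/2}$, so that $\tfrac12 (N^\square)^\top \bV N^\square$ has the same law as $\tfrac12 \|M^\square\|^2$, the ordering is preserved under this congruence, and realizing all three $M^\square$ on one common standard Gaussian $Z$ through their symmetric square roots makes $\|M^\eto\|^2 \leq \|M^\ieo\|^2 \leq \|M^\saa\|^2$ hold pointwise, which is the asserted stochastic dominance $\frac{1}{2}(N^\eto)^\top\bV N^\eto\preceq_\st\frac{1}{2}(N^\ieo)^\top\bV N^\ieo\preceq_\st\frac{1}{2}(N^\saa)^\top\bV N^\saa$.
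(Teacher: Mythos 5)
Your proposal is correct and follows essentially the same route as the paper: the influence-function CLT under $P^n$, the observation that $\log(dQ^n/dP^n) = o_{P^n}(1)$ in the mild regime so Le Cam's third lemma transfers the limit to $Q^n$ with no shift, the bound $\sqrt{n}(\omege_n^* - \omege_{\thete_0}) \to 0$ killing the bias, and the second-order Taylor expansion plus continuous mapping for the regret. Your explicit Gaussian coupling argument for the stochastic-dominance chain (and your derivation of $\nabla^2_{\omege\omege} v_n \to \bV$ from smoothness rather than from Assumption~\ref{assumption: statistical order}) fills in details the paper leaves implicit, but the underlying approach is the same.
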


Theorem \ref{prop: asymp under mild} shows that the obtained solutions (consequently also the regret) exhibit asymptotic behaviors in accordance with the universal ordering of ETO best, then IEO, and then SAA. In this regime, the bias is negligible, while the variance is the dominant term, and the regret distribution is related to the variance. The phenomenon also holds in the well-specified regime stated as follows.

\begin{prop}[Asymptotics under Well-Specification\citep{elmachtoub2023estimate}]
\label{prop:  well-specified} In the well-specified case where $Q=P_{\thete_0}$, under Assumptions \ref{assumption: smoothness}, \ref{assumption: M-estimation regularity} and \ref{assumption: interchangeability}, for $\square\in\cbr{\saa,\eto,\ieo}$, we have
$$\sqrt{n}\rbr{\hat{\omege}^{\square}-\omege^*}=\frac{1}{\sqrt{n}}\sum_{i=1}^n\IF^\square(\bz_i)+o_{P^n}(1).$$
    Moreover, $\var\rbr{\IF^\saa(\bz)}\geq\var\rbr{\IF^\ieo(\bz)}\geq\var\rbr{\IF^\eto(\bz)}$.
    If $d_{\thete} = d_{\bw}$ and $\bSigma$ is a square and full-rank matrix, then $\var\rbr{\IF^\saa(\bz)}=\var\rbr{\IF^\ieo(\bz)}$.
\end{prop}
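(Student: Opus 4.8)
The plan is to prove the result in two stages: first establish the asymptotically linear (influence-function) expansions for each of the three estimators, and then compare the resulting limiting covariances.

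For the expansions, I would treat all three as $M$-estimators and invoke standard asymptotic-linearity results \citep{van2000asymptotic}. Under well-specification $Q=P_{\thete_0}$ the population solutions coincide with the truth: SAA targets $\omege^*=\omege_{\thete_0}$, ETO's MLE targets $\thete_0$, and IEO's objective $\thete\mapsto\EE_{\thete_0}[c(\omege_\thete,\bz)]$ is also minimized at $\thete_0$ since $\omege_{\thete_0}$ is the true optimum. First I would Taylor-expand each estimating equation at its population solution: the SAA condition $\frac1n\sum\nabla_\omege c(\omege,\bz_i)=0$ with Hessian $\bV$; the MLE condition $\frac1n\sum\bs_{\thete_0}(\bz_i)=0$ with Hessian equal to the Fisher information $\bI$; and the IEO condition $\frac1n\sum\nabla_\thete c(\omege_\thete,\bz_i)=0$ with Hessian $\bPhi$. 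This produces linear expansions in $\thete$ for ETO and IEO, which I then push through the smooth map $\thete\mapsto\omege_\thete$ by the delta method, using Assumption~\ref{assumption: smoothness}(2) and the Jacobian $\nabla_\thete\omege_\thete|_{\thete_0}=-\bSigma^\top\bV\inv$ from Lemma~\ref{lemma: wtheta diff}. Matching terms reproduces exactly the three stated influence functions, with the $o_{P^n}(1)$ remainders controlled by Assumptions~\ref{assumption: M-estimation regularity}--\ref{assumption: interchangeability}.

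For the covariance comparison I would whiten by $\bV^{1/2}$. Writing $g(\bz)=\nabla_\omege c(\omege_{\thete_0},\bz)$, $\tilde g=\bV^{-1/2}g$, $\bM=\var(\tilde g)$, and $\tilde\bSigma=\bV^{-1/2}\bSigma=\EE_{\thete_0}[\tilde g\,\bs_{\thete_0}^\top]$, one checks $\var(\IF^\saa)=\bV^{-1/2}\bM\bV^{-1/2}$ and $\var(\IF^\ieo)=\bV^{-1/2}\bP\bM\bP\bV^{-1/2}$, where $\bP=\tilde\bSigma\bPhi\inv\tilde\bSigma^\top$ is the orthogonal projector onto $\textup{col}(\tilde\bSigma)$ (note $\bPhi=\tilde\bSigma^\top\tilde\bSigma$). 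Well-specification enters here through the information-matrix equality, which makes the MLE Hessian equal $\bI$, so the ETO sandwich collapses to $\var(\IF^\eto)=\bV^{-1/2}\tilde\bSigma\bI\inv\tilde\bSigma^\top\bV^{-1/2}$. Since the joint covariance of $(\tilde g,\bs_{\thete_0})$ is positive semidefinite, its Schur complement gives $\bM\geq\tilde\bSigma\bI\inv\tilde\bSigma^\top$, i.e.\ $\var(\IF^\saa)\geq\var(\IF^\eto)$; conjugating this by $\bP$ and using $\bP\tilde\bSigma=\tilde\bSigma$ yields $\bP\bM\bP\geq\tilde\bSigma\bI\inv\tilde\bSigma^\top$, i.e.\ $\var(\IF^\ieo)\geq\var(\IF^\eto)$.

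The delicate step, and the main obstacle, is the SAA-versus-IEO comparison. The clean way to see it is geometric: IEO is SAA constrained to the model manifold $\{\omege_\thete\}$, whose tangent space at $\omege_{\thete_0}$ is $T=\textup{col}(\bV\inv\bSigma)$, and one shows $\sqrt n(\hat\omege^\ieo-\omege^*)=\bPi_{\bV}\,\sqrt n(\hat\omege^\saa-\omege^*)+o_{P^n}(1)$ with $\bPi_{\bV}=\bV\inv\bSigma\bPhi\inv\bSigma^\top$ the $\bV$-orthogonal projection onto $T$. Because $\bPi_\bV$ is non-expansive in the $\bV$-norm, the regret $\tfrac12\nbr{\cdot}_{\bV}^2$ of IEO is pointwise dominated by that of SAA; I would therefore phrase the SAA/IEO ordering in this $\bV$-weighted (regret) sense, where it is unconditional, the point being that the IEO/SAA gap is governed by a projection in the $\bV$-metric rather than the Euclidean one. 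Finally, for the equality case, if $d_{\thete}=d_{\bw}$ and $\bSigma$ is square and full-rank then $T=\textup{col}(\bV\inv\bSigma)=\RR^{d_{\bw}}$, so $\bP=\bPi_\bV=\bI$ and $\var(\IF^\ieo)=\var(\IF^\saa)$ exactly. The crux throughout is isolating the two roles of well-specification, namely that the estimands collapse to the truth and that the information-matrix equality holds, and recognizing that the intermediate position of IEO is captured by the tangent-space projection.
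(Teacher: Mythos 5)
Your first stage coincides with the paper's proof in substance: the paper obtains the parameter-level expansions by citing Propositions 2A--2C of \citet{elmachtoub2023estimate} (which are exactly the $M$-estimation linearizations guaranteed by Assumption~\ref{assumption: M-estimation regularity}) and then pushes them through $\thete\mapsto\omege_\thete$ by the delta method with the Jacobian $\nabla_\thete\omege_\thete|_{\thete_0}=-\bSigma^\top\bV\inv$ from Lemma~\ref{lemma: wtheta diff}; your derivation writes out the same argument.

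Your second stage is where you genuinely differ, mostly to your credit. The paper proves none of the variance comparisons itself: it cites Theorem 2 of \citet{elmachtoub2023estimate} for the entire chain. Your whitened Schur-complement argument for $\var(\IF^\eto)\leq\var(\IF^\saa)$, and its conjugation by $\bP$ (using $\bP\tilde\bSigma=\tilde\bSigma$) to get $\var(\IF^\eto)\leq\var(\IF^\ieo)$, are correct and self-contained, and they reproduce exactly the geometry the paper records separately in Theorem~\ref{prop: projection intepretation}, namely $\IF^\ieo=-\bV\inv\bP\nabla_\omege c(\omege_{\thete_0},\bz)$ and $\IF^\eto=-\bV\inv\cT\nabla_\omege c(\omege_{\thete_0},\bz)$. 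Your equality case is also right: a square full-rank $\bSigma$ makes the projector the identity, so $\IF^\ieo=\IF^\saa$ identically.

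The divergence that matters is SAA versus IEO. The proposition asserts the Loewner inequality $\var(\IF^\saa)\geq\var(\IF^\ieo)$, and you do not prove it; you prove instead that the $\bV$-weighted regret of IEO is pointwise dominated by that of SAA via the coupling $N^\ieo=\bPi_\bV N^\saa$. Judged against the literal statement this is a gap, but it is a forced one, because the Loewner claim fails in general: $\bM\geq\bP\bM\bP$ need not hold when $\bM$ and the projector $\bP$ do not commute. Concretely, take $\dt=1$, $\dw=2$, $P_\thete=N(\thete,1)\otimes N(0,1)$ (so $\bs_{\thete_0}(\bz)=z_1$ at $\thete_0=0$) and $c(\omege,\bz)=\tfrac12(w_1-z_1-z_2)^2+\tfrac12(w_2-z_2)^2$. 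All assumptions of the proposition hold, $\bV$ is the identity, $\omege_\thete=(\thete,0)^\top$, and direct computation gives $\hat{\omege}^\saa=(\overline{z_1+z_2},\overline{z_2})^\top$ and $\hat{\omege}^\ieo=(\overline{z_1+z_2},0)^\top$, hence $\var(\IF^\saa)=\left(\begin{smallmatrix}2&1\\1&1\end{smallmatrix}\right)$ and $\var(\IF^\ieo)=\left(\begin{smallmatrix}2&0\\0&0\end{smallmatrix}\right)$; testing the difference against $(1,-1)^\top$ gives $-1$, i.e., the functional $w_1-w_2$ has asymptotic variance $1$ under SAA but $2$ under IEO. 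Regret dominance nevertheless holds pointwise, $\tfrac{n}{2}(\overline{z_1+z_2})^2\leq\tfrac{n}{2}\left[(\overline{z_1+z_2})^2+\overline{z_2}^2\right]$, which is exactly the statement your projection argument delivers. So your reformulation is the version that can actually be established (and is what the regret comparisons downstream truly require); the middle inequality of the proposition, which the paper takes on citation, should be read in this $\bV$-weighted/regret sense rather than as a positive-semidefinite ordering of covariance matrices.
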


\section{Numerical Experiments}
\label{sec: experiments}
In this section, we validate our findings by conducting numerical experiments on the newsvendor problem, a classic example in operations research with non-linear cost objectives. We show and compare the performances of the three data-driven methods in the finite-sample regimes under different local misspecification settings, including different directions and degrees of misspecification. The experimental results in the finite-sample regime are consistent with our asymptotic comparisons. All computations were carried out on a personal desktop computer without GPU acceleration.

The newsvendor problem has the objective function $    c(\omege,\bz)=\ba^\top\rbr{\omege-\bz}^++\bd^\top\rbr{\bz-\omege}^+$,
where for each $j\in[\dz]$: (1) $z^{(j)}$ is the customers' random demand of product $j$; (2) $w^{(j)}$ is the decision variable, the ordering quantity for product $j$; (3) $a^{(j)}$  is the holding cost for product $j$; (4) $d^{(j)}$ is the backlogging cost for product $j$. We assume the random demand for each product are independent and the holding cost and backlogging cost is uniform among all products by setting $a^{(j)}=5$ and $d^{(j)}=1$ for all $j\in[\dz]$.

We describe the local misspecified setting by using the framework of Example \ref{example: semi-parametric Perturbation: exp} and building a model and generating a random demand dataset as follows. We denote the training dataset as $\cbr{z_i^{(j)}}_{i=1}^n$, where $n$ is the training sample size. The model assumes that the demand for each product $j\in[\dz]$ is normally distributed with the distribution $N(j\theta,1)$ where $\theta$ is unknown and needs to be learned. We first describe the well-specified setting, where the demand distribution for product $j$ is $N(3j,1)$. In this case, the probability density function of the random demand of each product is $p_j(z^{(j)})\propto\exp(-(z^{(j)}-3j)^2/2)$. To describe the local misspecification, we need to specify the direction and degree of misspecification, i.e., the expression of ${u}(\bz)$ and $\alpha$ in Section \ref{subsec: local misspecification regime}. We set (1) $\alpha=0.1$ to denote the severely misspecified setting, (2) $\alpha=0.5$ to denote the balanced setting and $\alpha=2$ to denote the mildly misspecified setting.
We discuss two types of directions: (1) ${u}(\bz)=\prod_{j=1}^\dz \rbr{z^{(j)}}^2$; (2) ${u}(\bz)=\prod_{j=1}^\dz \rbr{z^{(j)}-3j}^2/2$. 

We show experimental results in Figure \ref{fig: error bar mv perturbation} - Figure \ref{fig: error bar v perturbation} to support our theoretical results in Section~\ref{sec: main results}, using the mean, median, $25$-th quantile, $75$-th quantile and histograms of the regret. When ${u}(\bz)=\prod_{j=1}^\dz \rbr{z^{(j)}}^2$ and ${u}(\bz)=\prod_{j=1}^\dz \rbr{z^{(j)}-3j}^2/2$, in the mildly specified case, $\eto$ has a lower regret than $\ieo$, and $\ieo$ has a lower regret than $\saa$. However, in the severely misspecified regime, the ordering of the three methods flips. This is consistent with our theoretical comparison results in Theorems \ref{prop: asymp under mild} and \ref{prop: comparison under severe}. In the balanced regime, experimental results show that $\ieo$ has the lowest regret among the three methods. This is also consistent with the theoretical insight in Theorem \ref{prop: comparison under balanced} that $\ieo$ has the advantage of achieving bias-variance trade-off in terms of the decisions and regrets.
\begin{figure}[ht]
    \centering
    \subfigure[Mildly Misspecified]{\includegraphics[width=0.28\textwidth]{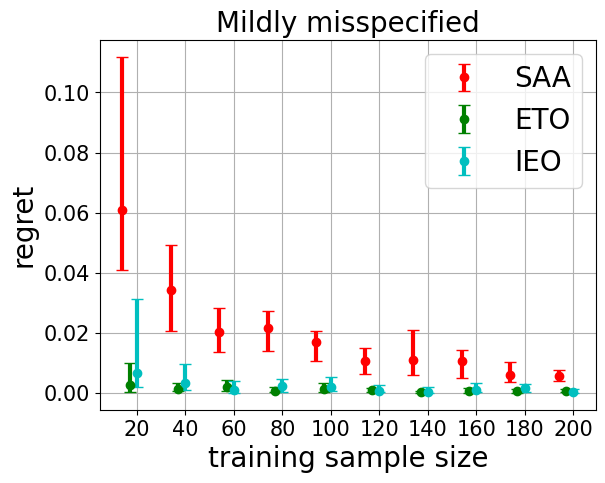}}\hfill
    \subfigure[Balanced]{\includegraphics[width=0.28\textwidth]{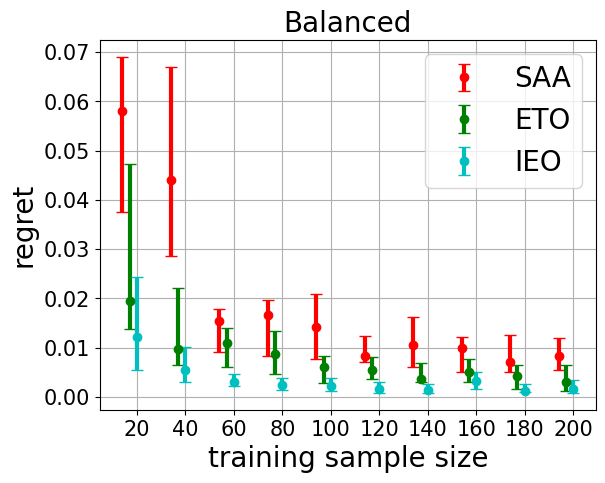}}\hfill
    \subfigure[Severely Misspecified]{\includegraphics[width=0.28\textwidth]{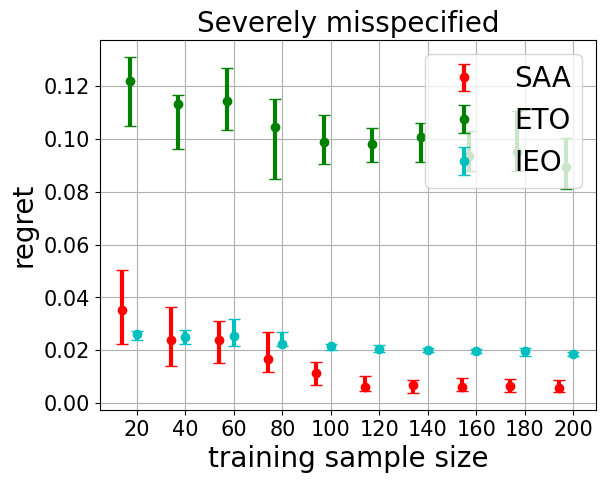}}
    \caption{
    The direction of misspecification satisfies ${u}(\bz)=\prod_{j=1}^\dz \rbr{z^{(j)}}^2$.}
    \label{fig: error bar mv perturbation}
\end{figure}

\begin{figure}[ht]
    \centering
    \subfigure[Mildly Misspecified]{\includegraphics[width=0.29\textwidth]{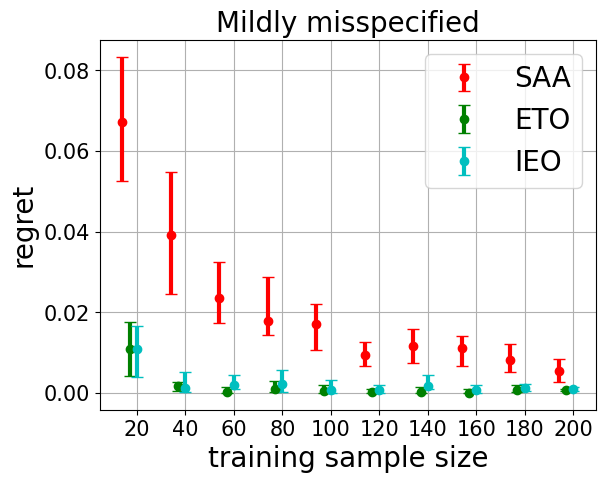}}\hfill
    \subfigure[Balanced]{\includegraphics[width=0.29\textwidth]{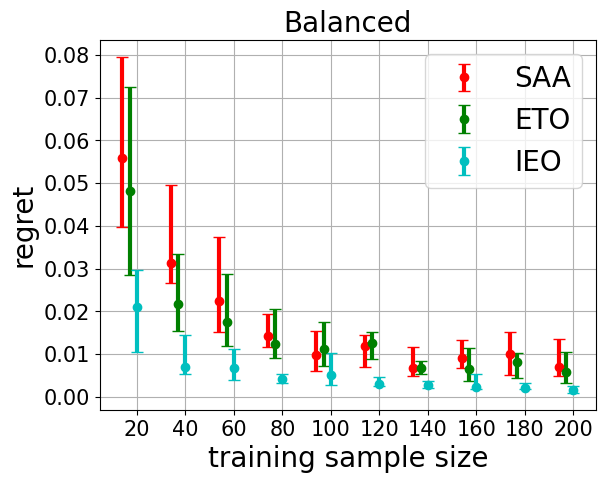}}\hfill
    \subfigure[Severely Misspecified]{\includegraphics[width=0.29\textwidth]{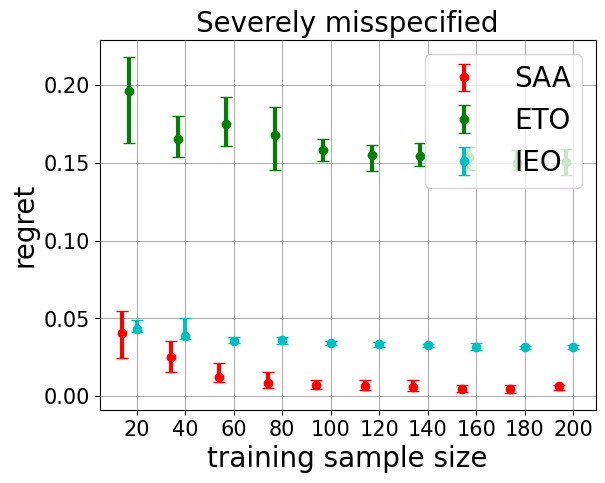}}
    \caption{
    The direction of misspecification satisfies ${u}(\bz)=\prod_{j=1}^\dz \rbr{z^{(j)}-3j}^2/2$.}
    \label{fig: error bar v perturbation}
\end{figure}
\section{Conclusions and Discussions}
\label{sec: conclusion}
In this paper, we present the first results on analyzing local model misspecification in data-driven optimization. Our framework captures scenarios where the model-based approach is nearly well-specified, moving beyond the existing dichotomy of well-specified and misspecified models. We conduct a detailed analysis of the relative performances of SAA, ETO, and IEO, providing insights into their bias, variance, and regret. By classifying local misspecification into three regimes, our analysis illustrates how varying degrees of misspecification impacts performance. In particular, we show that in the balanced misspecification case, ETO exhibits the best variance, SAA exhibits the best bias, while IEO entails a bias-variance tradeoff that can potentially result in lower overall decision errors than both ETO and SAA. Additionally, we derive closed-form expressions for decision bias and variance. From this, we show how the orthogonality between the misspecification direction and the difference of influence functions can lead to bias cancellation, and provide more transparent sufficient condition for such phenomenon in relation to the tangentiality on the score function. Technically, we leverage and generalize tools from contiguity theory in statistics to establish the performance orderings and the clean, interpretable bias and variance expressions. Future research directions include extending our framework to contextual or constrained optimization problems, where challenges like feasibility guarantees and model complexity become increasingly significant.

\begin{ack}
We gratefully acknowledge support from the National Science Foundation grant IIS-2238960, 
the Office of Naval Research awards N00014-22-1-2530 and N00014-23-1-2374,
InnoHK initiative, the Government of the HKSAR, Laboratory for AI-Powered Financial Technologies, and Columbia SEAS Innovation Hub Award. The authors thank the anonymous reviewers for their constructive comments, which have greatly improved the quality of our paper.

\end{ack}
\bibliographystyle{abbrvnat}
\bibliography{Notes/ref}

\newpage
\section{Additional Examples and Techniques Details}
\label{sec: ommited proofs}
\subsection{Examples of Data-Driven Optimization in Operations Research wtih Nonlinear Cost Objectives}
We now give two canonical examples of stochastic optimization problems in operations research with non-linear cost objectives.
\begin{example}
    [Multi-Product Newsvendor Problem]\label{example: newsvendor} The newsvendor problem has the objective function
   $ c(\omege,\bz)=\ba^\top\rbr{\omege-\bz}^++\bd^\top\rbr{\bz-\omege}^+$,
where for each $j\in[\dz]$: (1) $z^{(j)}$ is the customers' random demand of product $j$, ; (2) $w^{(j)}$ is the decision variable, the ordering quantity for product $j$; (3) $a^{(j)}$  is the holding cost for product $j$; (4) $d^{(j)}$ is the backlogging cost for product $j$ and (5) the goal is to minimize the expected total cost.
\end{example}
Consider another classical problem in operations research, the portfolio optimization problem \citep{kallus2022stochastic,grigas2021integrated,elmachtoub2023estimate}.
\begin{example}
    [Portfolio Optimization]
    Let $\dw=\dz+1$ and denote the cost function as $c(\omege,\bz)=\gamma\rbr{\omege^\top(\bz,-1)}^2+\exp\rbr{-\omege^\top\rbr{\bz,0}}$. The decision $\omege$ satisfies $\rbr{w^{(1)},w^{(2)},...,w^{(\dw-1)}}\in\RR^{\dw-1}$, denoting the investment fraction on products $1,2,...\dz(\textup{i.e., } \dw-1)$ and $w^{(\dw)}$ is an auxiliary decision variable. The first component represents the risk (variance) of the portfolio and the second component represents the exponential utility of the portfolio.
\end{example}
\subsection{Further Examples of Local Misspecification}
\begin{example}[Parametric Perturbation: Quadratic Mean Differentiability (QMD) family]\label{example: para perturbaton}
    Suppose $P^n=P_{\thete_0}^{\otimes n}$ for some fixed $\thete_0$. Consider a sequence of vectors in $\RR^{\dt}$, say $\{\bh_n\}_{n=1}^\infty$. Suppose the joint distribution of $\cbr{\bz_i}_{i=1}^n$, $Q^n$, is also in the parametric family, but is of the form $P_{\thete_0+\bh_n}^{\otimes n}$. If there exists a score function  $\dot{\bm{\ell}}_\thete(\bz):\cZ\to\RR^\dt$ with $\EE_{\thete_0}[\dot{\bm{\ell}}_{\thete_0}(\bz)]=\bzero$ such that
    \begin{align*}
    \int\rbr{\sqrt{p_{\thete_0+\bh_n}}-\sqrt{p_{\thete_0}}-\frac{1}{2}\dot{\bm{\ell}}_{\thete_0}^\top\bh_n\sqrt{p_{\thete_0}}}^2d\bz=o(\nbr{\bh_n}^2), \bh_n\to\bzero.
    \end{align*}
    In particular, in our framework, we focus on the case where $\bh_n=\bh/n^\alpha$ for a fixed vector $\bh$. When $\alpha=1/2$, \cite{van2000asymptotic} shows that the likelihood ratio between $Q^n$ and $P^n$ satisfies:
    \begin{align*}
        \log\frac{dQ^n(\bz_1,...,\bz_n)}{dP^n(\bz_1,...,\bz_n)}=\frac{1}{\sqrt{n}}\sum_{i=1}^n \bh^\top\dot{\bm{\ell}}_{\thete_0}(\bz_i)-\frac{1}{2}\bh^\top\bI\bh+o_{P^n}(\nbr{\bh}^2), 
    \end{align*}
    where $\bI:=\EE_{\thete_0}[\dot{\bm{\ell}}_{\thete_0}\dot{\bm{\ell}}_{\thete_0}^\top]$ denotes the Fisher information. In other words, under $P^n$
    \begin{align*}
    \log\frac{dQ^n(\bz_1,...,\bz_n)}{dP^n(\bz_1,...,\bz_n)}\overset{P^n}{\to}N(-\frac{1}{2}\bh^\top\bI\bh,\bh^\top\bI\bh),
    \end{align*}
    where the limiting distribution is a Gaussian distribution with mean $-\frac{1}{2}\bh^\top\bI\bh$ and variance $\bh^\top\bI\bh$.
\end{example}

In the previous example, the ground truth distribution $Q$ is still in $\cbr{P_\thete: \thete\in\Theta}$ but is in the local neighbourhood of $P_{\thete_0}$. The more common and interesting examples are when $Q\not\in\cbr{P_\thete: \thete\in\Theta}$ as discussed in examples below.

\begin{example}  [Semi-parametric Local Perturbation: Part I]\label{example: semi-parametric Perturbation: exp}
Suppose $P_{\thete_0}$ is a given distribution, and ${u}(\bz):\cZ\to\RR$ is an unobserved random variable with $\EE_{\thete_0}[{u}]=0$ and a finite variance $\EE_{\thete_0}[{u}^2]$. For a scalar $t$ in a neighborhood of zero,
we define the tilted distribution of $P_{\thete_0}$, called $Q_{t}$, as
    \begin{align*}
        dQ_{t}(\bz)=\frac{\exp(t {u}(\bz))}{C_{t}}dP_{\thete_0}(\bz)
    \end{align*}
    where $C_{t}=\int \exp(t {u}(\bz))dP_{\thete_0}(\bz)<\infty$ is a normalization constant. 
    Clearly $Q_{t = 0} = P_{\thete_0}$.
\end{example}
    \begin{lemma}[Log Likelihood Ratio Property in Example \ref{example: semi-parametric Perturbation: exp}]\label{lemma: log likelihood ratio property in exp} Under Definition \ref{def: Local Misspecification Regimes}, when $\alpha=1/2$, i.e., $Q^n = Q_{1/\sqrt{n}}^{\otimes n}$, 
    the log-likelihood ratio between $Q^n$ and $P^n$ satisfies:
    $$
        \log\frac{dQ^n(\bz_1,...,\bz_n)}{dP^n(\bz_1,...,\bz_n)}=\frac{1}{\sqrt{n}}\sum_{i=1}^n {u}(\bz_i)-\frac{1}{2}\EE_{\thete_0}[{u}^2]+o_{P^n}(1)
        \;.
    $$
    This implies, {under $P^n$}, 
$
\log\frac{dQ^n(\bz_1,...,\bz_n)}{dP^n(\bz_1,...,\bz_n)}\overset{P^n}{\to}N(-\frac{1}{2}\EE_{\thete_0}[{u}^2],\EE_{\thete_0}[{u}^2]).
$
    \end{lemma}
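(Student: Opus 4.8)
The plan is to exploit the product structure of the measures together with the exponential-family form of the tilt, reducing the claim to a second-order Taylor expansion of the cumulant generating function of ${u}$ followed by a classical central limit theorem. First I would factorize. Since $Q^n = Q_{1/\sqrt{n}}^{\otimes n}$ and $P^n = P_{\thete_0}^{\otimes n}$, and the single-observation Radon--Nikodym derivative is $\frac{dQ_{t}}{dP_{\thete_0}}(\bz) = \exp(t\,{u}(\bz))/C_{t}$, the log-likelihood ratio splits across coordinates as
\begin{align*}
\log\frac{dQ^n}{dP^n} = \sum_{i=1}^n \rbr{ t\,{u}(\bz_i) - \log C_{t}} = \frac{1}{\sqrt{n}}\sum_{i=1}^n {u}(\bz_i) - n\log C_{1/\sqrt{n}},
\end{align*}
with $t = 1/\sqrt{n}$. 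The first term is exactly the leading sum appearing in the claim, so the entire problem reduces to analyzing the deterministic quantity $n\log C_{1/\sqrt{n}}$.

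Next I would Taylor expand the cumulant generating function $K(t):=\log C_{t} = \log\EE_{\thete_0}[\exp(t\,{u})]$ around $t=0$. Writing the moment generating function as $C_{t} = \EE_{\thete_0}[e^{t{u}}]$, one computes $K(0)=0$, then $K'(0)=\EE_{\thete_0}[{u}]=0$ using the zero-mean assumption, and $K''(0)=\var_{\thete_0}({u})=\EE_{\thete_0}[{u}^2]$ using the zero mean once more. Hence $K(t) = \tfrac12 \EE_{\thete_0}[{u}^2]\,t^2 + o(t^2)$, and substituting $t=1/\sqrt{n}$ yields $n\log C_{1/\sqrt{n}} = \tfrac12\EE_{\thete_0}[{u}^2] + o(1)$. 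This remainder is nonrandom, hence trivially $o_{P^n}(1)$, and combining with the factorization delivers the asserted expansion.

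The step that needs care, and which I regard as the main obstacle, is justifying the second-order differentiability of $K$ at the origin, i.e., the interchange of differentiation and expectation that produces $K'(0)=\EE_{\thete_0}[{u}]$ and $K''(0)=\EE_{\thete_0}[{u}^2]$. Finiteness of $\EE_{\thete_0}[{u}^2]$ alone does not guarantee that the moment generating function is finite; what is genuinely needed is that $C_{t}<\infty$ for all $t$ in a neighborhood of $0$, which is implicit in the exponential-tilt construction of Example \ref{example: semi-parametric Perturbation: exp}. Granting this, $K$ is real-analytic on the interior of the region where the moment generating function is finite, differentiation under the integral sign is licensed by dominated convergence, and the cumulant identities above follow from standard moment generating function calculus; the $o(t^2)$ remainder is then controlled by the local boundedness of $K'''$ near the origin.

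Finally I would pass to the limit. Under $P^n$ the observations $\bz_1,\dots,\bz_n$ are i.i.d.\ from $P_{\thete_0}$ with $\EE_{\thete_0}[{u}]=0$ and $\var_{\thete_0}({u})=\EE_{\thete_0}[{u}^2]$, so the classical Lindeberg--L\'evy central limit theorem gives $\frac{1}{\sqrt{n}}\sum_{i=1}^n {u}(\bz_i)\overset{P^n}{\to} N(0,\EE_{\thete_0}[{u}^2])$. Adding the deterministic shift $-\tfrac12\EE_{\thete_0}[{u}^2]$ and absorbing the $o_{P^n}(1)$ remainder via Slutsky's theorem yields $\log\frac{dQ^n}{dP^n}\overset{P^n}{\to} N(-\tfrac12\EE_{\thete_0}[{u}^2],\,\EE_{\thete_0}[{u}^2])$, as claimed.
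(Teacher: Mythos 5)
Your proposal is correct and follows essentially the same route as the paper's proof: factorize the product measure to get $\frac{1}{\sqrt{n}}\sum_i u(\bz_i) - n\log C_{1/\sqrt{n}}$, then Taylor-expand to second order around $t=0$ (the paper expands $C_t$ itself and takes the log afterwards; you expand the cumulant generating function $\log C_t$ directly — an equivalent bookkeeping choice), and conclude with the CLT plus Slutsky. Your added discussion of why differentiation under the integral sign is licensed (finiteness of $C_t$ near the origin, which is built into the exponential-tilt construction) is a point the paper's proof glosses over, so it is a welcome refinement rather than a deviation.
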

    
\begin{example}[Semi-parametric Local Perturbation: Part II]\label{example: Semi-parametric Perturbation: Relu}
    Consider the random variable ${u}(\bz):\cZ\to\RR$ with a zero mean, $\EE_{\thete_0}[{u}]=0$, and finite second moment, say $\EE_{\thete_0}[{u}^2]$. Now we define the tilted distribution:
    \begin{align*}
        dQ_{t}(\bz)=\frac{\sbr{1+t{u}(\bz)}_+}{C_{t}}dP_{\thete_0}(\bz)\ \text{where}\ C_{t}=\int \sbr{1+t {u}(\bz)}_+dP_{\thete_0}(\bz).
    \end{align*}
    In particular, in our framework we focus on the case where $Q=Q_{1/n^\alpha}$ and $Q^n:=Q_{1/n^\alpha}^{\otimes n}$. When $\alpha=1/2$, by \cite{duchi2021contiguity}, the log likelihood ratio satisfies
    \begin{align*}
        \log\frac{dQ^n(\bz_1,...,\bz_n)}{dP^n(\bz_1,...,\bz_n)}=\frac{1}{\sqrt{n}}\sum_{i=1}^n {u}(\bz_i)-\frac{1}{2}\EE_{\thete_0}[{u}^2]+o_{P^n}(1).
    \end{align*}
    In other words, {under $P^n$}, 
    \begin{align*}
\log\frac{dQ^n(\bz_1,...,\bz_n)}{dP^n(\bz_1,...,\bz_n)}\overset{P^n}{\to}N(-\frac{1}{2}\EE_{\thete_0}[{u}^2],\EE_{\thete_0}[{u}^2]).
    \end{align*}
\end{example}
\begin{example}[Semi-parametric Local Perturbation: Part III]\label{example: Semi-parametric Perturbation: g function}
    Consider the function $g:\RR\to[-1,1]$ be any three-times continuously differentiable function where $g(x)=x$ for $x\in[-1/2,1/2]$ and $g'\geq 0$ and the first three derivatives of $g$ are bounded. Consider the random variable ${u}(\bz):\cZ\to\RR$ with a zero mean $\EE_{\thete_0}[{u}]=\bzero$ and finite second moment, say $\EE_{\thete_0}[{u}^2]$. Now, for $\bt\in\RR$, we define the tilted distribution
    \begin{align*}
        dQ_{t}(\bz)=\frac{1+g(t {u}(\bz))}{C_{t}}dP_{\thete_0}(\bz)\ \text{where}\ C_{t}=1+\int g({t} {u}(\bz))dP_{\thete_0}(\bz).
    \end{align*}
    In particular, in our framework we focus on the case where $Q=Q_{1/n^\alpha}$ and $Q^n:=Q_{1/n^\alpha}^{\otimes n}$. When $\alpha=1/2$, by \cite{duchi2021asymptotic}, the log likelihood ratio satisfies the following Property:
    \begin{align*}
        \log\frac{dQ^n(\bz_1,...,\bz_n)}{dP^n(\bz_1,...,\bz_n)}=\frac{1}{\sqrt{n}}\sum_{i=1}^n{u}(\bz_i)-\frac{1}{2}\EE_{\thete_0}[{u}^2]+o_{P^n}(1).
    \end{align*}
    In other words, {under $P^n$}, 
    \begin{align*}
\log\frac{dQ^n(\bz_1,...,\bz_n)}{dP^n(\bz_1,...,\bz_n)}\overset{P^n}{\to}N(-\frac{1}{2}\EE_{\thete_0}[{u}^2],\EE_{\thete_0}[{u}^2]).
    \end{align*}
\end{example}
\begin{example}
    [Semi-parametric Local Perturbation: Part IV (QMD Family)]\label{example: semi-parametric Perturbation: qmd} Consider a scalar function ${u}(\bz):\cZ\to\RR$ with zero mean $\EE_{\thete_0}[{u}]=0$ and finite second order moment $\EE_{\thete_0}[{u}^2]$. 
    We define a tilted distribution $Q_{t}$ for $t\in\RR$ with probability density (mass) function $q_{t}$ with respect to the dominated measure (note that $Q_{t}$ is not necessarily in the parametric family $\cbr{P_\thete: \thete\in\Theta}$) with $q_{\bzero}=p_{\thete_0}$. We further assume the quadratic mean differentiability
\begin{align*}
        \int \rbr{\sqrt{q_{t}}-\sqrt{p_{\thete_0}}-\frac{1}{2}t{u}\sqrt{p_{\thete_0}}}^2d\bz=o(t^2).
\end{align*}
Note that when $q_{0}=p_{\thete_0}$. In particular, in our framework we focus on the case where $Q=Q_{1/n^\alpha}$ and $Q^n:=Q_{1/n^\alpha}^{\otimes n}$. When $\alpha=1/2$, by \cite{duchi2021contiguity}, we have 
\begin{align*}
\log\frac{dQ^n(\bz_1,...,\bz_n)}{dP^n(\bz_1,...,\bz_n)}=\frac{1}{\sqrt{n}}\sum_{i=1}^n {u}(\bz_i)-\frac{1}{2}\EE_{\thete_0}[{u}^2]+o_{P^n}(1).
\end{align*}
Note that Example \ref{example: semi-parametric Perturbation: qmd} is the most general version and includes Example \ref{example: Semi-parametric Perturbation: Relu}-\ref{example: Semi-parametric Perturbation: g function} as particular examples under some mild assumptions. 
\end{example}
\label{subsec: further examples}

\subsection{Additional Technical Details}
\label{subsec: Technical Details}
We introduce standard regularity assumptions for general $M$-estimation problems in asymptotic statistics \citep{van2000asymptotic}, which include our SAA, ETO, and IEO methods as examples.
\begin{assumption}
    [Regularity Assumptions for $M$-estimation]\label{assumption: M-estimation regularity}
    Suppose the i.i.d. random variables $\cbr{\bz_i}_{i=1}^n$ follows a distribution $Q$. Suppose the function $\bz\to m_\zeta(\bz)$ is measurable with respect to $\bz$ for all $\zeta$ and
    \begin{enumerate}
        \item $\sup_{\zeta}\abr{\frac{1}{n}\sum_{i=1}^nm_\zeta(\bz_i)-\EE_Q\sbr{m_\zeta(\bz)}}\overset{p}{\to}0$,
        \item there exists $\zeta^*=\argmax_{\zeta}\EE_Q\sbr{m_\zeta(\bz)}$, for all $\varepsilon>0$,
        $\sup_{\zeta:\nbr{\zeta-\zeta^*}\geq\varepsilon}\EE_Q\sbr{m_\zeta(\bz)}<\EE_Q\sbr{m_{\zeta^*}(\bz)}$,
        \item the mapping $\zeta\to m_\zeta(\bz)$ is differentiable at $\zeta^*$ for $Q$-almost every $\bz$ with derivative $\nabla_{\zeta}m_{\zeta^*}(\bz)$ and such that for every $\zeta_1$ and $\zeta_2$ in a neighbourhood of $\zeta^*$ and a measurable function $K$ with $\EE_Q[K(\bz)^2]<\infty$
        $$\abr{m_{\zeta_1}(\bz)-m_{\zeta_2}(\bz)}\leq K(\bz)\nbr{\zeta_1-\zeta_2}.$$
        \item assume that the mapping $\zeta\to\EE_Q[m_\zeta(\bz)]$ admits a second-order Taylor expansion at a point of maximum $\zeta^*$ with nonsigular symmetric second order matrix $V_{\zeta^*}$.
    \end{enumerate}
    If the random sequence $\hat{\zeta}_n$ satisfies $\frac{1}{n}\sum_{i=1}^nm_{\hat{\zeta}_n}(\bz_i)=\sup_{\zeta}\sum_{i=1}^nm_{\hat{\zeta}}(\bz_i)$, then $\hat{\zeta}_n\overset{p}{\to}\zeta^*$ and
    $$\sqrt{n}\rbr{\hat{\zeta}_n-\zeta^*}=-V_{\zeta^*}\inv\frac{1}{\sqrt{n}}\sum_{i=1}^n\nabla_\zeta m_{\zeta^*}(\bz_i)+o_Q(1).$$
\end{assumption}
Throughout this paper, we assume Assumption \ref{assumption: M-estimation regularity} holds. 
\begin{itemize}
\item For SAA, consider $m_\zeta(\bz)=-c(\omege,\bz)$ with the parameter $\zeta=\omege$. 
\item For ETO, consider $m_\zeta(\bz)=\log p_\thete(\bz)$ with parameter $\zeta=\thete$. 
\item For IEO, consider $m_\zeta(\bz)= c(\omege_\thete,\bz)$ with $\zeta=\thete$.
\end{itemize}
When we say Assumption \ref{assumption: M-estimation regularity} holds, it means that Assumption \ref{assumption: M-estimation regularity} holds for the corresponding $m_\zeta(\bz)$ in SAA, ETO, and IEO.

\begin{assumption}[Interchangeability] \label{assumption: interchangeability}
For any $\thete\in\Theta$ and $\omege\in\Omega$,
$$ 
\nabla_\thete \int \nabla_{\omege} c(\omege,\bz)^\top   p_\thete(\bz)  d\bz =  \int \nabla_{\omege} c(\omege,\bz)^\top \nabla_\thete p_\thete(\bz)  d\bz, $$
$$ 
\int \nabla_{\omege} c(\omege,\bz) p_\thete(\bz)  d\bz|_{\omege=\omege^*} =  \nabla_\omege \int  c(\omege,\bz) p_\thete(\bz) d\bz|_{\omege=\omege^*}$$    
\end{assumption}

The interchangeability condition in Assumption \ref{assumption: interchangeability} is a standard assumption in the Cramer-Rao bound \citep{bickel2015mathematical}. A standard route to check the interchangeability condition is to use the dominated convergence theorem. For instance, we provide a way to check the first interchange equation. If $p_{\thete}(\bz)$ is continuously differentiable with respect to $\thete$, and there exists a real-valued function $q(\bz)$ such that $\int \nabla_{\omege} c(\omege,\bz)^\top q(\bz)  d\bz< +\infty$ and
$\|\nabla_\thete p_{\thete}(\bz)\|_{\infty}\le q(\bz)$, then we have $\nabla_\thete \int \nabla_{\omege} c(\omege,\bz)^\top   p_\thete(\bz)  d\bz =  \int \nabla_{\omege} c(\omege,\bz)^\top \nabla_\thete p_\thete(\bz)  d\bz$. Other sufficient conditions (more delicate but still based on the dominated convergence theorem) can be found in \cite{l1990unified,asmussen2007stochastic,glasserman2004monte}.

Next, we present some auxiliary lemmas that are helpful for deriving our theorems. 

The first is a classic lemma in asymptotic statistics, called Le Cam's third lemma (Example 6.7 in \cite{van2000asymptotic}).
\begin{lemma}
    [Le Cam's third lemma] \label{lemma: Le Cam's third lemma}
    Let $P^n$ and $Q^n$ be sequences of probability measures on measurable spaces $(\Omega_n,\cF_n)$ and let $X_n$ be a sequence of random vectors. Suppose that 
    \begin{align*}
        \left(X_n, \log\frac{dQ^n}{dP^n}\right)\overset{P^n}{\to}N\rbr{
        \begin{pmatrix}
            \bmu\\
            -\frac{1}{2}\sigma^2
        \end{pmatrix},
        \begin{pmatrix}
            \bSigma & \btau\\
            \btau^\top &\sigma^2
        \end{pmatrix}
        },
    \end{align*}
    then
    \begin{align*}
        X_n\overset{Q^n}{\to}N(\bmu+\btau,\bSigma).
    \end{align*}
\end{lemma}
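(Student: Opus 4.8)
The plan is to compute the limiting law of $X_n$ under $Q^n$ via characteristic functions, rewriting every $Q^n$-expectation as a $P^n$-expectation weighted by the likelihood ratio and then invoking the assumed joint convergence under $P^n$. Write $L_n=\log\frac{dQ^n}{dP^n}$ and let $(X,L)$ denote the limiting jointly normal vector, so that $(X_n,L_n)\overset{P^n}{\to}(X,L)$ with $X\sim N(\bmu,\bSigma)$, $L\sim N(-\tfrac12\sigma^2,\sigma^2)$, and $\cov(X,L)=\btau$. Fixing a frequency vector $\bt$ of conformable dimension, the starting identity is $\EE_{Q^n}\sbr{e^{i\bt^\top X_n}}=\EE_{P^n}\sbr{e^{i\bt^\top X_n}e^{L_n}}+o(1)$, where the $o(1)$ absorbs the part of $Q^n$ singular with respect to $P^n$. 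This correction is handled by contiguity: since the limit law of $L_n$ has mean equal to $-\tfrac12$ times its variance, Le Cam's first lemma gives $Q^n\triangleleft P^n$, and because the singular part of $Q^n$ lives on a $P^n$-null set, its mass vanishes, i.e.\ $Q^n\rbr{dP^n=0}\to 0$.

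The technical heart, and the main obstacle, is passing to the limit in $\EE_{P^n}\sbr{e^{i\bt^\top X_n}e^{L_n}}$, since the weight $e^{L_n}$ is unbounded and weak convergence alone does not justify interchanging limit and expectation. I would resolve this by proving uniform integrability of $\cbr{e^{L_n}}$ under $P^n$. By the continuous mapping theorem $e^{L_n}\overset{P^n}{\to}e^{L}$; moreover $\EE_{P^n}[e^{L_n}]=\EE_{P^n}\sbr{\tfrac{dQ^n}{dP^n}}\le 1$, while the limiting mean is $\EE[e^{L}]=\exp\rbr{-\tfrac12\sigma^2+\tfrac12\sigma^2}=1$, so Fatou's inequality for weak limits forces $\EE_{P^n}[e^{L_n}]\to 1$. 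Convergence in distribution of nonnegative variables together with convergence of their means yields uniform integrability of $\cbr{e^{L_n}}$. Since $\abr{e^{i\bt^\top X_n}e^{L_n}}\le e^{L_n}$, the generalized continuous-mapping theorem for uniformly integrable sequences (equivalently, a truncation-plus-Skorokhod argument) then gives $\EE_{P^n}\sbr{e^{i\bt^\top X_n}e^{L_n}}\to\EE\sbr{e^{i\bt^\top X}e^{L}}$.

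It remains to evaluate the Gaussian expectation $\EE\sbr{e^{i\bt^\top X}e^{L}}$. Using the joint normal moment generating/characteristic function with argument $(i\bt,1)$ against mean $(\bmu,-\tfrac12\sigma^2)$ and the stated covariance, a direct computation gives $\EE\sbr{e^{i\bt^\top X+L}}=\exp\rbr{i\bt^\top(\bmu+\btau)-\tfrac12\bt^\top\bSigma\bt}$, where the $-\tfrac12\sigma^2$ and $+\tfrac12\sigma^2$ terms cancel and the cross term $i\bt^\top\btau$ appears precisely because $\btau=\cov(X,L)$. Conceptually, since $\EE[e^{L}]=1$, the tilting $d\tilde\PP=e^{L}d\PP$ is a probability measure under which the Gaussian $X$ has its mean shifted by its covariance with $L$, namely to $\bmu+\btau$, with covariance $\bSigma$ unchanged. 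The right-hand side is exactly the characteristic function of $N(\bmu+\btau,\bSigma)$, so $\EE_{Q^n}\sbr{e^{i\bt^\top X_n}}\to\exp\rbr{i\bt^\top(\bmu+\btau)-\tfrac12\bt^\top\bSigma\bt}$ for every $\bt$, and L\'evy's continuity theorem yields $X_n\overset{Q^n}{\to}N(\bmu+\btau,\bSigma)$, as claimed.
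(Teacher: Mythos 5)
Your proof is correct, but note that the paper itself gives no proof of this lemma: it is stated as a classical result with a pointer to Example 6.7 of \cite{van2000asymptotic}, so the only comparison available is against the textbook argument behind that citation. Your argument is essentially that standard proof, and each step that needs care is handled properly: contiguity $Q^n\triangleleft P^n$ via Le Cam's first lemma (using that the limit of $L_n$ is $N(-\tfrac12\sigma^2,\sigma^2)$, so $\EE[e^{L}]=1$), which makes the singular part of $Q^n$ asymptotically negligible; the passage to the limit in $\EE_{P^n}\sbr{e^{i\bt^\top X_n}e^{L_n}}$, justified by uniform integrability of $\cbr{e^{L_n}}$ obtained from $\EE_{P^n}[e^{L_n}]\le 1$, weak-convergence Fatou giving $\EE_{P^n}[e^{L_n}]\to \EE[e^L]=1$, and the standard fact that convergence in distribution of nonnegative variables together with convergence of means implies uniform integrability; and the Gaussian computation $\EE\sbr{e^{i\bt^\top X+L}}=\exp\rbr{i\bt^\top(\bmu+\btau)-\tfrac12\bt^\top\bSigma\bt}$, which is exactly the characteristic function of $N(\bmu+\btau,\bSigma)$. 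The one genuine (stylistic) difference from van der Vaart's route is that his Theorem 6.6 defines the limit law $B\mapsto\EE\sbr{\ind\cbr{X\in B}e^{L}}$ and verifies weak convergence through the almost-sure (Skorokhod) representation theorem, specializing to the Gaussian case afterward; you instead work directly with characteristic functions and justify the interchange of limit and expectation by the uniform-integrability-from-converging-means lemma. Both are rigorous; yours is more self-contained for this specific Gaussian statement, at the cost of not yielding the general (non-Gaussian) form of the third lemma along the way.
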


We now state a auxiliary lemma about the directional differentiability of the optimal solutions to stochastic optimization problems.
\begin{lemma}
    [Directional differentiability of optimal solutions: Part I] \label{lemma: diff opt solution perturbation} Consider the distribution $Q_t(\bz)$ in Definition \ref{def: local perturbation: QMD}.   Let $$\omege_t:=\argmin_{\omege\in\Omega}\EE_{Q_t}\sbr{c(\omege,\bz)}.$$ 
    Then under Assumptions \ref{assumption: smoothness}, \ref{assumption: M-estimation regularity}, and \ref{assumption: interchangeability},
    \begin{align*}
 \lim_{t\to 0}\frac{1}{t}\rbr{\omege_t-\omege_0}=\EE_{\thete_0}[{u}(\bz)\IF^\saa(\bz)].
    \end{align*}
\end{lemma}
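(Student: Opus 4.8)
The plan is to characterize $\omege_t$ through its first-order optimality condition and carry out an implicit-function-theorem-style sensitivity expansion as $t\to0$. Since $\omege_0=\omege_{\thete_0}$ is an interior minimizer (Assumption~\ref{assumption: M-estimation regularity}), the reference optimality condition reads $\EE_{\thete_0}[\nabla_\omege c(\omege_0,\bz)]=\bzero$ (using the interchangeability in Assumption~\ref{assumption: interchangeability}), and for each small $t$ the perturbed minimizer satisfies $\int \nabla_\omege c(\omege_t,\bz)\,q_t(\bz)\,d\bz=\bzero$. First I would record that the quadratic-mean-differentiability condition forces the Hellinger distance $H(Q_t,P_{\thete_0})=O(t)$, hence $Q_t\to P_{\thete_0}$ in total variation; combined with the uniqueness of the minimizer this yields the consistency $\omege_t\to\omege_0$ as $t\to0$, which anchors the subsequent local expansion.

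Next I would split the perturbed optimality condition into a \emph{decision-perturbation} part and a \emph{measure-perturbation} part by writing $\bzero=A(t)+B(t)$ with $A(t)=\int[\nabla_\omege c(\omege_t,\bz)-\nabla_\omege c(\omege_0,\bz)]\,q_t(\bz)\,d\bz$ and $B(t)=\int \nabla_\omege c(\omege_0,\bz)\,q_t(\bz)\,d\bz$. For $B(t)$, I would subtract the reference condition to get $B(t)=\int \nabla_\omege c(\omege_0,\bz)\,(q_t-p_{\thete_0})\,d\bz$, then insert the QMD expansion $\sqrt{q_t}=\sqrt{p_{\thete_0}}\,(1+\tfrac12 t{u})+r_t$ with $\nbr{r_t}_{L^2(d\bz)}=o(t)$. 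Expanding $q_t-p_{\thete_0}$ and bounding the $r_t$-terms by Cauchy--Schwarz (using $\EE_{\thete_0}\nbr{\nabla_\omege c(\omege_0,\bz)}\sq<\infty$, which holds since $\var(\IF^\saa(\bz))$ is finite) gives $B(t)=t\,\EE_{\thete_0}[{u}(\bz)\nabla_\omege c(\omege_0,\bz)]+o(t)$. For $A(t)$, a first-order Taylor expansion of $\omege\mapsto\nabla_\omege c(\omege,\bz)$ at $\omege_0$, together with $\int \nabla^2_{\omege\omege} c(\omege_0,\bz)\,q_t(\bz)\,d\bz\to\bV$ (by $Q_t\to P_{\thete_0}$ and the integrable envelope from Assumption~\ref{assumption: M-estimation regularity}), yields $A(t)=\bV(\omege_t-\omege_0)+o(\nbr{\omege_t-\omege_0})$.

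Combining the two pieces gives $\bV(\omege_t-\omege_0)+t\,\EE_{\thete_0}[{u}(\bz)\nabla_\omege c(\omege_0,\bz)]+o(\nbr{\omege_t-\omege_0})+o(t)=\bzero$. Because $\bV$ is invertible, I would first bootstrap this identity to conclude $\nbr{\omege_t-\omege_0}=O(t)$, which turns the $o(\nbr{\omege_t-\omege_0})$ remainder into $o(t)$; dividing by $t$ and letting $t\to0$ then yields
\[
\lim_{t\to0}\frac1t(\omege_t-\omege_0)=-\bV\inv\,\EE_{\thete_0}[{u}(\bz)\nabla_\omege c(\omege_0,\bz)]=\EE_{\thete_0}[{u}(\bz)\IF^\saa(\bz)],
\]
the last equality being the definition $\IF^\saa(\bz)=-\bV\inv\nabla_\omege c(\omege_{\thete_0},\bz)$.

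The main obstacle is the rigorous handling of the measure-perturbation term $B(t)$: the QMD condition controls only $\sqrt{q_t}$ in $L^2$, not the pointwise derivative of $q_t$, so the linearization $q_t-p_{\thete_0}=t{u}\,p_{\thete_0}+o(t)$ and its integration against $\nabla_\omege c(\omege_0,\cdot)$ must be justified through the Hellinger geometry and a Cauchy--Schwarz estimate against a square-integrable envelope, in the spirit of differentiability-in-quadratic-mean arguments in classical LAN theory. Closing the bootstrap for $\nbr{\omege_t-\omege_0}=O(t)$ and establishing $\int \nabla^2_{\omege\omege} c(\omege_0,\bz)\,q_t\,d\bz\to\bV$ under the vanishing measure perturbation similarly rely on the domination/Lipschitz envelope supplied by Assumption~\ref{assumption: M-estimation regularity}, which is what makes the interchange of limit, differentiation, and integration legitimate.
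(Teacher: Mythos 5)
Your proof is correct in substance but takes a genuinely different route from the paper. The paper's proof is a two-line application of the implicit function theorem: it writes $\cG(\omege_t,t)=\nabla_\omege \EE_{Q_t}[c(\omege,\bz)]|_{\omege=\omege_t}=\bzero$, inverts $\nabla_\omege\cG=\bV$, and computes $\tfrac{\partial}{\partial t}\nabla_\omege v(\omege_{\thete_0},Q_t)|_{t=0}$ by interchanging $\partial_t$ with the integral and using the \emph{pointwise} a.e.\ derivative $\tfrac{\partial}{\partial t}q_t|_{t=0}=q_0 u$ that Definition~\ref{def: local perturbation: QMD} assumes on top of QMD; this is fast but leaves the interchange of differentiation and integration unjustified. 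You instead never use the pointwise density derivative: you linearize the first-order optimality condition by hand, control the measure-perturbation term $B(t)$ purely through the $L^2$/Hellinger content of the QMD condition (writing $q_t-p_{\thete_0}=(\sqrt{q_t}-\sqrt{p_{\thete_0}})(\sqrt{q_t}+\sqrt{p_{\thete_0}})$ and applying Cauchy--Schwarz), and close the argument with a consistency step and a bootstrap to get $\nbr{\omege_t-\omege_0}=O(t)$ --- steps that the paper's IFT invocation packages away. Your route is more faithful to what QMD actually delivers and closer to classical LAN-theory rigor; its price is the extra machinery (consistency, bootstrap) and slightly stronger moment conditions than you acknowledge: the Cauchy--Schwarz bound on the cross term needs $\EE_{\thete_0}\bigl[\nbr{\nabla_\omege c(\omege_0,\bz)}^2 u(\bz)^2\bigr]<\infty$, not merely $\EE_{\thete_0}\nbr{\nabla_\omege c(\omege_0,\bz)}^2<\infty$, and the remainder term needs $\EE_{Q_t}\nbr{\nabla_\omege c(\omege_0,\bz)}^2$ to stay bounded as $t\to0$. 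These caveats are of the same order of informality as the paper's own unjustified interchange, so neither proof is strictly more complete; they simply pay their technical debts in different places.
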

Equipped with the lemma above, we can get the convergence of $n^\alpha\rbr{\omege_n^*-\omege_{\thete_0}}$ under the three locally misspecified regimes:
\begin{align*}
    \lim_{n\to\infty}n^\alpha\rbr{\omege_n^*-\omege_{\thete_0}}=\lim_{t\to 0}\frac{1}{t}\rbr{\omege_t-\omege_0}=\EE_{\thete_0}[{u}(\bz)\IF^\saa(\bz)].
\end{align*}
\begin{proof}
    [Proof of Lemma \ref{lemma: diff opt solution perturbation}]
     Note that $\omege_n^*$ is the minimizer of $\EE\sbr{c(\omege,\bz)}$ under $Q^n$ while $\omege_{\thete_0}$ under $P^n$. We will use the directional differentiablity of optimal solution to derive this fact.

We denote $v(\omege,Q_t)$ as $\EE_{Q_t}[c(\omege,\bz)]$, $\cG(\omege,t):=\nabla_\omege v(\omege,Q_t)$ and $\omege_t:=\argmin v(\omege,Q_t)$. Note that $\cG(\omege_t,t)=0$ for all $t$. By implicit function theorem,
\begin{align*}
    \lim_{t\to 0}\frac{1}{t}\rbr{\omege_t-\omege_0}&=-[\nabla_\omege \cG(\omege_{\thete_0},0)]^{-1}\frac{\partial}{\partial t}\nabla_\omege v(\omege_{\thete_0},Q_t)|_{t=0}\\
    &=-\nabla_{\omege\omege}\EE_{\thete_0}[c(\omege_{\thete_0},\bz)]\frac{\partial}{\partial t}\nabla_\omege v(\omege_{\thete_0},Q_t)|_{t=0}\\
    &=-\bV^{-1}\frac{\partial}{\partial t}\int \nabla_\omege c(\omege_{\thete_0},\bz)dQ_t(\bz)|_{t=0}\\
    &=-\bV^{-1}\int \nabla_\omege c(\omege_{\thete_0},\bz)\frac{\partial}{\partial t}dQ_t(\bz)|_{t=0}.
\end{align*}

From Definition \ref{def: local perturbation: QMD}, we know that for almost every $\bz$, $\frac{\partial}{\partial t}\log q_t(\bz)|_{t=0}=u(\bz)$. Hence, we have for almost every $\bz$,
\begin{align}\label{eqn: partial derivative density}
    \frac{\partial}{\partial t}q_t(\bz)\Bigm|_{t=0}=q_0(\bz) {u}(\bz).
\end{align}
In conclusion,
\begin{align*}
    &-\bV^{-1}\int \nabla_\omege c(\omege_{\thete_0},\bz)\frac{\partial}{\partial t}dQ_t(\bz)|_{t=0}\\
    &=-\bV^{-1}\int \nabla_\omege c(\omege_{\thete_0},\bz)\cbr{q_0(\bz){u}(\bz)}\dbz
\end{align*}
Since
$$\int \nabla_\omege c(\omege_{\thete_0},\bz)\sbr{\int q_0(\bz){u}(\bz)\dbz}q_0(\bz)\dbz=\sbr{\int q_0(\bz){u}(\bz)\dbz}\nabla_\omege\EE_{\thete_0}[c(\omege_{\thete_0},\bz)]=0$$
and
$$\int\nabla_\omege c(\omege_{\thete_0},\bz)q_0(\bz){u}(\bz)\dbz=\EE_{\thete_0}[{u}(\bz)\nabla_\omege c(\omege_{\thete_0},\bz)],$$
we have $$-\bV^{-1}\int \nabla_\omege c(\omege_{\thete_0},\bz)\frac{\partial}{\partial t}dQ_t(\bz)|_{t=0}=-\bV^{-1}\EE_{\thete_0}[{u}(\bz)\nabla_\omege c(\omege_{\thete_0},\bz)]=\EE_{\thete_0}[{u}(\bz)\IF^\saa(\bz)].$$
Therefore,
$$\lim_{n\to\infty}\sqrt{n}(\omege_n^*-\omege_{\thete_0})= \lim_{t\to 0}\frac{1}{t}\rbr{\omege_t-\omege_0}=\EE_{\thete_0}[{u}(\bz)\IF^\saa(\bz)].$$
{\color{black}
More generally, under severely and mildly specified regime, we have further
\begin{align*}
    \lim_{n\to\infty}n^{\alpha}(\omege^*_n-\omege_{\thete_0})=\EE_{\thete_0}[{u}(\bz)\IF^\saa(\bz)].
\end{align*}
}
\end{proof}
We note that Lemma \ref{lemma: diff opt solution perturbation} holds for Example \ref{example: semi-parametric Perturbation: exp}. To be more specific,
\begin{align*}
    q_t(\bz)=\frac{\exp(t{u}(\bz))}{C_{t}}q_0(\bz)\ \text{where}\ C_{t}=\int \exp(t{u}(\bz))q_0(\bz)\dbz.
\end{align*}
Therefore, the derivative of $q_t(\bz)$ with respect to $t$ is
\begin{align*}
    \frac{\partial}{\partial t}q_t(\bz)&=\frac{q_0(\bz)\exp(t{u}(\bz)){u}(\bz)\sbr{\int \exp(t{u}(\bz))q_0(\bz)\dbz}}{\sbr{\int \exp(t{u}(\bz))q_0(\bz)\dbz}^2}\\
    &\quad-\frac{\sbr{\int \exp(t{u}(\bz))q_0(\bz){u}(\bz)\dbz}q_0(\bz)\exp(t{u}(\bz))}{\sbr{\int \exp(t{u}(\bz))q_0(\bz)\dbz}^2}.
\end{align*}
At $t=0$, since $\EE_{\thete_0}[u]=0$, we have for almost every $\bz$,
\begin{align*}
    \frac{\partial}{\partial t}q_t(\bz)\Bigm|_{t=0}=q_0(\bz) {u}(\bz)-\sbr{\int q_0(\bz){u}(\bz)\dbz}q_0(\bz)=q_0(z)u(z).
\end{align*}
It is also possible to extend the result to other examples under additional regularity assumptions.

For Example \ref{example: Semi-parametric Perturbation: Relu}, the result still holds. Recall that 
\begin{align*}
    q_t(\bz)= \frac{\sbr{1+t{u}(\bz)}_+}{C_{t}}q_0(\bz), \qquad C_{t}=\int \sbr{1+t{u}(\bz)}_+q_0(\bz)\dbz.
\end{align*}
Hence,
\begin{align*}
    \frac{\partial}{\partial t}q_t(\bz)|_{t=0}&=q_0(\bz)\frac{C_{t}\frac{\partial}{\partial t}\sbr{1+t{u}(\bz)}_+-\sbr{1+t{u}(\bz)}_+\frac{\partial}{\partial t}C_{t}}
    {C_{t}^2}\Big|_{t=0}\\
    &=q_0(\bz)\rbr{\frac{\partial}{\partial t}\sbr{1+t{u}(\bz)}_+\Big|_{t=0}-\frac{\partial}{\partial t}C_{t}\Big|_{t=0}}\\
    &=q_0(\bz)\rbr{{u}(\bz)\ind\cbr{1+t{u}(\bz)\geq 0}\Big|_{t=0}-\int {u}(\bz)\ind\cbr{1+t{u}(\bz)\geq 0}\Big|_{t=0}q_0(\bz)\dbz}\\
    &=q_0(\bz)\rbr{{u}(\bz)-\int {u}(\bz)q_0(\bz)\dbz}.
\end{align*}
The result is the same as \eqref{eqn: partial derivative density} and the conclusion of Lemma \ref{lemma: diff opt solution perturbation} still holds.

For Example \ref{example: Semi-parametric Perturbation: g function}, the result still holds. Recall that 
\begin{align*}
    q_t(\bz)= \frac{1+g(t{u}(\bz))}{C_{t}}q_0(\bz), \qquad C_{t}=\int \rbr{1+g(t{u}(\bz))}q_0(\bz)\dbz.
\end{align*}
Hence, by noting that $g'(0)=1$,
\begin{align*}
    \frac{\partial}{\partial t}q_t(\bz)|_{t=0}&=q_0(\bz)\frac{C_{t}\frac{\partial}{\partial t}\rbr{1+g(t{u}(\bz))}-\rbr{1+g(t{u}(\bz))}\frac{\partial}{\partial t}C_{t}}
    {C_{t}^2}\Big|_{t=0}\\
    &=q_0(\bz)\rbr{\frac{\partial}{\partial t}\rbr{1+g(t{u}(\bz))}\Big|_{t=0}-\frac{\partial}{\partial t}C_{t}\Big|_{t=0}}\\
    &=q_0(\bz)\rbr{{u}(\bz)g'(0)-\int {u}(\bz)g'(0)q_0(\bz)\dbz}\\
    &=q_0(\bz)\rbr{{u}(\bz)-\int {u}(\bz)q_0(\bz)\dbz}.
\end{align*}
The result is the same as \eqref{eqn: partial derivative density} and the conclusion of Lemma \ref{lemma: diff opt solution perturbation} still holds.

We provide another auxiliary lemma similar to Lemma \ref{lemma: diff opt solution perturbation}.
\begin{lemma}
    [Directional differentiability of optimal solutions: Part II]  \label{lemma: diff opt solution perturbation II}Consider the distribution $Q_t$ in Definition \ref{def: local perturbation: QMD}
    where $Q_0=P_{\thete_0}$. We denote
    \begin{align*}
    \thete_{t}^\kl&:=\argmax_{\thete\in\Theta}\EE_{Q_t}[\log p_\thete(\bz)],\\
    \thete_{t}^*&:=\argmin_{\thete\in\Theta}\EE_{Q_t}[c(\omege_\thete,\bz)].
\end{align*}
Then under Assumptions \ref{assumption: smoothness}, \ref{assumption: M-estimation regularity}, and \ref{assumption: interchangeability}, we have
    \begin{align*}
 \nabla_{t}{\thete_{t}^\kl}:=\lim_{t\to 0}\frac{1}{t}\rbr{\thete_{t}^\kl-\thete_0}&=\EE_{\thete_0}[{u}(\bz)\IF^\eto(\bz)],\\
  \nabla_{t}{\thete_{t}^*}:=\lim_{t\to 0}\frac{1}{t}\rbr{\thete_{t}^*-\thete_0}&=\EE_{\thete_0}[{u}(\bz)\IF^\ieo(\bz)].
    \end{align*}
\end{lemma}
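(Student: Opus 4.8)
The plan is to view both $\thete_t^\kl$ and $\thete_t^*$ as roots of smoothly $t$-parametrized stationarity equations and to apply the implicit function theorem, mirroring the proof of Lemma~\ref{lemma: diff opt solution perturbation} but now differentiating in the model parameter $\thete$ rather than in the decision $\omege$. First I would pin down the value at $t=0$: since $Q_0=P_{\thete_0}$, the population MLE $\argmax_\thete\EE_{\thete_0}[\log p_\thete(\bz)]$ and the population IEO problem $\argmin_\thete\EE_{\thete_0}[c(\omege_\thete,\bz)]$ are both solved at $\thete_0$ (the first because the Kullback--Leibler projection is exact under well-specification, the second because $\omege_{\thete_0}$ already minimizes $\EE_{\thete_0}[c(\cdot,\bz)]$, so that $\thete\mapsto\EE_{\thete_0}[c(\omege_\thete,\bz)]$ is stationary at $\thete_0$). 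Assumption~\ref{assumption: M-estimation regularity} supplies uniqueness and interiority of these maximizers/minimizers, giving $\thete_t^\kl,\thete_t^*\to\thete_0$ and a local differentiable branch through $\thete_0$ as $t\to0$.

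For the ETO direction I would set $\Psi_1(\thete,t):=\EE_{Q_t}[\bs_\thete(\bz)]$, so that $\Psi_1(\thete_t^\kl,t)=\bzero$, and differentiate at $t=0$. Two ingredients are needed. The parameter Jacobian is $\nabla_\thete\Psi_1(\thete_0,0)=\EE_{\thete_0}[\nabla_\thete^2\log p_{\thete_0}(\bz)]=-\bI$, by the information (second Bartlett) identity, and it is invertible by Assumption~\ref{assumption: smoothness}(3). The $t$-derivative is $\partial_t\Psi_1(\thete_0,t)|_{t=0}=\int\bs_{\thete_0}(\bz)\,\partial_t q_t(\bz)|_{t=0}\,d\bz=\EE_{\thete_0}[{u}(\bz)\bs_{\thete_0}(\bz)]$, using the pointwise identity $\partial_t q_t|_{t=0}=q_0\,{u}$ established in \eqref{eqn: partial derivative density}. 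The implicit function theorem then delivers $\nabla_t\thete_t^\kl=\bI\inv\EE_{\thete_0}[{u}(\bz)\bs_{\thete_0}(\bz)]$, i.e.\ the perturbation direction integrated against the ETO influence function, which is the asserted first identity.

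The IEO direction is handled identically with $\Psi_2(\thete,t):=\EE_{Q_t}[\nabla_\thete c(\omege_\thete,\bz)]$ and $\Psi_2(\thete_t^*,t)=\bzero$. Here the parameter Jacobian at $(\thete_0,0)$ is $\nabla_\thete\Psi_2(\thete_0,0)=\nabla_{\thete\thete}^2\EE_{\thete_0}[c(\omege_\thete,\bz)]|_{\thete_0}=\bPhi$, which by Lemma~\ref{lemma: wtheta diff} equals $\bSigma^\top\bV\inv\bSigma$ and is therefore invertible; the $t$-derivative is $\partial_t\Psi_2(\thete_0,t)|_{t=0}=\EE_{\thete_0}[{u}(\bz)\nabla_\thete c(\omege_{\thete_0},\bz)]$, again via \eqref{eqn: partial derivative density}. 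The implicit function theorem yields $\nabla_t\thete_t^*=-\bPhi\inv\EE_{\thete_0}[{u}(\bz)\nabla_\thete c(\omege_{\thete_0},\bz)]$, i.e.\ the perturbation direction integrated against the IEO influence function, which is the asserted second identity.

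The main obstacle is rigorously justifying the two differentiation-under-the-integral steps. The $t$-interchange $\partial_t\int f(\bz)q_t(\bz)\,d\bz=\int f(\bz)\,\partial_t q_t(\bz)\,d\bz$ for $f\in\{\bs_{\thete_0},\nabla_\thete c(\omege_{\thete_0},\cdot)\}$ must be controlled from the quadratic-mean-differentiability structure of Definition~\ref{def: local perturbation: QMD}: one writes $\sqrt{q_t}=\sqrt{p_{\thete_0}}(1+\tfrac12 t{u}+o(t))$ in $L^2$ and pairs the resulting affine expansion of $q_t$ against $f$, whose finite second moment keeps the remainder $o(t)$, exactly as in the analogous step of Lemma~\ref{lemma: diff opt solution perturbation}. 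The $\thete$-interchanges that produce $-\bI$ and $\bPhi$ are precisely the content of Assumption~\ref{assumption: interchangeability} together with the $C^2$ smoothness of $v(\omege,\thete)$ and of $\thete\mapsto\omege_\thete$ in Assumption~\ref{assumption: smoothness}, and the nonsingularity of $\bI$ and $\bPhi$ needed to invert the Jacobians comes from Assumptions~\ref{assumption: smoothness}(1),(3). With these interchanges in hand the computation is mechanical, so the entire difficulty is concentrated in this measure-theoretic bookkeeping.
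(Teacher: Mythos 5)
Your proposal is correct and follows essentially the same route as the paper's proof: a stationarity equation in $\thete$ for each of $\thete_t^\kl$ and $\thete_t^*$, the implicit function theorem with Jacobians $-\bI$ (information identity) and $\bPhi=\bSigma^\top\bV\inv\bSigma$, and the pointwise derivative $\partial_t q_t|_{t=0}=q_0\,{u}$ from \eqref{eqn: partial derivative density} for the $t$-derivative of the moment conditions. Note that your final expressions $\bI\inv\EE_{\thete_0}[{u}(\bz)\bs_{\thete_0}(\bz)]$ and $-\bPhi\inv\EE_{\thete_0}[{u}(\bz)\nabla_\thete c(\omege_{\thete_0},\bz)]$ are exactly what the paper's proof also delivers (these live in $\RR^{\dt}$ and only yield the $\RR^{\dw}$-valued quantities $\EE_{\thete_0}[{u}\,\IF^\square]$ after composing with $\nabla_\thete\omege_\thete=-\bSigma^\top\bV\inv$, as done later in the proof of Theorem~\ref{prop: asymp under severe}), so you have reproduced the paper's argument faithfully, including its identification of the $\thete$-space derivative with the influence-function expression in the lemma statement.
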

\begin{proof}
    [Proof of Lemma \ref{lemma: diff opt solution perturbation II}]
    We denote $v^\kl(\thete,Q_t)$ as $\EE_{Q_t}[\log p_\thete(\bz)]$, $\cG^\kl(\thete,t):=\nabla_\thete v^\kl(\thete,Q_t)$ and $\thete_t^\kl:=\argmin v^\kl(\thete,Q_t)$. Note that $\cG^\kl(\thete_t^\kl,t)=0$ for all $t$. By implicit function theorem,
\begin{align*}
    \lim_{t\to 0}\frac{1}{t}\rbr{\thete_t^\kl-\thete_0}&=-[\nabla_\thete \cG^\kl(\thete_0,0)]^{-1}\frac{\partial}{\partial t}\nabla_\thete v^\kl(\thete_0,Q_t)|_{t=0}\\
    &=-\nabla_{\thete\thete}\EE_{\thete_0}[\log p_{\thete_0}(\bz)]^{-1}\frac{\partial}{\partial t}\nabla_\thete v^\kl(\thete_0,Q_t)|_{t=0}\\
    &=\bI^{-1}\frac{\partial}{\partial t}\int \bs_{\thete_0}(\bz)dQ_t(\bz)|_{t=0}\\
    &=\bI^{-1}\int \bs_{\thete_0}(\bz)\frac{\partial}{\partial t}dQ_t(\bz)|_{t=0}.
\end{align*}
At $t=0$, by \eqref{eqn: partial derivative density},
\begin{align*}
    \frac{\partial}{\partial t}q_t(\bz)\Bigm|_{t=0}=q_0(\bz) {u}(\bz).
\end{align*}
In conclusion,
\begin{align*}
    &\bI^{-1}\int \bs_{\thete_0}(\bz)\frac{\partial}{\partial t}dQ_t(\bz)|_{t=0}\\
    =&\bI^{-1}\int \bs_{\thete_0}(\bz)\cbr{q_0(\bz)h^\top {u}(\bz)}\dbz.
\end{align*}
Since
$$\int \bs_{\thete_0}(\bz)\sbr{\int q_0(\bz){u}(\bz)\dbz}q_0(\bz)\dbz=\sbr{\int q_0(\bz){u}(\bz)\dbz}\EE_{\thete_0}\sbr{\bs_{\thete_0}(\bz)}=\bzero$$
and
$$\int \bs_{\thete_0}(\bz)q_0(\bz){u}(\bz)\dbz=\EE_{\thete_0}[{u}(\bz)\bs_{\thete_0}(\bz)],$$
we have $$ \lim_{t\to 0}\frac{1}{t}\rbr{\thete_t^\kl-\thete_0}=\bI^{-1}\int \bs_{\thete_0}(\bz)\frac{\partial}{\partial t}dQ_t(\bz)|_{t=0}=\bI^{-1}\EE_{\thete_0}[{u}(\bz)\bs_{\thete_0}(\bz)].$$
Similarly,  we denote $v^*(\thete,Q_t)$ as $\EE_{Q_t}[c(\omege_\thete,\bz)]$, $\cG^*(\thete,t):=\nabla_\thete v^*(\thete,Q_t)$ and $\thete_t^*:=\argmin v^*(\thete,Q_t)$. Note that $\cG^*(\thete_t^*,t)=0$ for all $t$. By implicit function theorem,
\begin{align*}
    \lim_{t\to 0}\frac{1}{t}\rbr{\thete_t^*-\thete_0}&=-[\nabla_\thete \cG^*(\thete_0,0)]^{-1}\frac{\partial}{\partial t}\nabla_\thete v^*(\thete_0,Q_t)|_{t=0}\\
    &=-\nabla_{\thete\thete}\EE_{\thete_0}[c(\omege_{\thete_0},\bz)]^{-1}\frac{\partial}{\partial t}\nabla_\thete v^*(\thete_0,Q_t)|_{t=0}\\
    &=-\bPhi^{-1}\frac{\partial}{\partial t}\int \nabla_\thete c(\omege_{\thete_0},\bz)dQ_t(\bz)|_{t=0}\\
    &=-\bPhi^{-1}\int \nabla_\thete c(\omege_{\thete_0},\bz)\frac{\partial}{\partial t}dQ_t(\bz)|_{t=0}.
\end{align*}
At $t=0$, by \eqref{eqn: partial derivative density},
\begin{align*}
    \frac{\partial}{\partial t}q_t(\bz)\Bigm|_{t=0}=q_0(\bz){u}(\bz).
\end{align*}
In conclusion,
\begin{align*}
    &-\bPhi^{-1}\int \nabla_\thete c(\omege_{\thete_0},\bz)\frac{\partial}{\partial t}dQ_t(\bz)|_{t=0}\\
    =&-\bPhi^{-1}\int \nabla_\thete c(\omege_{\thete_0},\bz)\cbr{q_0(\bz) {u}(\bz)}\dbz.
\end{align*}
Since
$$\int \nabla_\thete c(\omege_{\thete_0},\bz)q_0(\bz){u}(\bz)\dbz=\EE_{\thete_0}[{u}(\bz)\nabla_\thete c(\omege_{\thete_0},\bz)],$$
we have $$ \lim_{t\to 0}\frac{1}{t}\rbr{\thete_t^*-\thete_0}=-\bPhi^{-1}\int \nabla_\thete c(\omege_{\thete_0},\bz)\frac{\partial}{\partial t}dQ_t(\bz)|_{t=0}=-\bPhi^{-1}\EE_{\thete_0}[{u}(\bz)\nabla_\thete c(\omege_{\thete_0},\bz)].$$
\end{proof}

We remark that Lemma \ref{lemma: diff opt solution perturbation II} also holds for $Q_t$ in Example \ref{example: Semi-parametric Perturbation: Relu} and \ref{example: Semi-parametric Perturbation: g function}.

\section{Proofs}
In this section, we supplement the proof of the results in this paper.
\begin{proof}
[Proof of Theorem \ref{prop: asymp under mild}] 
We first notice the fact that, in the mildly misspecified regime, by defining $h_n=1/(n^{\alpha-1/2})=o(1)$, we have
    \begin{align*}
\log\frac{dQ^n(\bz_1,...,\bz_n)}{dP^n(\bz_1,...,\bz_n)}=\frac{1}{\sqrt{n}}\sum_{i=1}^n  h_n{u}(\bz_i)-\frac{1}{2}\EE_{\thete_0}[{u}^2]h_n^2+o_{P^n}(h_n)=o_{P^n}(1).
\end{align*}
In the mild misspecified case, under $P^n$, we have a joint central limit theorem
\begin{align*}
    \begin{bmatrix}
        \sqrt{n}(\hat{\omege}^\square-\omege_{\thete_0})\\
        \log\frac{dQ^n}{dP^n}
    \end{bmatrix}\overset{P^n}{\to}
    N\begin{pmatrix}
        \begin{bmatrix}
            0\\
            0
        \end{bmatrix},
        \begin{bmatrix}
            \var_{\thete_0}(\IF^\square(\bz))&0\\
            0&0
        \end{bmatrix}
    \end{pmatrix}.
\end{align*}
Using LeCam's third lemma, we change the measure from $P^n$ to $Q^n$ and get that under $Q^n$,
\begin{align*}
    \sqrt{n}(\hat{\omege}^\square-\omege_{\thete_0})\overset{Q^n}{\to}N(0,\var_{\thete_0}(\IF^\square(\bz))).
\end{align*}
Using the same technique,
\begin{align*}
    n^\alpha(\omege_n^*-\omege_{\thete_0})&\to\EE_{\thete_0}[{u}(\bz)\IF^{\saa}(\bz)],\\
    \sqrt{n}(\omege_n^*-\omege_{\thete_0})&\to \bzero.
\end{align*}
In conclusion, 
\begin{align*}
    \sqrt{n}(\hat{\omege}^\square-\omege^*_n)=\sqrt{n}(\hat{\omege}^\square-\omege_{\thete_0})-\sqrt{n}(\omege_n^*-\omege_{\thete_0})\overset{Q^n}{\to}N(0,\var_{\thete_0}(\IF^\square(\bz))).
\end{align*}
Let us now consider the regret. We use Taylor expansion of the regret with respect to $\omege$ at $\omege_n^*$ and note that $\nabla_\omege v_n(\omege_n^*)=0$ for every $n$,
\begin{align*}
    v_n(\hat{\omege}^\square)-v_n(\omege^*_n)&=\frac{1}{2}(\hat{\omege}^\square-\omege_n^*)^\top\nabla_{\omege\omege}v_n(\omege_n^*)(\hat{\omege}^\square-\omege_n^*)+o_{Q^n}(\nbr{\hat{\omege}^\square-\omege_n^*}^2),\\
 n(v_n(\hat{\omege}^\square)-v_n(\omege_n^*))&=\frac{1}{2}\sqrt{n}(\hat{\omege}^\square-\omege_n^*)^\top\nabla_{\omege\omege}v_n(\omege_n^*)\sqrt{n}(\hat{\omege}^\square-\omege_n^*)+o_{Q^n}(1).
\end{align*}
By Assumption \ref{assumption: statistical order} that $\nabla_{\omege\omege}v_n(\omege_n^*)\to \bV$, the function $f:\Omega\to\RR$ with $f(\cdot):=\frac{1}{2}(\cdot)^\top\bV(\cdot)$ and function sequence $f_n:\Omega\to\RR$ with $f_n(\cdot):=\frac{1}{2}(\cdot)^\top\nabla_{\omege\omege}v_n(\omege_n^*)(\cdot)$ satisfy: for all sequence $\cbr{\omege_n}_{n=1}^\infty$, if $\omege_n\to\omege$ for some $\omege\in\Omega$, then $f_n(\omege_n)\to f(\omege)$ since continuity is preserved under multiplication. Using the extended continuous mapping theorem (Theorem 1.11.1 in \cite{van1996weak}), we have under $Q^n$,
\begin{align*}
    n(v_n(\hat{\omege}^\square)-v_n(\omege_n^*))\overset{Q^n}{\to}\frac{1}{2}N^\square\bV N^\square.
\end{align*}
Moreover, ETO is stochastically dominated by IEO and IEO is stochastically dominated by SAA.
\end{proof}
\begin{proof}
    [Proof of Proposition \ref{prop:  well-specified}]

    The asymptotic normality result is directly from \cite{van2000asymptotic} by noting Lemma \ref{lemma: wtheta diff}.
    
    The asymptotic normality of SAA is by Proposition 2A of \cite{elmachtoub2023estimate}. For ETO and IEO, Proposition 2B and 2C of \cite{elmachtoub2023estimate} shows that
    \begin{align*}
        \sqrt{n}\rbr{\hat{\thete}^\eto-\thete_0}&\overset{P^n}{\to}N(\bzero,\bI\inv),\\
        \sqrt{n}\rbr{\hat{\thete}^\ieo-\thete_0}&\overset{P^n}{\to}N(\bzero,\bPhi\inv \var_{\thete_0}\rbr{\nabla_{\thete}c(\omege_{\thete_0},\bz)}\bPhi\inv ).
    \end{align*}
    Regarding the notation, $\var_{\thete_0}\rbr{\nabla_{\thete}c(\omege_{\thete_0},\bz))}$ is the variance of the random gradient $\nabla_\thete c(\omege_\thete,\bz)$ at $\thete=\thete_0$, under the distribution $P_{\thete_0}$. Note that the subscript $\thete_0$ under the variance is not a variable here.
    Using the delta method, we have
        \begin{align*}
        \sqrt{n}\rbr{\hat{\thete}^\eto-\thete_0}&\overset{P^n}{\to}N(\bzero,\nabla_\thete\omege_{\thete_0}^\top\bI\inv\nabla_\thete\omege_{\thete_0})=N(\bzero,\bV\inv\bSigma\bI\inv\bSigma^\top\bV\inv)=N(0,\var_{\thete_0}(\IF^\eto(\bz))),\\
        \sqrt{n}\rbr{\hat{\thete}^\ieo-\thete_0}&\overset{P^n}{\to}N(\bzero,\nabla_\thete\omege_{\thete_0}^\top\bPhi\inv \var_{\thete_0}\rbr{\nabla_{\thete}c(\omege_{\thete_0},\bz))\bPhi\inv\nabla_\thete\omege_{\thete_0}}\\
        &=N(\bzero,\bV\inv\bSigma\bPhi\inv \var_{\thete_0}\rbr{\nabla_{\thete}c(\omege_{\thete_0},\bz))\bPhi\inv\bSigma^\top\bV\inv}\\
        &=N(0,\var_{\thete_0}(\IF^\ieo(\bz))).
    \end{align*}
    The inequality $\var_{\thete_0}(\IF^\eto(\bz))\leq\var_{\thete_0}(\IF^\ieo(\bz))\leq\var_{\thete_0}(\IF^\saa(\bz))$ is from Theorem 2 of \cite{elmachtoub2023estimate}.
\end{proof}

\begin{proof}
    [Proof of Theorem \ref{prop: asymp under severe}]
We use a different decomposition framework this time. We recall
\begin{align*}
    \thete_{t}^\kl&:=\argmax_{\thete\in\Theta}\EE_{Q_{t}}[\log p_\thete(\bz)],\\
    \thete_{t}^*&:=\argmin_{\thete\in\Theta}\EE_{Q_{t}}[c(\omege_\thete,\bz)],\\
    \omege_{t}^*&:=\argmin_{\omege\in\Omega}\EE_{Q_{t}}[c(\omege,\bz)].
\end{align*}
We denote $t_n:=1/n^{\alpha}$. Note that here $\omege_{t_n}^*=\omege_n^*$ but generally $\omege_{\thete_{t_n}^\kl}\not=\omege_n^*$, $\omege_{\thete_{t_n}^*}\not=\omege_n^*$. In this case,
\begin{align*}
    \hat{\omege}^\eto-\omege_n^*&=(\hat{\omege}^\eto-\omege_{\thete_{t_n}^\kl})+(\omege_{\thete_{t_n}^\kl}-\omege_{\thete_0})-(\omege_n^*-\omege_{\thete_0}),\\
    \hat{\omege}^\ieo-\omege_n^*&=(\hat{\omege}^\ieo-\omege_{\thete_{t_n}^*})+(\omege_{\thete_{t_n}^*}-\omege_{\thete_0})-(\omege_n^*-\omege_{\thete_0}),\\
    \hat{\omege}^\saa-\omege_n^*&=(\hat{\omege}^\saa-\omege_{{t_n}}^*)+(\omege_{{t_n}}^*-\omege_{\thete_0})-(\omege_n^*-\omege_{\thete_0}).
\end{align*}
We already show in Lemma \ref{lemma: diff opt solution perturbation} that
\begin{align*}
    n^\alpha(\omege_n^*-\omege_{\thete_0})&\to\EE_{\thete_0}[{u}(\bz)\IF^{\saa}(\bz)].
\end{align*}
Next we give a limit of the middle term, using Taylor expansion. For SAA, the middle term is equal to the third term. For ETO and IEO, $\omege_{\thete_0}=\omege_{\thete_{t}^*}|_{t=0}$ and $\omege_{\thete_0}=\omege_{\thete_{t}^\kl}|_{t=0}$.
\begin{align*}
    \omege_{\thete_{t}^*}-\omege_{\thete_0}&:=\nabla_{t}\omege_{\thete_{t}^*}+o(t)=\nabla_\thete\omege_\thete^\top\nabla_{t}\thete_{t}^*+o(t),\\
    \omege_{\thete_{t}^\kl}-\omege_{\thete_0}&:=\nabla_{t}\omege_{\thete_{t}^\kl}+o(t)=\nabla_\thete\omege_\thete^\top\nabla_{t}\thete_{t}^\kl+o(t).\\
\end{align*}
By Lemma \ref{lemma: diff opt solution perturbation II}, we can get $\nabla_{t}\thete_{t}^*$ and $\nabla_{t}\thete_{t}^\kl$ at $t=0$:
\begin{align*}
    \nabla_{t}\thete_{t}^\kl&=\bI^{-1}\EE_{\thete_0}[{u}(\bz)\bs_{\thete_0}(\bz)],\\
    \nabla_{t}\thete_{t}^*&=-\bPhi^{-1}\EE_{\thete_0}[{u}(\bz)\nabla_\thete c(\omege_{\thete_0},\bz)].
\end{align*}
Moreover,
\begin{align*}
\nabla_{t}\omege_{\thete_{t}^\kl}&=\nabla_\thete\omege_\thete^\top\nabla_{t}\thete_{t}^\kl=-\bV^{-1}\bSigma\bI^{-1}\EE_{\thete_0}( {u}(\bz)\bs_{\thete_0}(\bz))=\EE_{\thete_0}({u}(\bz)\IF^\eto(\bz)),\\
\nabla_{t}\omege_{\thete_{t}^*}&=\nabla_\thete\omege_\thete^\top\nabla_{t}\thete_{t}^*=\bV^{-1}\bSigma\bPhi^{-1}\EE_{\thete_0}[{u}(\bz)\nabla_\thete c(\omege_{\thete_0},\bz)]=\EE_{\thete_0}({u}(\bz)\IF^\ieo(\bz)).
\end{align*}
Finally, for the middle term,
\begin{align*}
    n^\alpha (\omege_{\thete_{t_n}^*}-\omege_{\thete_0})&\to\EE_{\thete_0}( {u}(\bz)\IF^\ieo(\bz)),\\
    n^\alpha (\omege_{\thete_{t_n}^\kl}-\omege_{\thete_0})&\to\EE_{\thete_0}( {u}(\bz)\IF^\eto(\bz)),\\
    n^\alpha (\omege_{{t_n}}^*-\omege_{\thete_0})&\to\EE_{\thete_0}( {u}(\bz)\IF^\saa(\bz)).\\
\end{align*}
For the first term, under Assumption \ref{assumption: statistical order} $(\hat{\omege}^\eto-\omege_{\thete_{t_n}^\kl})$, $(\hat{\omege}^\ieo-\omege_{\thete_{t_n}^*})$ and $(\hat{\omege}^\saa-\omege_{{t_n}}^*)$ are all $O_{Q^n}(1/\sqrt{n})$, then 
\begin{align*}
    n^\alpha (\hat{\omege}^\eto-\omege_{\thete_{t_n}^\kl})\overset{p}{\to} 0,\\
    n^\alpha (\hat{\omege}^\ieo-\omege_{\thete_{t_n}^*})\overset{p}{\to} 0,\\
    n^\alpha (\hat{\omege}^\saa-\omege_{{t_n}}^*)\overset{p}{\to} 0.\\
\end{align*}
When we multiply $n^\alpha$, the term shrinks in probability to $0$. In conclusion,
   \begin{align*}
        n^\alpha(\hat{\omege}^\square-\omege_n^*)&\overset{p}{\to}\bb^\square.
    \end{align*}

Let us now consider the regret. We use Taylor expansion of the regret with respect to $\omege$ at $\omege_n^*$ and note that $\nabla_\omege v_n(\omege_n^*)=0$ for every $n$,
\begin{align*}
    v_n(\hat{\omege}^\square)-v_n(\omege^*_n)&=\frac{1}{2}(\hat{\omege}^\square-\omege_n^*)^\top\nabla_{\omege\omege}v_n(\omege_n^*)(\hat{\omege}^\square-\omege_n^*)+o_{Q^n}(\nbr{\hat{\omege}^\square-\omege_n^*}^2),\\
 n^{2\alpha}(v_n(\hat{\omege}^\square)-v_n(\omege_n^*))&=\frac{1}{2}n^{\alpha}(\hat{\omege}^\square-\omege_n^*)^\top\nabla_{\omege\omege}v_n(\omege_n^*)n^{\alpha}(\hat{\omege}^\square-\omege_n^*)+o_{Q^n}(1).
\end{align*}
By Assumption \ref{assumption: statistical order} that $\nabla_{\omege\omege}v_n(\omege_n^*)\to \bV$, the function $f:\Omega\to\RR$ with $f(\cdot):=\frac{1}{2}(\cdot)^\top\bV(\cdot)$ and function sequence $f_n:\Omega\to\RR$ with $f_n(\cdot):=\frac{1}{2}(\cdot)^\top\nabla_{\omege\omege}v_n(\omege_n^*)(\cdot)$ satisfy: for all sequence $\cbr{\omege_n}_{n=1}^\infty$, if $\omege_n\to\omege$ for some $\omege\in\Omega$, then $f_n(\omege_n)\to f(\omege)$ since continuity is preserved under multiplication. Using the extended continuous mapping theorem (Theorem 1.11.1 in \cite{van1996weak}), we have under $Q^n$,
   \begin{align*}
        n^{2\alpha}(v_n(\hat{\omege}^\square)-v_n(\omege_n^*))&\overset{p}{\to}\frac{1}{2}\rbr{\bb^\square}^\top\bV\bb^\square.
    \end{align*}
\end{proof}
\begin{proof}[Proof of Theorem \ref{prop: comparison under severe}]

Recall the influence function of SAA, ETO, IEO:
\begin{align*}
    \IF^\saa(\bz)&=-\bV^{-1}\nabla_\omege c(\omege_{\thete_0},\bz),\\
    \IF^\ieo(\bz)
&=\bV^{-1}\bSigma(\bSigma^\top \bV^{-1}\bSigma)^{-1}\bSigma^\top \bV^{-1}\nabla_\omege c(\omege_{\thete_0},\bz)=-\bV^{-1}P_{\bSigma, \bV}\nabla_\omege c(\omege_{\thete_0},\bz),\\    \IF^\eto(\bz)&=-\bV^{-1}\bSigma\bI^{-1}\bs_{\thete_0}(\bz)=-\bV^{-1}\bSigma\EE_{\thete_0}[\bs_{\thete_0}(\bz)\bs_{\thete_0}(\bz)^\top]\bs_{\thete_0}(\bz).
    \end{align*}

For regret comparison, since $\bb^\saa=\bzero$, we have $R^\saa=0$. Also, $R^\ieo\geq 0$ and $R^\eto\geq 0$.

By noting that $\bV=\nabla_{\omege\omege}\EE_{\thete_0}[c(\omege_{\thete_0},\bz)]$, we observe that
\begin{align*}
    \EE_{\thete_0}[{u}(\bz)\IF^\ieo(\bz)]^\top \bV\EE_{\thete_0}[{u}(\bz)\IF^\ieo(\bz)]=\EE_{\thete_0}[{u}(\bz)\IF^\ieo(\bz)]^\top \bV\EE_{\thete_0}[{u}(\bz)\IF^\saa(\bz)].
\end{align*}
This is because 
\begin{align*}
&\EE_{\thete_0}[{u}(\bz)\IF^\ieo(\bz)]^\top \bV\EE_{\thete_0}[{u}(\bz)\IF^\ieo(\bz)]\\
=&\rbr{\EE_{\thete_0}[{u}(\bz)\nabla_\omege c(\omege_{\thete_0},\bz)]^\top \bV^{-1}\bSigma \rbr{\bSigma^\top \bV^{-1}\bSigma}^{-1} \bSigma^\top \bV^{-1}} \cdot\bV\cdot\\
&\rbr{\bV^{-1}\bSigma\rbr{\bSigma^\top \bV^{-1}\bSigma}^{-1}\bSigma^\top \bV^{-1}\EE_{\thete_0}[{u}(\bz)\nabla_\omege c(\omege_{\thete_0},\bz)]}\\
=&\EE_{\thete_0}[{u}(\bz)\nabla_\omege c(\omege_{\thete_0},\bz)]^\top \bV^{-1}\bSigma \rbr{\bSigma^\top \bV^{-1}\bSigma}^{-1}\bSigma^\top \bV^{-1}\EE_{\thete_0}[{u}(\bz)\nabla_\omege c(\omege_{\thete_0},\bz)]\\
=&\EE_{\thete_0}[{u}(\bz)\nabla_\omege c(\omege_{\thete_0},\bz)]^\top \bV^{-1}\bV\bV^{-1}\bSigma \rbr{\bSigma^\top \bV^{-1}\bSigma}^{-1}\bSigma^\top \bV^{-1}\EE_{\thete_0}[{u}(\bz)\nabla_\omege c(\omege_{\thete_0},\bz)]\\
=&\EE_{\thete_0}[{u}(\bz)\IF^\saa(\bz)]^\top \bV\EE_{\thete_0}[{u}(\bz)\IF^\ieo(\bz)]\\
=&\EE_{\thete_0}[{u}(\bz)\IF^\ieo(\bz)]^\top \bV\EE_{\thete_0}[{u}(\bz)\IF^\saa(\bz)].
\end{align*}
Now let us prove $R^\eto-R^\ieo\geq 0$.
\begin{align*}
    &2R^\eto\\=&\EE_{\thete_0}[{u}(\bz)(\IF^\eto(\bz)-\IF^\saa(\bz))]^\top \bV\EE_{\thete_0}[{u}(\bz)(\IF^\eto(\bz)-\IF^\saa(\bz))]\\
    =&\EE_{\thete_0}[{u}(\bz)\IF^\eto(\bz)]^\top \bV\EE_{\thete_0}[{u}(\bz)\IF^\eto(\bz)]\\
    &-2\EE_{\thete_0}[{u}(\bz)\IF^\eto(\bz)]^\top \bV\EE_{\thete_0}[{u}(\bz)\IF^\saa(\bz)]+\EE_{\thete_0}[{u}(\bz)\IF^\saa(\bz)]^\top \bV\EE_{\thete_0}[{u}(\bz)\IF^\saa(\bz)]\\
        &2R^\ieo\\=&\EE_{\thete_0}[{u}(\bz)(\IF^\ieo(\bz)-\IF^\saa(\bz))]^\top \bV\EE_{\thete_0}[{u}(\bz)(\IF^\ieo(\bz)-\IF^\saa(\bz))]\\
        =&\EE_{\thete_0}[{u}(\bz)\IF^\ieo(\bz)]^\top \bV\EE_{\thete_0}[{u}(\bz)\IF^\ieo(\bz)]\\&-2\EE_{\thete_0}[{u}(\bz)\IF^\ieo(\bz)]^\top \bV\EE_{\thete_0}[{u}(\bz)\IF^\saa(\bz)]+\EE_{\thete_0}[{u}(\bz)\IF^\saa(\bz)]^\top \bV\EE_{\thete_0}[{u}(\bz)\IF^\saa(\bz)]\\
        =&-\EE_{\thete_0}[{u}(\bz)\IF^\ieo(\bz)]^\top \bV\EE_{\thete_0}[{u}(\bz)\IF^\saa(\bz)]+\EE_{\thete_0}[{u}(\bz)\IF^\saa(\bz)]^\top \bV\EE_{\thete_0}[{u}(\bz)\IF^\saa(\bz)].
\end{align*}
Hence,
\begin{align*}
    &2R^\eto-2R^\ieo\\
    =&\EE_{\thete_0}[{u}(\bz)\IF^\eto(\bz)]^\top \bV\EE_{\thete_0}[{u}(\bz)\IF^\eto(\bz)]\\&-2\EE_{\thete_0}[{u}(\bz)\IF^\eto(\bz)]^\top \bV\EE_{\thete_0}[{u}(\bz)\IF^\saa(\bz)]+\EE_{\thete_0}[{u}(\bz)\IF^\ieo(\bz)]^\top \bV\EE_{\thete_0}[{u}(\bz)\IF^\saa(\bz)]\\
    =&\EE_{\thete_0}[{u}(\bz)\bs_{\thete_0}(\bz)]^\top\bI^{-1}\bSigma^\top \bV^{-1}\bV\bV^{-1}\bSigma\bI^{-1}\EE_{\thete_0}[{u}(\bz)\bs_{\thete_0}(\bz)]\\
    &+2\EE_{\thete_0}[{u}(\bz)\bs_{\thete_0}(\bz)]^\top\bI^{-1}\bSigma^\top \bV^{-1}\bV \bV^{-1}\EE_{\thete_0}[{u}(\bz)\nabla_\omege c(\omege_{\thete_0},\bz)]\\
    &+\EE[{u}(\bz)\nabla_\omege c(\omege_{\thete_0},\bz)]^\top \bV^{-1}\bSigma \rbr{\bSigma^\top \bV^{-1}\bSigma}^{-1}\bSigma^\top \bV^{-1}\EE_{\thete_0}[{u}(\bz)\nabla_\omege c(\omege_{\thete_0},\bz)]\\
=&\EE_{\thete_0}[{u}(\bz)\bs_{\thete_0}(\bz)]^\top\bI^{-1}\rbr{\bSigma^\top \bV^{-1}\bSigma}\bI^{-1}\EE_{\thete_0}[{u}(\bz)\bs_{\thete_0}(\bz)]\\
    &+2\EE_{\thete_0}[{u}(\bz)\bs_{\thete_0}(\bz)]^\top\bI^{-1}\bSigma^\top  \bV^{-1}\EE_{\thete_0}[{u}(\bz)\nabla_\omege c(\omege_{\thete_0},\bz)]\\
    &+\EE_{\thete_0}[{u}(\bz)\nabla_\omege c(\omege_{\thete_0},\bz)]^\top \bV^{-1}\bSigma \rbr{\bSigma^\top \bV^{-1}\bSigma}^{-1}\bSigma^\top \bV^{-1}\EE_{\thete_0}[{u}(\bz)\nabla_\omege c(\omege_{\thete_0},\bz)]\\
    =&\nbr{\rbr{\bSigma^\top \bV^{-1}\bSigma}^{1/2}\bI^{-1}\bSigma^\top  \bV^{-1}\EE_{\thete_0}[{u}(\bz)\nabla_\omege c(\omege_{\thete_0},\bz)]-\rbr{\bSigma^\top \bV^{-1}\bSigma}^{-1/2}\bSigma^\top \bV^{-1}\EE_{\thete_0}[{u}(\bz)\nabla_\omege c(\omege_{\thete_0},\bz)]}^2\\\geq& 0.
\end{align*}
The last equality is from the fact that
\begin{align*}
    \bx^\top\bA\bx-2\bx^\top\by+\by^\top\bA^{-1}\by=\rbr{\bA^{\frac{1}{2}}\bx-\bA^{-\frac{1}{2}}\by}^\top\rbr{\bA^{\frac{1}{2}}\bx-\bA^{-\frac{1}{2}}\by}.
\end{align*}
In conclusion, we have
\begin{align*}
    R^\eto\geq R^\ieo\geq R^\saa=0.
\end{align*}
By the definition of $\bb^\square$ and $R^\square$, we know $\nbr{\bb^\square}_{\bV}=\sqrt{2R^\square}$. Hence, by the monotonicity of square root function, we have $0=\nbr{\bb^\saa}_{\bV}\leq\nbr{\bb^\ieo}_{\bV}\leq\nbr{\bb^\eto}_{\bV}$.
\end{proof}

\begin{proof}
    [Proof of Theorem \ref{prop: approximately impactless}]
Part (i): We note that when ${u}(\bz)=\bbeta^\top s_{\thete_0}(\bz)$ for some $\bbeta\in\RR^\dt$,
        \begin{align*}
        &\bb^\eto\\=&\EE_{\thete_0}[{u}(\bz)\IF^\eto(\bz)]-\EE_{\thete_0}[{u}(\bz)\IF^\saa(\bz)]\\
        =&\bV\inv\EE_{\thete_0}[\nabla_\omege c(\omege_{\thete_0},\bz)\bs_{\thete_0}(\bz)^\top]\rbr{\EE_{\thete_0}[\bs_{\thete_0}(\bz)\bs_{\thete_0}(\bz)^\top]}^{-1}\EE_{\thete_0}[{u}(\bz)\bs_{\thete_0}(\bz)]-\bV\inv\EE_{\thete_0}[{u}(\bz)\nabla_\omege c(\omege_{\thete_0},\bz)]\\
        =&\bV\inv\EE_{\thete_0}[\nabla_\omege c(\omege_{\thete_0},\bz)\bs_{\thete_0}(\bz)^\top]\rbr{\EE_{\thete_0}[\bs_{\thete_0}(\bz)\bs_{\thete_0}(\bz)^\top]}^{-1}\EE_{\thete_0}[\bbeta^\top \bs_{\thete_0}(\bz)\bs_{\thete_0}(\bz)]-\bV\inv\EE_{\thete_0}[\bbeta^\top \bs_{\thete_0}(\bz)\nabla_\omege c(\omege_{\thete_0},\bz)]\\
        =&\bV\inv\EE_{\thete_0}[\nabla_\omege c(\omege_{\thete_0},\bz)\bs_{\thete_0}(\bz)^\top]\rbr{\EE_{\thete_0}[\bs_{\thete_0}(\bz)\bs_{\thete_0}(\bz)^\top]}^{-1}\EE_{\thete_0}[ \bs_{\thete_0}(\bz)\bs_{\thete_0}(\bz)^\top \bbeta]-\bV\inv\EE_{\thete_0}[\nabla_\omege c(\omege_{\thete_0},\bz)\bs_{\thete_0}(\bz)^\top \bbeta]\\
        =&\bV\inv\EE_{\thete_0}[\nabla_\omege c(\omege_{\thete_0},\bz)\bs_{\thete_0}(\bz)^\top]\rbr{\EE_{\thete_0}[\bs_{\thete_0}(\bz)\bs_{\thete_0}(\bz)^\top]}^{-1}\EE_{\thete_0}[ \bs_{\thete_0}(\bz)\bs_{\thete_0}(\bz)^\top]\bbeta-\bV\inv\EE_{\thete_0}[\nabla_\omege c(\omege_{\thete_0},\bz)\bs_{\thete_0}(\bz)^\top]\bbeta\\
        =&\bV\inv\EE_{\thete_0}[\nabla_\omege c(\omege_{\thete_0},\bz)\bs_{\thete_0}(\bz)^\top]\bbeta-\bV\inv\EE_{\thete_0}[\nabla_\omege c(\omege_{\thete_0},\bz)\bs_{\thete_0}(\bz)^\top]\bbeta\\
        =&\bzero.
    \end{align*}

\end{proof}

To prove Theorem \ref{prop: asymp under balanced}, we need a useful result here. When $\alpha=1/2$, we can show that the log-likelihood ratio is asymptotically normal characterized by the mean and variance of the perturbation direction. This result is used to convert the asymptotics in $P^n$ to $Q^n$ by conducting a change of measure from $P^n$ to $Q^n$, and also contributes to the overall asymptotically normal limit of the decision that encompasses the bias term. It will also be leveraged later to prove results in the mild misspecification case.

\begin{lemma}
    [Log Likelihood Ratio Property in Definition \ref{def: local perturbation: QMD}\citep{duchi2021contiguity}]  Under Definition \ref{def: Local Misspecification Regimes}, when $\alpha=1/2$, i.e., $Q^n = Q_{1/\sqrt{n}}^{\otimes n}$, 
    the log-likelihood ratio between $Q^n$ and $P^n$ satisfies:
\begin{align*}
\log\frac{dQ^n(\bz_1,...,\bz_n)}{dP^n(\bz_1,...,\bz_n)}=\frac{1}{\sqrt{n}}\sum_{i=1}^n {u}(\bz_i)-\frac{1}{2}\EE_{\thete_0}[{u}^2]+o_{P^n}(1).
\end{align*}
\end{lemma}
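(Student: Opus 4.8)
The plan is to recognize this as an instance of the classical local asymptotic normality (LAN) expansion (see \cite{van2000asymptotic}, Theorem 7.2), in which the scalar direction ${u}$ plays the role of the score function and $\EE_{\thete_0}[{u}^2]$ the role of the Fisher information. Since the claim concerns behavior under $P^n=P_{\thete_0}^{\otimes n}$, throughout I treat $\bz_1,\dots,\bz_n$ as i.i.d.\ from $P_{\thete_0}$, and set $t_n=1/\sqrt{n}$. First I would reduce the product to a sum and pass to the square-root parametrization: writing $g_{ni}:=\sqrt{q_{t_n}(\bz_i)/p_{\thete_0}(\bz_i)}$ and $Y_{ni}:=2(g_{ni}-1)$, the log-likelihood ratio equals $\sum_{i=1}^n\log g_{ni}^2=\sum_{i=1}^n 2\log(1+\tfrac12 Y_{ni})$, a triangular array of row-wise i.i.d.\ terms. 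A Taylor expansion of $y\mapsto 2\log(1+\tfrac12 y)$ at $0$ gives $\sum_i\log g_{ni}^2=\sum_i Y_{ni}-\tfrac14\sum_i Y_{ni}^2+\sum_i\rho(Y_{ni})$, with cubic remainder $|\rho(y)|\le C|y|^3$ for small $|y|$.

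Second, I would extract the two surviving terms using the quadratic-mean-differentiability (QMD) bound of Definition \ref{def: local perturbation: QMD}, which I read as $\EE_{\thete_0}[(g_{n1}-1-\tfrac12 t_n {u})^2]=o(t_n^2)$, i.e.\ $\EE_{\thete_0}[(Y_{n1}-t_n {u})^2]=o(t_n^2)$. For the linear sum I split $Y_{ni}=t_n {u}(\bz_i)+(Y_{ni}-t_n {u}(\bz_i))$: the first piece is $\tfrac{1}{\sqrt n}\sum_i {u}(\bz_i)$, while the centered remainder has total variance $n\cdot o(t_n^2)=o(1)$ and total mean $n\,\EE_{\thete_0}[Y_{n1}]$. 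Using the QMD bound to derive the Hellinger-affinity expansion $\int\sqrt{q_{t_n}p_{\thete_0}}=1-\tfrac18 t_n^2\EE_{\thete_0}[{u}^2]+o(t_n^2)$ yields $\EE_{\thete_0}[Y_{n1}]=-\tfrac14 t_n^2\EE_{\thete_0}[{u}^2]+o(t_n^2)$, so that by Chebyshev $\sum_i Y_{ni}=\tfrac{1}{\sqrt n}\sum_i {u}(\bz_i)-\tfrac14\EE_{\thete_0}[{u}^2]+o_{P^n}(1)$. For the quadratic sum I would replace $Y_{ni}$ by $t_n {u}(\bz_i)$ using the same $L^2$ bound and apply the weak law to $t_n^2\sum_i {u}(\bz_i)^2=\tfrac1n\sum_i {u}(\bz_i)^2\to\EE_{\thete_0}[{u}^2]$, giving $\tfrac14\sum_i Y_{ni}^2\overset{P^n}{\to}\tfrac14\EE_{\thete_0}[{u}^2]$. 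Adding the two $-\tfrac14$ contributions produces the claimed $-\tfrac12\EE_{\thete_0}[{u}^2]$.

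The main obstacle, and the final step, is to show the remainder $\sum_i\rho(Y_{ni})$ is $o_{P^n}(1)$. The bound $\sum_i|\rho(Y_{ni})|\le C\,(\max_{i\le n}|Y_{ni}|)\sum_i Y_{ni}^2$ reduces this to the asymptotic negligibility $\max_{i\le n}|Y_{ni}|\overset{P^n}{\to}0$, the second factor being $O_{P^n}(1)$ by the quadratic-sum step. To obtain negligibility I would upgrade the QMD bound to $\sqrt n\,Y_{n1}\to {u}$ in $L^2(P_{\thete_0})$, which forces the family $\{nY_{n1}^2\}_n$ to be uniformly integrable; then $P^n(\max_i|Y_{ni}|>\varepsilon)\le n\,P_{\thete_0}(|Y_{n1}|>\varepsilon)\le \varepsilon^{-2}\EE_{\thete_0}[(\sqrt n\,Y_{n1})^2\,\ind\{(\sqrt n\,Y_{n1})^2>\varepsilon^2 n\}]$, and the right-hand side vanishes because the truncation level $\varepsilon^2 n\to\infty$ and the family is uniformly integrable. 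I expect this uniform-integrability/truncation estimate to be the most delicate part, since it is the only place where the QMD hypothesis must be leveraged beyond a plain second-moment bound.
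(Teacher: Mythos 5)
Your proof is correct, but it is not the route the paper takes: the paper never proves this lemma in-house at all---it is imported by citation from \cite{duchi2021contiguity}, being the classical LAN expansion for differentiable-in-quadratic-mean paths (cf.\ Theorem 7.2 of \cite{van2000asymptotic}) specialized to the one-dimensional path $t\mapsto Q_t$ with score $u$. What you have written is essentially that classical proof, and every step checks out: the square-root parametrization $Y_{ni}=2\bigl(\sqrt{q_{t_n}/p_{\thete_0}}(\bz_i)-1\bigr)$ and Taylor expansion of $2\log(1+y/2)$; the QMD bound $\EE_{\thete_0}[(Y_{n1}-t_n u)^2]=o(t_n^2)$ to identify the linear sum, with the drift $-\tfrac14\EE_{\thete_0}[u^2]$ coming from the Hellinger-affinity expansion (which indeed follows from QMD together with $\int q_{t_n}=1$ and Cauchy--Schwarz on the cross term); the $L^2$ replacement plus weak law giving $\tfrac14\sum_i Y_{ni}^2\overset{P^n}{\to}\tfrac14\EE_{\thete_0}[u^2]$; and the uniform-integrability/truncation argument showing $\max_{i\le n}|Y_{ni}|\overset{P^n}{\to}0$, which kills the cubic remainder. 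Two points you leave implicit are covered by your own structure: the Taylor expansion requires $1+Y_{ni}/2>0$, i.e.\ $q_{t_n}(\bz_i)>0$, which holds on the event $\{\max_i|Y_{ni}|\le\eta\}$, $\eta<2$, of probability tending to one; and $Q_{t_n}$ may place mass where $p_{\thete_0}=0$, but QMD forces that mass to be $o(t_n^2)$, so the affinity expansion is unaffected. For contrast, the only log-likelihood-ratio expansion the paper proves itself is the special case of Example \ref{example: semi-parametric Perturbation: exp} (Lemma \ref{lemma: log likelihood ratio property in exp}), where the exponential tilt has an explicit normalizing constant $C_t$ and the whole argument reduces to a two-term Taylor expansion of $n\log C_{1/\sqrt{n}}$; that computation is far simpler but works only for that parametric tilt. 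Your argument buys the lemma in the generality actually stated---any QMD perturbation in the sense of Definition \ref{def: local perturbation: QMD}---i.e.\ it proves from first principles what the paper only cites.
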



\begin{proof}
    [Proof of Theorem \ref{prop: asymp under balanced}]
By Lemma \ref{lemma: log likelihood ratio property in exp} and Proposition \ref{prop:  well-specified}, under $P^n$, we have a joint central limit theorem
\begin{align*}
    \begin{bmatrix}
        \sqrt{n}(\hat{\omege}^\square-\omege_{\thete_0})\\
        \log\frac{dQ^n}{dP^n}
    \end{bmatrix}\overset{P^n}{\to}
    N\begin{pmatrix}
        \begin{bmatrix}
            0\\
            -\frac{1}{2}\EE_{\thete_0}({u}^2)
        \end{bmatrix},
        \begin{bmatrix}
            \var_{\thete_0}(\IF^\square(\bz))&\EE_{\thete_0}[{u}(\bz)\IF^\square(\bz)]\\
            \EE_{\thete_0}[{u}(\bz)\IF^\square(\bz)^\top]&\EE_{\thete_0}({u}^2)
        \end{bmatrix}
    \end{pmatrix}.
\end{align*}
Using LeCam's third lemma, we change the measure from $P^n$ to $Q^n$ and get that under $Q^n$,
\begin{align*}
    \sqrt{n}(\hat{\omege}^\square-\omege_{\thete_0})\overset{Q^n}{\to}N(\EE_{\thete_0}[{u}(\bz)\IF^\square(\bz)],\var_{\thete_0}(\IF^\square(\bz))).
\end{align*}
Next, by Lemma \ref{lemma: diff opt solution perturbation} (note that this is not a stochastic convergence but deterministic sequence convergence)
\begin{align*}
    \sqrt{n}(\omege_n^*-\omege_{\thete_0})\to\EE_{\thete_0}[{u}(\bz)\IF^{\saa}(\bz)].
\end{align*}
In conclusion, 
\begin{align*}
    \sqrt{n}(\hat{\omege}^\square-\omege^*_n)=\sqrt{n}(\hat{\omege}^\square-\omege_{\thete_0})-\sqrt{n}(\omege_n^*-\omege_{\thete_0})\overset{Q^n}{\to}N(\bb^\square,\var_{\thete_0}(\IF^\square(\bz))).
\end{align*}

Let us now consider the regret. We use Taylor expansion of the regret with respect to $\omege$ at $\omege_n^*$ and note that $\nabla_\omege v_n(\omege_n^*)=0$ for every $n$,
\begin{align*}
    v_n(\hat{\omege}^\square)-v_n(\omege^*_n)&=\frac{1}{2}(\hat{\omege}^\square-\omege_n^*)^\top\nabla_{\omege\omege}v_n(\omege_n^*)(\hat{\omege}^\square-\omege_n^*)+o_{Q^n}(\nbr{\hat{\omege}^\square-\omege_n^*}^2),\\
 n(v_n(\hat{\omege}^\square)-v_n(\omege_n^*))&=\frac{1}{2}\sqrt{n}(\hat{\omege}^\square-\omege_n^*)^\top\nabla_{\omege\omege}v_n(\omege_n^*)\sqrt{n}(\hat{\omege}^\square-\omege_n^*)+o_{Q^n}(1).
\end{align*}
By Assumption \ref{assumption: statistical order} that $\nabla_{\omege\omege}v_n(\omege_n^*)\to \bV$, the function $f:\Omega\to\RR$ with $f(\cdot):=\frac{1}{2}(\cdot)^\top\bV(\cdot)$ and function sequence $f_n:\Omega\to\RR$ with $f_n(\cdot):=\frac{1}{2}(\cdot)^\top\nabla_{\omege\omege}v_n(\omege_n^*)(\cdot)$ satisfy: for all sequence $\cbr{\omege_n}_{n=1}^\infty$, if $\omege_n\to\omege$ for some $\omege\in\Omega$, then $f_n(\omege_n)\to f(\omege)$ since continuity is preserved under mulptiplication. Using the extended continuous mapping theorem (Theorem 1.11.1 in \cite{van1996weak}), we have under $Q^n$,
\begin{align*}
    n(v_n(\hat{\omege}^\square)-v_n(\omege_n^*))\overset{Q^n}{\to}\frac{1}{2}N^\square\bV N^\square.
\end{align*}
\end{proof}
\begin{proof}[Proof of Theorem \ref{prop: comparison under balanced}]
    Recall that:
    \begin{align*}
        \sqrt{n}(\hat{\omege}^\square-\omege_n^*)\overset{Q^n}{\to}N^\square:= N(\bb^\square,\var_{\thete_0}(\IF^\square(\bz))).
    \end{align*}
    \begin{align*}
        n(v_n(\hat{\omege}^\square)-v_n(\omege_n^*))\overset{Q^n}{\to}\GG^\square:=\frac{1}{2} (N^\square)^\top \bV N^\square.
    \end{align*}
By denoting $\bb^\square$ as $\EE_{\thete_0}({u}(\bz)(\IF^\square(\bz)-\IF^\saa(\bz)))$, we can rewrite $\GG^\square$ as
\begin{align*}
    &\GG^\square\\=&\frac{1}{2}\rbr{N(0,\var_{\thete_0}(\IF^\square(\bz)))-\bb^\square}^\top \bV \rbr{N(0,\var_{\thete_0}(\IF^\square(\bz)))-\bb^\square}\\
    =&\frac{1}{2}\sbr{N(0,\var_{\thete_0}(\IF^\square(\bz)))^\top \bV N(0,\var_{\thete_0}(\IF^\square(\bz)))-2\rbr{\bb^\square}^\top \bV N(0,\var_{\thete_0}(\IF^\square(\bz)))+\rbr{\bb^\square}^\top \bV\bb^\square}.
\end{align*}
By taking the expectation, the cross term is zero. Hence,
\begin{align*}
    \EE\rbr{\GG^\square}=\frac{1}{2}\sbr{\EE\sbr{N(0,\var_{\thete_0}(\IF^\square(\bz)))^\top \bV N(0,\var_{\thete_0}(\IF^\square(\bz)))}+\rbr{\bb^\square}^\top \bV\bb^\square}.
\end{align*}
Since $\var_{\thete_0}(\IF^\eto(\bz))\leq\var_{\thete_0}(\IF^\ieo(\bz))\leq\var_{\thete_0}(\IF^\saa(\bz))$, we know the stochastic dominance of the SAA, IEO and ETO, and their corresponding expectation.
\begin{align*}
    &N(0,\var_{\thete_0}(\IF^\eto(\bz)))^\top \bV N(0,\var_{\thete_0}(\IF^\eto(\bz)))\\
    \preceq_{\st}&N(0,\var_{\thete_0}(\IF^\ieo(\bz)))^\top \bV N(0,\var_{\thete_0}(\IF^\ieo(\bz)))\\
    \preceq_{\st}&N(0,\var_{\thete_0}(\IF^\saa(\bz)))^\top \bV N(0,\var_{\thete_0}(\IF^\saa(\bz)))
\end{align*}
and
\begin{align*}
    &\EE\sbr{N(0,\var_{\thete_0}(\IF^\eto(\bz)))^\top \bV N(0,\var_{\thete_0}(\IF^\eto(\bz)))}\\
    \leq&\EE\sbr{N(0,\var_{\thete_0}(\IF^\ieo(\bz)))^\top \bV N(0,\var_{\thete_0}(\IF^\ieo(\bz)))}\\
    \leq&\EE\sbr{N(0,\var_{\thete_0}(\IF^\saa(\bz)))^\top \bV N(0,\var_{\thete_0}(\IF^\saa(\bz)))}.
\end{align*}
From pervious analysis, we already know
\begin{align*}
    \rbr{\bb^\eto}^\top \bV\bb^\eto\geq\rbr{\bb^\ieo}^\top \bV\bb^\ieo\geq\rbr{\bb^\saa}^\top \bV\bb^\saa.
\end{align*}
Therefore, $\EE(\GG^\square)$ consist of two terms. For the first term, ETO is less than IEO, and IEO is less than SAA. For the second term, the direction is flipped.
\end{proof}

Proposition \ref{prop:  well-specified} was essentially established by \citet{elmachtoub2023estimate}, but here, we express the asymptotic behaviors of solutions more explicitly in terms of influence functions.
Moreover, these more explicit expressions arise from a new projection interpretation of influence functions that allows us to describe the performances geometrically, providing another perspective different from \cite{elmachtoub2023estimate}. 

To this end, let $\bP$ be the projection matrix onto the column span of $\bSigma$ with respect to the norm $\nbr{\bx}_{\bV\inv}$, i.e., 
$$\bP\bx=\argmin_{\by: \by\in\textup{col}(\bSigma)}\nbr{\by-\bx}_{\bV\inv}^2,$$
which has a closed-form expression $\bP=\bSigma\rbr{\bSigma^\top\bV\inv\bSigma}\inv\bSigma^\top\bV\inv$. 
Second, define the functional $\cT:L_2(P_{\thete_0})^{\dw}\to L_2(P_{\thete_0})^{\dw}$ as the projection operator onto the linear function subspace $\{\bA \bs_{\thete_0}(\bz):\bA\in\RR^{\dw\times\dt}\}$, i.e., for a square integrable function $\bbf(\bz):\cZ\to\RR^{\dw}$,
$$    \cT\bbf=\argmin_{\bg:\bg=\bA \bs_{\thete_0}(\bz)}\int\nbr{\bbf(\bz)-\bg(\bz)}^2p_{\thete_0}(\bz)\dbz.$$

\begin{theorem}\label{prop: projection intepretation}
Under Assumptions \ref{assumption: smoothness}, \ref{assumption: M-estimation regularity} and \ref{assumption: interchangeability}, the influence functions of $\ieo$ and $\eto$ have the following projection interpretation.
    \begin{enumerate}  
            \item $\IF^\ieo(\bz)=-\bV^{-1} \bP\nabla_\omege c(\omege_{\thete_0},\bz)$, 
            \item $  \IF^\eto(\bz)=-\bV^{-1}\cT\nabla_\omege c(\omege_{\thete_0},\bz)$. 
    \end{enumerate}
\end{theorem}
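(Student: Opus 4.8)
The plan is to reduce both identities to the explicit influence-function formulas recorded before the statement, combined with the closed forms of $\bSigma$, $\bPhi$, and $\nabla_\thete\omege_\thete$ from Lemma~\ref{lemma: wtheta diff}. Both parts are ultimately linear algebra; the only genuinely new ingredient is to recognize the two relevant projection operators.

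For the IEO identity, first I would use the chain rule to rewrite $\nabla_\thete c(\omege_{\thete_0},\bz)$, the gradient of $\thete\mapsto c(\omege_\thete,\bz)$ at $\thete_0$. Since $\nabla_\thete\omege_\thete|_{\thete=\thete_0}=-\bSigma^\top\bV\inv$ by Lemma~\ref{lemma: wtheta diff}, the chain rule gives $\nabla_\thete c(\omege_{\thete_0},\bz)=-\bSigma^\top\bV\inv\nabla_\omege c(\omege_{\thete_0},\bz)$. Substituting this into $\IF^\ieo(\bz)=\bV\inv\bSigma\bPhi\inv\nabla_\thete c(\omege_{\thete_0},\bz)$ and using $\bPhi=\bSigma^\top\bV\inv\bSigma$ yields $\IF^\ieo(\bz)=-\bV\inv\bSigma(\bSigma^\top\bV\inv\bSigma)\inv\bSigma^\top\bV\inv\nabla_\omege c(\omege_{\thete_0},\bz)$. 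The matrix $\bSigma(\bSigma^\top\bV\inv\bSigma)\inv\bSigma^\top\bV\inv$ is exactly the closed form of the $\bV\inv$-projection $\bP$ onto $\textup{col}(\bSigma)$ stated just before the theorem, so $\IF^\ieo(\bz)=-\bV\inv\bP\nabla_\omege c(\omege_{\thete_0},\bz)$, as claimed.

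For the ETO identity, the main step is to derive a closed form for the operator $\cT$. For a square-integrable $\bbf:\cZ\to\RR^{\dw}$, the projection $\cT\bbf=\bA\bs_{\thete_0}$ is found by minimizing $\EE_{\thete_0}\nbr{\bbf(\bz)-\bA\bs_{\thete_0}(\bz)}^2$ over $\bA\in\RR^{\dw\times\dt}$. Setting the derivative in $\bA$ to zero gives the normal equations $\EE_{\thete_0}[\bbf(\bz)\bs_{\thete_0}(\bz)^\top]=\bA\,\EE_{\thete_0}[\bs_{\thete_0}(\bz)\bs_{\thete_0}(\bz)^\top]=\bA\bI$, and since $\bI$ is invertible (Assumption~\ref{assumption: smoothness}) we obtain $\cT\bbf=\EE_{\thete_0}[\bbf(\bz)\bs_{\thete_0}(\bz)^\top]\bI\inv\bs_{\thete_0}(\bz)$. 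Taking $\bbf=\nabla_\omege c(\omege_{\thete_0},\bz)$ and invoking $\bSigma=\EE_{\thete_0}[\nabla_\omege c(\omege_{\thete_0},\bz)\bs_{\thete_0}(\bz)^\top]$ from Lemma~\ref{lemma: wtheta diff}, this becomes $\cT\nabla_\omege c(\omege_{\thete_0},\bz)=\bSigma\bI\inv\bs_{\thete_0}(\bz)$, whence $-\bV\inv\cT\nabla_\omege c(\omege_{\thete_0},\bz)=-\bV\inv\bSigma\bI\inv\bs_{\thete_0}(\bz)=\IF^\eto(\bz)$.

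I do not expect a serious obstacle here: both parts are substitution plus recognition of a projection. The one point requiring care is bookkeeping with the transpose conventions — in particular matching the $\dt\times\dw$ shape of $\nabla_\thete\omege_\thete$ when applying the chain rule in the IEO part, and correctly deriving the matrix-argument least-squares normal equations in the ETO part, where one must also confirm the minimizer is unique, which follows from invertibility of $\bI$.
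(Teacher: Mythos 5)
Your proof is correct and takes essentially the same route as the paper: for IEO, the chain rule $\nabla_\thete c(\omege_{\thete_0},\bz)=-\bSigma^\top\bV\inv\nabla_\omege c(\omege_{\thete_0},\bz)$ together with $\bPhi=\bSigma^\top\bV\inv\bSigma$ exposes the $\bV\inv$-projection matrix, and for ETO, the matrix least-squares normal equations $\EE_{\thete_0}[\bbf\,\bs_{\thete_0}^\top]=\bA\bI$ give the closed form $\cT\bbf=\EE_{\thete_0}[\bbf\,\bs_{\thete_0}^\top]\bI\inv\bs_{\thete_0}$, which is exactly the paper's argument. Incidentally, your bookkeeping is slightly cleaner than the paper's: the paper's own proof drops the minus sign in its intermediate display $\IF^\eto(\bz)=\bV\inv\cT\nabla_\omege c(\omege_{\thete_0},\bz)$, whereas your final identity matches the (correct) theorem statement.
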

The above theorem points out that the influence functions of IEO and ETO are essentially projections of that of SAA, either in vector or function spaces, shedding light on the ordering of their variances by the contraction properties of projections.
\begin{proof}
    [Proof of Theorem \ref{prop: projection intepretation}] 

The fact that
\begin{align*}
   \IF^\ieo(\bz)&=-\bV^{-1} \bP\nabla_\omege c(\omege_{\thete_0},\bz)
\end{align*}
is because
    \begin{align*}
        \IF^\ieo(\bz)&=\bV^{-1}\bSigma\bPhi^{-1}\nabla_{\thete}c(\omege_{\thete_0},\bz)\\
        &=\bV^{-1}\bSigma(\nabla_\thete\omege_{\thete_0}\bV\nabla_\thete\omege_{\thete_0}^\top)^{-1}\nabla_{\thete}\omege_{\thete_0}\nabla_\omege c(\omege_{\thete_0},\bz)\\
        &=\bV^{-1}\bSigma(\bSigma^\top \bV^{-1}\bV \bV^{-1}\bSigma)^{-1}\rbr{-\bSigma^\top \bV^{-1}}\nabla_\omege c(\omege_{\thete_0},\bz)\\
        &=-\bV^{-1}\bP_{\bSigma, \bV}\nabla_\omege c(\omege_{\thete_0},\bz).
        \end{align*}
        We then show the relationship between $\IF^\eto(\bz)$ and $\IF^\saa(\bz)$. Let $\cT:L_2(p_{\thete_0})^{\dw}\to L_2(p_{\thete_0})^{\dt}$ be the projection matrix on the linear function subspace $\{\bA \bs_{\thete_0}(\bz):\bA\in\RR^{\dw\times\dt}\}$, i.e., for general function $\bbf(\bz):\cZ\to\RR^{\dt}$,
\begin{align*}
    \cT\bbf=\argmin_{\bg:\bg=\bA \bs_{\thete_0}(\bz)}\int\nbr{\bbf(\bz)-\bg(\bz)}^2p_{\thete_0}(\bz)\dbz.
\end{align*}
The influence function of $\eto$ is also a projection, i.e.,
\begin{align*}
    \IF^\eto(\bz)&=\bV^{-1}\cT\nabla_\omege c(\omege_{\thete_0},\bz).
\end{align*}
The reason is as follows. For ETO, it suffices to prove the following fact:
        \begin{align*}
            \cT\bbf=\EE_{\thete_0}(\bbf\bs_{\thete_0}^\top)\bI^{-1}\bs_{\thete_0}(\bz)
        \end{align*}
        since $\bSigma=\EE[\nabla_\omege c(\omege_{\thete_0},\bz)\bs_{\thete_0}(\bz)]$.
        To prove the fact, we need to show that $\bA^*=\EE_{\thete_0
        }(\bbf(\bz)\bs_{\thete_0}(\bz)^\top)\bI^{-1}$ is the minimizer of the optimization problem
        \begin{align*}
            \min_{\bA\in\RR^{\dw\times \dt}}\int\nbr{\bbf(\bz)-\bA\bs_{\thete_0}(\bz)}^2p_{\thete_0}(\bz)\dbz.
        \end{align*}
        Since this is essentially a quadratic optimization problem, the stationary point is the global minimum. Denote the objective function $h(\bA)$ and we require $\nabla_{\bA}h(\bA^*)=0$. In other words, for all $\Tilde{i},\Tilde{j}$, $\partial h(\bA)/\partial{A_{\Tilde{i},\Tilde{j}}}=0$. For simplicity, we write $p_{\thete_0}(\bz)$ as $p(\bz)$ and $\bs_{\thete_0}(\bz)$ as $s(\bz)$. Note that
        \begin{align*}
h(\bA)&=\int_{\bz\in\cZ}\sum_{i=1}^{\dw} (f_i(\bz)-\sum_{j=1}^{\dt}A_{ij}s_j(\bz))^2p(\bz)\dbz\\
&=\sum_{i=1}^{\dw} \int_{\bz\in\cZ}\sbr{f_i(\bz)^2+(\sum_{j=1}^{\dt}A_{ij}s_j(\bz))^2-2f_i(\bz)\sum_{j=1}^{\dt}A_{ij}s_j(\bz)}p(\bz)\dbz
        \end{align*}
        We have
        \begin{align*}
            \partial h(\bA^*)/\partial{A_{\Tilde{i},\Tilde{j}}}=\int_{\bz\in\cZ}\sbr{-2f_{\Tilde{i}}(\bz)s_{\Tilde{j}}(\bz)+\sbr{2\sum_{j=1}^{\dt}A_{\Tilde{i}j}s_j(\bz)}}p(\bz)\dbz=0.
        \end{align*}
        For all $\Tilde{i},\Tilde{j}$, we have
        \begin{align*}
            \int_{\bz\in\cZ}f_{\Tilde{i}}(\bz)s_{\Tilde{j}}(\bz)p(\bz)\dbz=\int_{\bz\in\cZ}\sum_{j=1}^{\dt}A_{\Tilde{i}j}s_j(\bz)s_{\Tilde{j}}(\bz)p(\bz)\dbz.
        \end{align*}
        Writing in a matrix form, the left hand side is $\EE_{\thete_0}(\bbf(\bz)s(\bz)^\top)$. The write hand side is $\EE_{\thete_0}[\bA^*s(\bz)s(\bz)^\top]=\bA^*\EE_{\thete_0}(s(\bz)s(\bz)^\top)=\bA^{*}\bI$. In conclusion, $\bA^*=\EE(\bbf(\bz)s(\bz)^\top)\bI^{-1}$ and $\cT\bbf=\EE(\bbf(\bz)\bs_{\thete_0}(\bz)^\top)\bI^{-1}\bs_{\thete_0}(\bz)$.
\end{proof}

\begin{proof}
        [Proof of Lemma \ref{lemma: wtheta diff}] 
The first identity follows from
        \begin{align*}
            \bSigma^\top&=\nabla_\thete\nabla_\omege v(\omege,\thete)|_{\omege=\omege_{\thete_0},\thete=\thete_0}\\&=\nabla_\thete\EE_{{\thete}}[\nabla_\omege c(\omege,\bz)]|_{\thete=\thete_0,\omege=\omege_{\thete_0}}\\
            &=\nabla_\thete\int \nabla_\omege c(\omege_{\thete_0},\bz)p_{\thete_0}(\bz)\dbz\\
            &=\int \nabla_\thete p_{\thete_0}(\bz)\nabla_\omege c(\omege_{\thete_0},\bz)^\top \dbz\\
            &=\int \rbr{\nabla_\thete\log p_{\thete_0}(\bz)}p_{\thete_0}(\bz)\rbr{\nabla_\omege c(\omege_{\thete_0},\bz)}^\top \dbz\\
            &=\EE_{\thete_0}[\bs_{\thete_0}(\bz)\rbr{\nabla_\omege c(\omege_{\thete_0},\bz)}^\top].
        \end{align*}
For the second identity, by implicit function theorem and applying \cite{barratt2018differentiability}, we can prove the first identity
        \begin{align*}
            \bzero&=\nabla_{\omege\omege}v(\omege,\thete_0)|_{\omege=\omege_{\thete_0}}\rbr{\nabla_\thete\omege_{\thete}|_{\thete=\thete_0}}^\top+\nabla_\omege\nabla_\thete v(\omege,\thete)|_{\omege=\omege_{\thete_0},\thete=\thete_0},\\
            \Rightarrow\nabla_\thete \omege_\thete|_{\thete=\thete_0}&=-\nabla_\thete\nabla_\omege v(\omege,\thete)|_{\omege=\omege_{\thete_0},\thete=\thete_0}\cdot\nabla_{\omege\omege}v(\omege,\thete_0)^{-1}|_{\omege=\omege_{\thete_0}},\\
            &=-\bSigma^\top \bV\inv.
        \end{align*}
    The third identity follows since
    \begin{align*}
        \bPhi&= \nabla_{\thete\thete}\EE_{\thete_0}\sbr{c(\omege_{\thete_0},\bz)}\\
        &=\nabla_{\thete}\rbr{\nabla_\omege \EE_{\thete_0}\sbr{c(\omege_{\thete},\bz)}\nabla_\thete\omege_\thete^\top}|_{\thete=\thete_0}\\
        &=\nabla_\thete\omege_\thete\nabla_{\omege\omege} \EE_{\thete_0}\sbr{c(\omege_{\thete_0},\bz)}\nabla_\thete\omege_\thete^\top\\
        &=\rbr{-\bSigma^\top\bV\inv}\bV\rbr{-\bSigma^\top\bV\inv}^\top\\
        &=\bSigma^\top\bV\inv\bSigma
    \end{align*}
    by noting that $\nabla_\omege\EE_{\thete_0}\sbr{c(\omege_{\thete_0},\bz)}=\bzero$ since $\omege_{\thete_0}$ is the minimizer of the function $\omege\to\EE_{\thete_0}\sbr{c(\omege,\bz)}$.
    \end{proof}

\begin{proof}[Proof of Lemma \ref{lemma: log likelihood ratio property in exp}]
    \begin{align*}
        \log\frac{dQ^n(\bz_1,...,\bz_n)}{dP^n(\bz_1,...,\bz_n)}=\log\prod_{i=1}^n\frac{\exp({u}(\bz_i)/\sqrt{n})}{C_{1/\sqrt{n}}}=\frac{1}{\sqrt{n}}\sum_{i=1}^n{u}(\bz_i)-n\log C_{1/\sqrt{n}}.
    \end{align*}
    It now suffices to show that $n\log C_{1/\sqrt{n}}=\frac{1}{2}\EE_{\thete_0}[{u}^2]+o_{P^n}(1)$.
    From the definition of $C_{t}$, we know
    $$C_{t}=\int\exp(t{u}(\bz))dP_{\thete_0}(\bz).$$
    Taking the derivative, we have $$\rbr{C_t}'|_{t=0}=\int\exp(t{u}(\bz)){u}(\bz)dP_{\thete_0}(\bz)|_{t=0}=\EE_{\thete_0}({u}(\bz))=0.$$
Taking the second order derivative, we have $$\rbr{C_t}''|_{t=0}=\int\exp(t{u}(\bz)){u}(\bz){u}(\bz) dP_{\thete_0}(\bz)|_{t=0}=\EE_{\thete_0}({u}^2).$$
By Talor expansion, we have
    $$C_{t}=1+\frac{1}{2}\EE_{\thete_0}[{u}^2]t^2+o(t^2)$$
In conclusion,
    \begin{align*}
        n\log C_{1/\sqrt{n}}&=n\log\rbr{1+\frac{1}{2}\frac{1}{\sqrt{n}}\EE_{\thete_0}[{u}^2]\frac{1}{\sqrt{n}}+o\rbr{\frac{1}{n}}}
        &=n\rbr{\frac{1}{2}\frac{1}{\sqrt{n}}\EE_{\thete_0}[{u}^2]\frac{1}{\sqrt{n}}+o\rbr{\frac{1}{n}}}\\
        &=\frac{1}{2}\EE_{\thete_0}[{u}^2]+o(1).
    \end{align*}
\end{proof}

\end{document}